\newcommand{\cmark}{\ding{51}}
\newcommand{\xmark}{\ding{55}}
\definecolor{openended}{RGB}{230, 240, 255}  % 浅蓝色
\definecolor{closeended}{RGB}{255, 245, 230}  % 浅橙色
\definecolor{darkblue}{HTML}{000099}
\newtheorem{assumption}{Assumption}
\newtheorem{corollary}{Corollary}
\newtheorem{lemma}{Lemma}
\newtheorem{theorem}{Theorem}
\title{DermoGPT: Open Weights and Open Data for Morphology-Grounded Dermatological Reasoning MLLMs}
\author{
\textbf{Jinghan Ru}\textsuperscript{1}\thanks{\,Equal Contribution.} \quad
\textbf{Siyuan Yan}\textsuperscript{2}\footnotemark[1]\thanks{\,Project Leader.} \quad
\textbf{Yuguo Yin}\textsuperscript{1} \quad
\textbf{Yuexian Zou}\textsuperscript{1}\footnotemark[3] \quad
\textbf{Zongyuan Ge}\textsuperscript{2}\thanks{\,Corresponding Authors: zongyuan.ge@monash.edu, zouyx@pku.edu.cn.}
\\[2mm]
\textsuperscript{1}School of Electronic and Computer Engineering, Peking University \\
\textsuperscript{2}Faculty of Information Technology, Monash University, Melbourne, Australia \\
% \texttt{\textcolor{red}{Project Page:} \href{https://dermogpt.github.io}{\textcolor{blue}{\texttt{https://dermogpt.github.io}}}}
% \texttt{\{rujinghan, yyg, zouyx\}@pku.edu.cn} \\
% \texttt{siyuan.yan@monash.edu \quad zongyuan.ge@monash.edu}
}
\begin{document}
\addtocontents{toc}{\protect\setcounter{tocdepth}{-1}} % 隐藏正文目录项
\maketitle

% \begin{strip}
%   \centering
%   \vspace{-2.2cm}

%   \includegraphics[width=0.85\linewidth]{}

%   \vspace{-0.2cm}

%   \captionof{figure}{Overall architecture and statistics of DermoInstruct and DermoBench. DermoBench contains 11 subtasks spanning four axes: \textbf{Morphology} (Task~1.1 Detailed Description; Task~1.2 Morph-grounded Description; Task~1.3 Dermoscopic Attribute MCQA; Task~1.4 Clinical Attribute MCQA), \textbf{Diagnosis} (Task~2.1 4-option ID MCQA; Task~2.2 25-option ID MCQA; Task~2.3 hierarchical diagnosis; Task~2.4 4-option OOD MCQA), \textbf{Reasoning} (Task~3.1 CoT reasoning; Task~3.2 Morph-grounded Reasoning), and \textbf{Fairness} (Task~4). Numbers in the upper-right corner of each block report the number of VQA pairs, where \textbf{I} denotes DermoInstruct and \textbf{B} denotes DermoBench.}

%   \label{fig:dermobench_fig1}
% \end{strip}

\begin{abstract}
Multimodal Large Language Models (MLLMs) show promise for medical applications, yet progress in dermatology lags due to limited training data, narrow task coverage, and lack of clinically-grounded supervision that mirrors expert diagnostic workflows. We present a comprehensive framework to address these gaps. First, we introduce \textbf{DermoInstruct}, a large-scale morphology-anchored instruction corpus comprising 211,243 images and 772,675 trajectories across five task formats, capturing the complete diagnostic pipeline from morphological observation and clinical reasoning to final diagnosis. Second, we establish \textbf{DermoBench}, a rigorous benchmark evaluating 11 tasks across four clinical axes: \textit{Morphology}, \textit{Diagnosis}, \textit{Reasoning}, and \textit{Fairness}, including a challenging subset of 3,600 expert-verified open-ended instances and human performance baselines. Third, we develop \textbf{DermoGPT}, a dermatology reasoning MLLM trained via supervised fine-tuning followed by our Morphologically-Anchored Visual-Inference-Consistent (MAVIC) reinforcement learning objective, which enforces consistency between visual observations and diagnostic conclusions. At inference, we deploy Confidence-Consistency Test-time adaptation (CCT) for robust predictions. Experiments show DermoGPT significantly outperforms 16 representative baselines across all axes, achieving state-of-the-art performance while substantially narrowing the human-AI gap. \textit{DermoInstruct, DermoBench and DermoGPT will be made publicly available at \href{https://github.com/mendicant04/DermoGPT}{\textcolor{blue}{https://github.com/mendicant04/DermoGPT}} upon acceptance.}
\end{abstract}

\section{Introduction}
Skin diseases impose a substantial global burden, yet specialist access remains limited~\citep{hay2014global}. Dermatological diagnosis requires differentiating hundreds of fine-grained conditions across modalities via systematic clinical reasoning\citep{morphology1}. While Multimodal Large Language Models (MLLMs) show promise\citep{gemini,qwen3vl}, existing medical MLLMs~\citep{huatuogpt-v,skingpt,skinr1} struggle with dermatology's specialized requirements due to limited training data, narrow task scopes, and lack of interpretable reasoning mechanisms aligned with clinical practice.

As summarized in Table~\ref{tab:dermo_benchmark_comparison}, current resources exhibit three systemic limitations hindering clinical viability. \textbf{First, insufficient scale and diversity}: Existing resources like DermaSynth~\citep{dermasynth} and MM-Skin \citep{mmskin} typically cover only 2--3 tasks with limited samples. This scarcity fails to capture the long-tail visual complexity of the hundreds of conditions, severely limiting generalization. \textbf{Second, limited task formulations}: Existing instruction data and benchmarks predominantly rely on close-ended Multiple-Choice Question Answering (MCQAs)~\citep{dermavqa}, inadequate for evaluating open-ended generation and multi-step reasoning required in clinical consultations. \textbf{Third, ungrounded clinical reasoning}: Unlike end-to-end models \citep{panderm,make} that map pixels directly to labels, expert dermatologists adhere to a ``morphology-first'' paradigm, parsing lesion morphology attributes to construct reasoning chains before diagnosis~\citep{morphology1,morphology2}. Current datasets lack supervision for this \textit{morphology $\to$ reasoning $\to$ diagnosis} trajectory, yielding ungrounded systems prone to hallucinations inconsistent with visual evidence.

\begin{table*}[ht]
\footnotesize
  \centering
  \resizebox{0.95\textwidth}{!}{%
    \begin{tabular}{l|cc|rrr|cccc}
      \toprule
      \multirow{2}{*}{\textbf{Dataset / Benchmark}} &
      \multicolumn{2}{c|}{\textbf{Type}} &
      \multicolumn{3}{c|}{\textbf{Scale}} &
      \multicolumn{4}{c}{\textbf{Features}} \\
      \cmidrule(lr){2-3} \cmidrule(lr){4-6} \cmidrule(lr){7-10}
      &
      \textbf{Bench.} &
      \textbf{Train} &
      \textbf{\#Tasks} &
      \textbf{\#Images} &
      \textbf{\#VQA Pairs} &
      \textbf{Multi-modal} &
      \textbf{Morph. CoT} &
      \textbf{CoT} &
      \textbf{Fairness} \\
      \midrule
      SkinCon \citep{skincon}       & \xmark & \cmark & 2 & 3,886   & \textendash       & \xmark & \xmark & \xmark & \xmark \\
      SkinCap \citep{skincap}       & \xmark & \cmark & 1 & 4,000   & \textendash       & \xmark & \xmark & \xmark & \xmark \\
      SkinCaRe \citep{skincare}     & \xmark & \cmark & 2 & 7,041   & 7,041     & \xmark & \xmark & \cmark & \xmark\\
      DermaSynth \citep{dermasynth} & \xmark & \cmark & 2 & 45,205  & 92,020    & \cmark & \xmark & \cmark & \xmark \\
      MM-Skin \citep{mmskin}        & \xmark & \cmark & 3 & 11,039  & 27,412    & \cmark & \xmark & \xmark & \xmark \\
      DermaVQA \citep{dermavqa}     & \cmark & \cmark & 1 & 3,434   & 1,488     & \cmark & \xmark & \xmark & \xmark \\
      DermBench \citep{dermbench}   & \cmark & \xmark & 1 & 4,000   & 4,500     & \cmark & \xmark & \cmark & \xmark \\
      \midrule
      \rowcolor{gray!15}
      \textbf{DermoInstruct (Ours)} & \xmark & \cmark & \textbf{4} & \textbf{211,243} & \textbf{772,675}   & \cmark & \cmark & \cmark & \xmark \\
      \rowcolor{gray!15}
      \textbf{DermoBench (Ours)}    & \cmark & \xmark & \textbf{11} & \textbf{12,371}  & \textbf{33,999}    & \cmark & \cmark & \cmark & \cmark \\
      \bottomrule
    \end{tabular}%
  }
  \vspace{-2mm}
  \caption{Comparison of instruction datasets and benchmarks for dermatology MLLMs. Our datasets significantly expand task diversity and introduce morphology-grounded chain-of-thought reasoning (\textbf{Morph. CoT}) and fairness evaluation, addressing key gaps in existing resources.}
  \label{tab:dermo_benchmark_comparison}
  \vspace{-5mm}
\end{table*}

To address these gaps, we propose a holistic framework centered on morphology-grounded reasoning. We first introduce \textbf{DermoInstruct}, a large-scale morphology-anchored instruction corpus unifying 14 heterogeneous public datasets under a shared diagnostic ontology with 9 superclasses and 325 fine-grained subclasses. The dataset contains 211,243 images and 772,675 instruction trajectories spanning 5 task formats: \textit{free-text morphological description, structured attribute generation, clinically grounded Chain-of-Thought reasoning, flat diagnosis,} and \textit{multi-turn hierarchical diagnosis}. This structured diversity ensures the model learns the complete diagnostic trajectory from lesion observation to morphology extraction to diagnostic reasoning, rather than mere label prediction. We also establish \textbf{DermoBench}, a comprehensive evaluation suite with 11 tasks across 4 clinical axes: \textit{Morphology}, \textit{Diagnosis}, \textit{Reasoning}, and \textit{Fairness} (Figure~\ref{fig:dermobench_fig1} and Table~\ref{tab:dermobench_tasks}). For rigorous evaluation, we constructed 3,600 open-ended instances from a 900-case core image set with line-by-line specialist revision to guarantee morphological fidelity and reasoning validity, providing ``Gold Standard" ground truth. We also benchmarked expert dermatologist performance as a clinical ceiling, enabling precise quantification of the Human-AI gap.

% Building on these resources, we develop \textbf{DermoGPT} utilizing a novel Morphologically-Anchored Visual-Inference-Consistent (MAVIC) reward. Through Reinforcement Learning (GRPO~\citep{deepseekmath}), DermoGPT is explicitly trained to maintain consistency between visual morphology observations and diagnostic conclusions, mirroring expert workflows. At inference, we utilize a Confidence-Consistency Test-time adaptation (CCT) scheme to enhance reliability. Experiments demonstrate that DermoGPT significantly outperforms 16 representative MLLMs, setting a new state-of-the-art.

Building on these resources, we develop \textbf{DermoGPT}, a dermatology-specialized MLLM initialized from Qwen3-VL-8B. The training proceeds through two phases. First, Supervised Fine-Tuning (SFT) on DermoInstruct establishes foundational diagnostic capabilities. Second, a novel Morphologically-Anchored Visual-Inference-Consistent (MAVIC) reward aligns the model with clinical reasoning trajectories. MAVIC utilizes Group Relative Policy Optimization (GRPO) \cite{deepseekmath} to penalize logical disconnects between generated visual morphology descriptions and diagnostic conclusions, enforcing the ``morphology-first'' reasoning trajectory. At inference, a Confidence-Consistency Test-time adaptation (CCT) scheme aggregates predictions to improve generalization. DermoGPT significantly outperforms 16 baselines across all 11 tasks, particularly in morphology understanding and reasoning consistency, narrowing the Human-AI gap.

Our contributions are three-fold: (1) \textbf{DermoBench Benchmark}: The first unified suite evaluating the full clinical pipeline beyond MCQAs for dermatology. Validated against an expert-verified core set and human baselines, it exposes systemic reliability gaps in current MLLMs. (2) \textbf{DermoInstruct Dataset}: The largest ontology-aware corpus unifying 14 sources into structured multi-task trajectories, providing the essential supervision for versatile, clinically-aligned reasoning. (3) \textbf{DermoGPT}: The first clinical-aligned reasoning MLLM in dermatology utilizing the MAVIC and CCT. This approach yields substantial improvements, significantly narrowing the human-AI gap in both diagnostic accuracy and reasoning.

\begin{figure*}[t]
  \centering
  \includegraphics[width=0.95\linewidth]{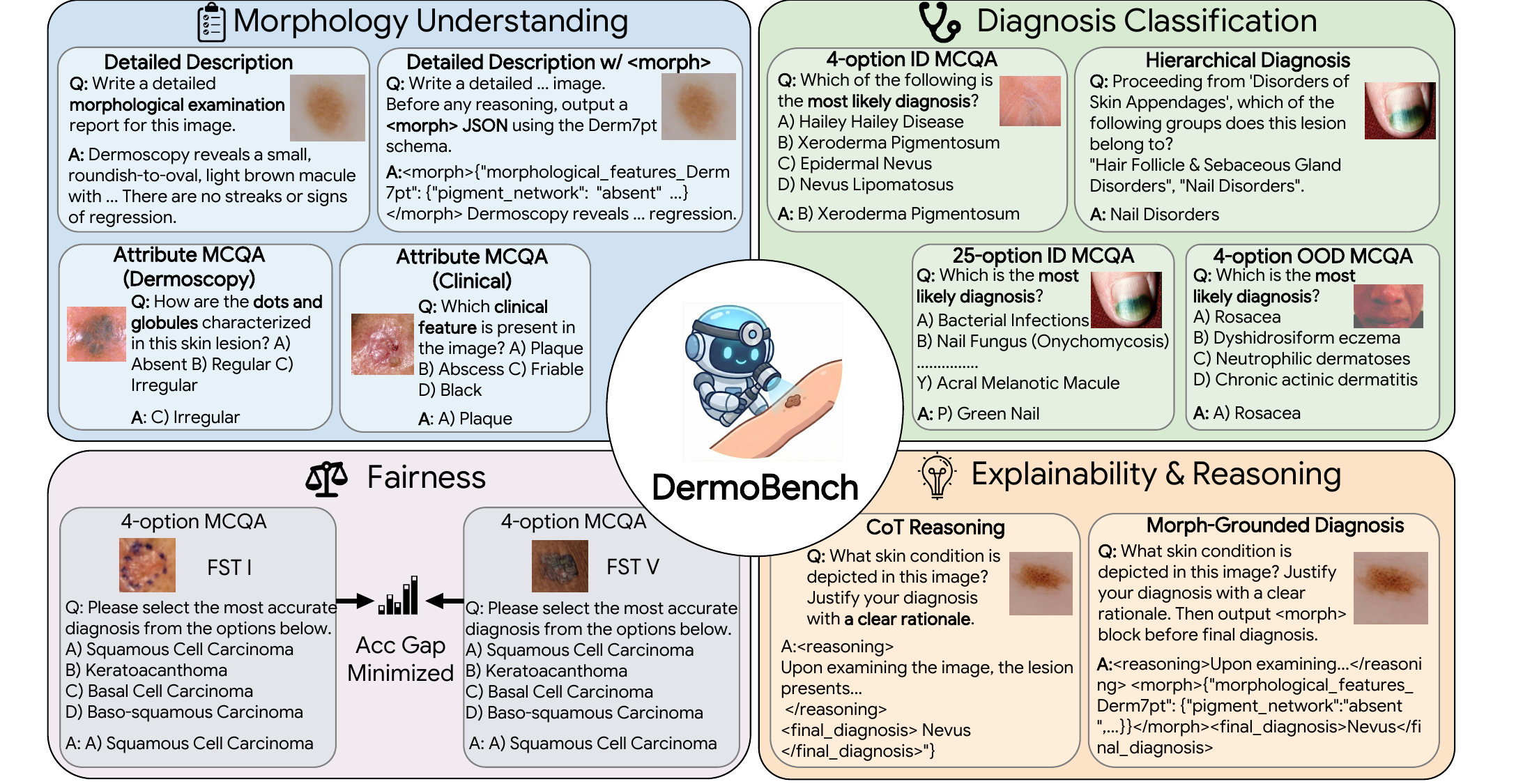}
  \vspace{-0.2cm}
  \caption{Overall architecture of DermoBench. DermoBench contains 11 subtasks spanning four axes: \textbf{Morphology} (Task~1.1 Detailed Description; Task~1.2 Morph-grounded Description; Task~1.3 Dermoscopic Attribute MCQA; Task~1.4 Clinical Attribute MCQA), \textbf{Diagnosis} (Task~2.1 4-option ID MCQA; Task~2.2 25-option ID MCQA; Task~2.3 hierarchical diagnosis; Task~2.4 4-option OOD MCQA), \textbf{Reasoning} (Task~3.1 CoT reasoning; Task~3.2 Morph-grounded Reasoning), and \textbf{Fairness} (Task~4). Note that the same set of images is used across all open-ended tasks (Tasks 1.1, 1.2, 3.1, and 3.2).}
  \label{fig:dermobench_fig1}
  \vspace{-0.3cm}
\end{figure*}
\section{Related Work}
\label{sec:related_work}

\noindent \textbf{Dermatology MLLMs and Reasoning.}
The landscape of dermatology AI has evolved from closed-set classification \citep{alsuwaidan2023deep, panderm} to open-ended multimodal reasoning. Early works relied on discrimitive model with limited label spaces \citep{derm1m, derm7pt}. Recently, specialized MLLMs such as SkinGPT-4 \citep{skingpt}, SkinGPT-R1 \citep{skingptr1}, and Skin-R1 \citep{skinr1} have adapted general foundation models to dermatology via instruction tuning. While these models demonstrate improved dialogue capabilities, they typically treat diagnostic reasoning as a latent, black-box process. Unlike our \textbf{DermoGPT}, which enforces an explicit \textit{Morphology $\rightarrow$ Reasoning $\rightarrow$ Diagnosis} workflow via concept bottleneck, existing approaches lack fine-grained grounding, often leading to hallucinations where visual evidence contradicts diagnostic conclusions.

\vspace{2pt}
\noindent\textbf{Dermatology Training Data and Benchmarks.}
The paradigm of dermatology AI has shifted from standard classification to large-scale vision-language alignment, exemplified by Derm1M~\citep{derm1m} and subsequent instruction-tuned MLLMs~\citep{skingpt, skinr1}. However, current approaches rely on small-scale instruction data with limited task diversity. Furthermore, evaluation remains underdeveloped—while DermBench~\citep{dermbench} assesses diagnostic narratives, it lacks rigorous workflow verification. To address these gaps, we introduce DermoInstruct, an expert-curated dataset with 772K morphology-grounded instruction pairs, and DermoBench, a multi-axis testbed that evaluates the full clinical workflow from morphology and diagnosis to OOD robustness and fairness.
\begin{figure*}[t]
  \centering
  \resizebox{0.95\linewidth}{!}{
    \subcaptionbox{}{\includegraphics[height=4.7cm]{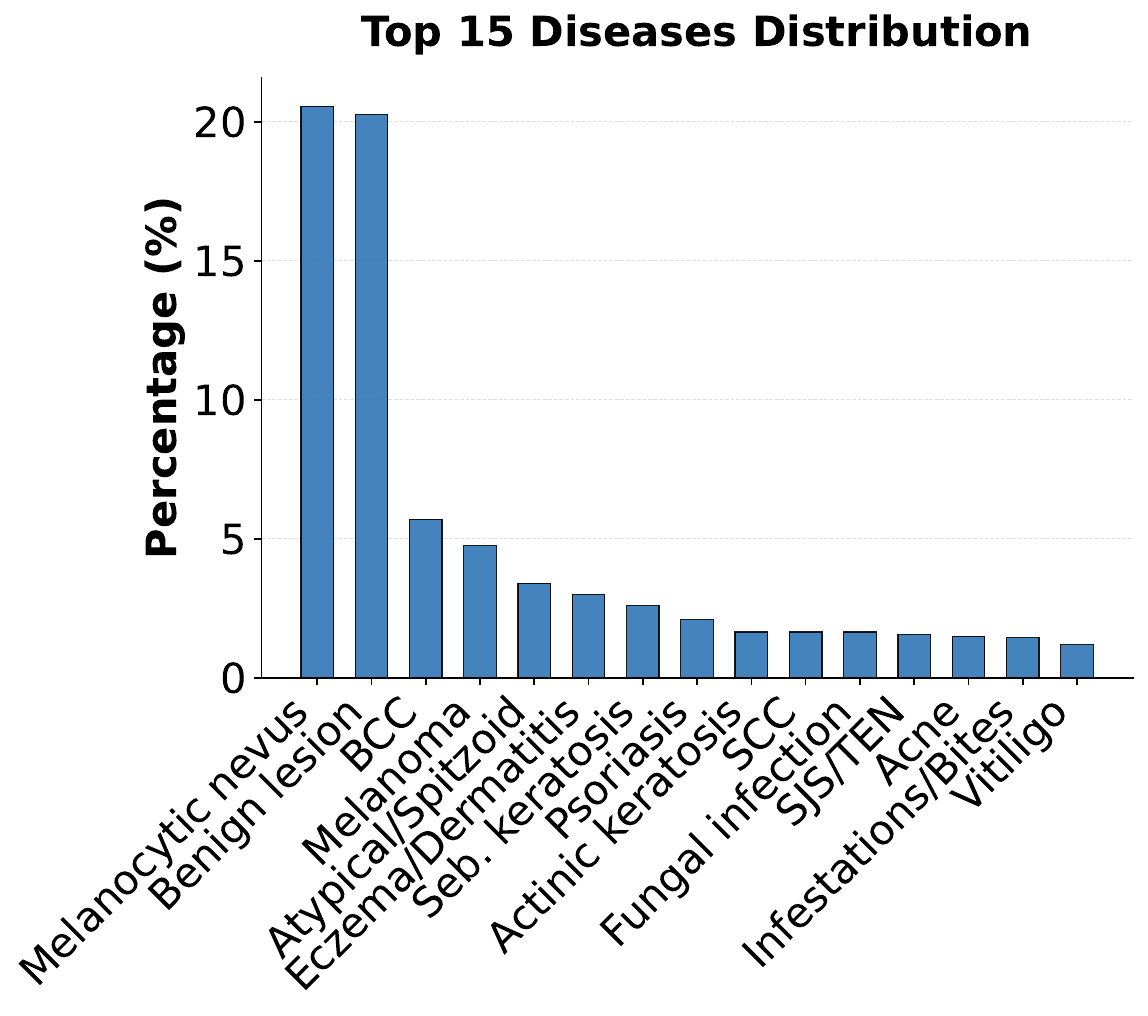}}
    \hfill
    \subcaptionbox{}{\includegraphics[height=4.7cm]{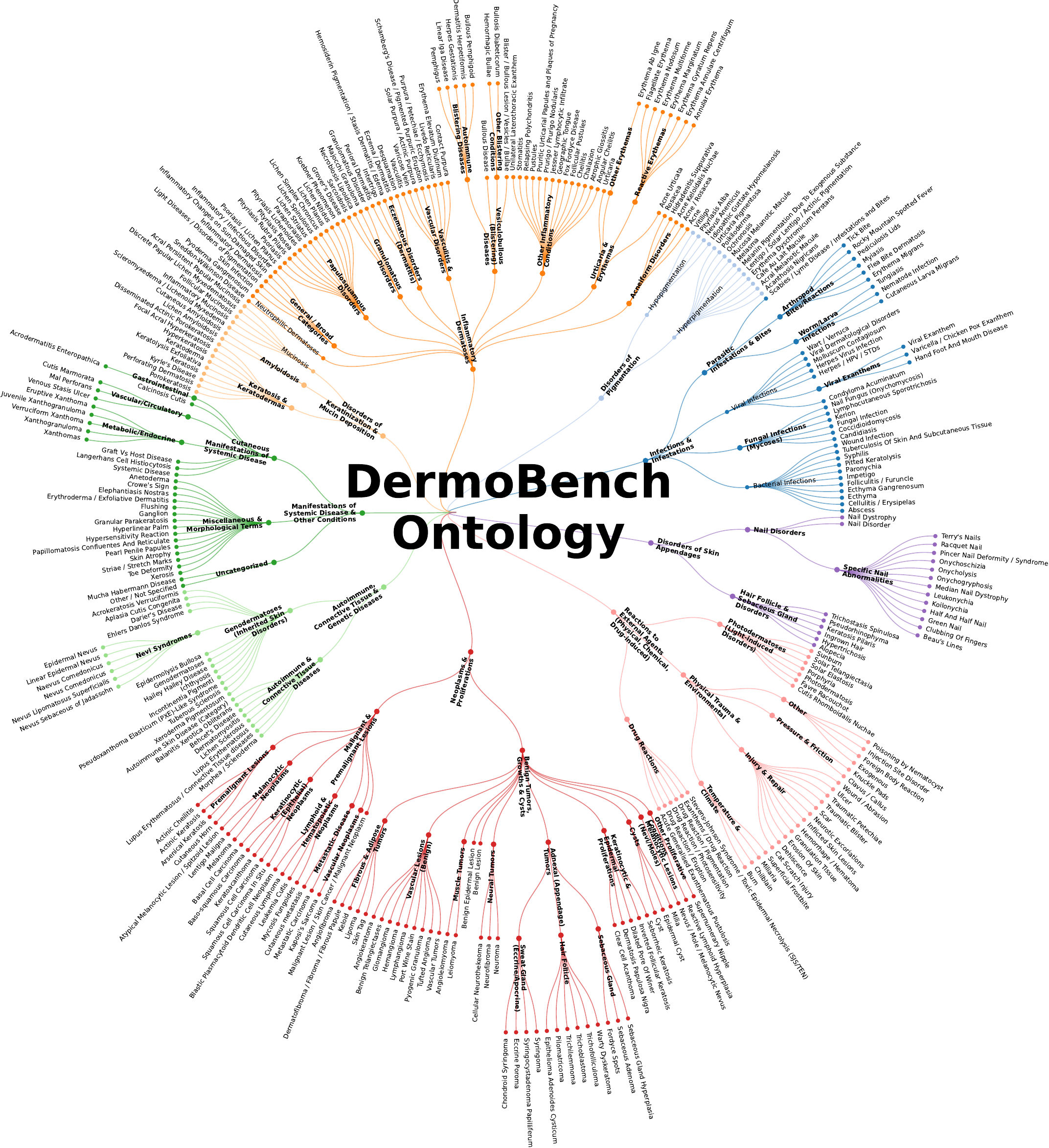}}
    \hfill
    \subcaptionbox{}{\includegraphics[height=4.7cm]{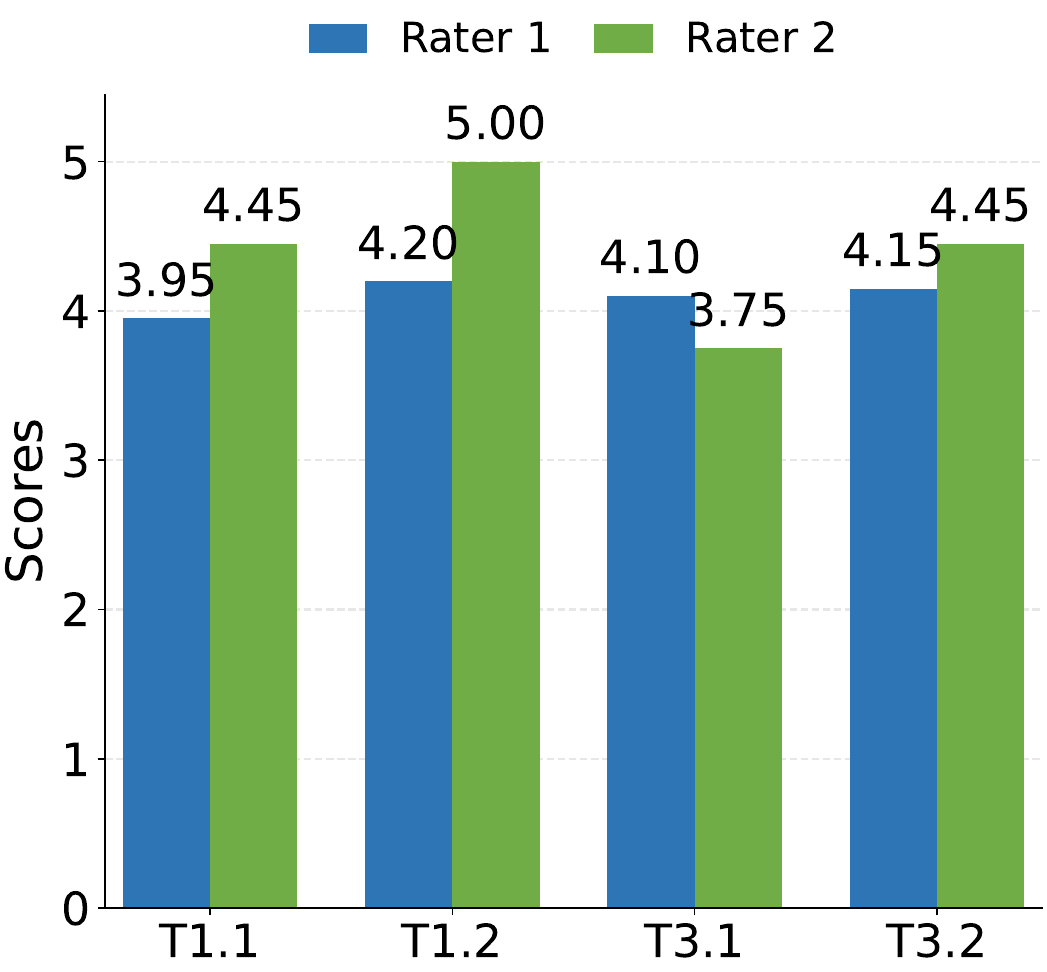}}
  }
  \vspace{-0.2cm}
  \caption{Overview of DermoBench. (a) Distribution of the top 15 diseases. (b) A unified ontology organizes 325 fine-grained diagnoses in DermoBench and DermoInstruct into 9 top-level super-classes. Zoom in for details. (c) Human ratings of LLM-as-a-Judge quality. 0 stands for ``strongly disagree'', and 5 represents ``strongly agree''}
  \label{fig:2}
  \vspace{-0.4cm}
\end{figure*}
\section{DermoInstruct}
To address the scarcity of clinically grounded training resources, we introduce DermoInstruct. Unlike prior works, this corpus is constructed to operationalize the \textbf{``morphology-first''} diagnostic workflow, providing high-quality supervision aligned with a unified ontology.
\subsection{DermoInstruct Curation}
\label{sec:curation}
The construction pipeline employs a four-step strategy to ensure both data scale and clinical rigor.

\noindent\textbf{(1) Aggregation \& Rigorous Cleaning:} We aggregated 14 public datasets spanning clinical and dermoscopic modalities. To strictly prevent data leakage, we implemented a \textit{patient-level split}. We further applied perceptual hashing (pHash, Hamming distance $\le 2$) to remove near-duplicate images, resulting in 211,243 distinct, high-quality images (see Appendix~\ref{sec:appendix-related} for source details).

\noindent\textbf{(2) Ontology Induction:} Addressing the label fragmentation issue across heterogeneous sources, we employed GPT-5 to normalize 903 raw diagnostic strings into canonical clusters. These clusters were rigorously reviewed by two dermatologists to merge synonyms and resolve ambiguities, yielding a unified ontology of 9 superclasses and 325 fine-grained subclasses (Figure~\ref{fig:2}b; zoom in).

\noindent\textbf{(3) Morphology-grounded Reasoning Synthesis:} To transcend the limitations of naive CoT, we implemented a Clinically-Aligned Reasoning Synthesis pipeline that mirrors the expert diagnostic workflow: \textit{Observation $\to$ Abstraction $\to$ Deduction}. We prompted Gemini-2.5-Flash~\citep{gemini} via a strict dependency-aware protocol (detailed prompts could be found in Appendix~\ref{app:prompts}):
(i) \textbf{Morphological Inspection:} First, generate detailed descriptions of salient lesion structures (e.g., borders, symmetry) to simulate visual examination.
(ii) \textbf{Schema-Based Anchoring:} Explicitly map these visual findings to standardized medical terminologies (seven-point checklist \citep{derm7pt} for dermoscopy, general dermatology guidlines \citep{skincon} for clinical images). This acts as a ``concept bottleneck,'' \cite{cbm}, anchoring pixel data to verifiable medical facts.
(iii) \textbf{Evidence-Informed Diagnosis:} Finally, synthesize a reasoning chain that is rigorously conditioned on these extracted attributes. This enforces a reasoning trajectory where the model must justify the diagnosis via morphological evidences (e.g., ``presence of atypical network implies higher risk of melanoma''), ensuring the reasoning is transparent, interpretable, and clinically coherent.

\noindent\textbf{(4) Diagnosis VQA Construction:} Complementing the open-ended reasoning, we leveraged the unified ontology to synthesize structured decision-making tasks that test the model's diagnostic precision. For \textit{Flat MCQAs}, we enforced clinical hardness by sampling distractors exclusively from sibling nodes or nearest neighbors (i.e., clinical mimics), demanding fine-grained discrimination beyond random guessing. For \textit{Hierarchical Instructions}, we modeled diagnosis as a sequential root-to-leaf traversal with an adaptive correction mechanism: if the reasoning trajectory deviates, corrective prompts inject expert guidance to realign the diagnostic path, simulating the interactive pedagogy of medical training.

\subsection{DermoInstruct Data Analysis}
\label{sec:data_analysis}
The final corpus comprises 211,243 multimodal images and 772,675 instructions (646k used for training after holding out DermoBench evaluation splits; see Appendix~\ref{app:646k}). As illustrated in Figure~\ref{fig:2}, the dataset features a realistic long-tail disease distribution (Fig.~\ref{fig:2}a) organized under our unified ontology of 9 superclasses and 325 subclasses (Fig.~\ref{fig:2}b). The instruction data across 4 major task dimensions spans 5 formats forming a complete diagnostic loop: (1) \textit{Free-text morphological description}; (2) \textit{Structured attribute generation} (for concept bottleneck training); (3) \textit{Clinically grounded CoT reasoning}; (4) \textit{Flat diagnosis}; and (5) \textit{Multi-turn hierarchical diagnosis}. This structured diversity ensures the model learns to look, reason, and deduce, rather than just memorize labels.

\begin{figure*}[ht]
  \centering
  \includegraphics[width=0.9\linewidth]{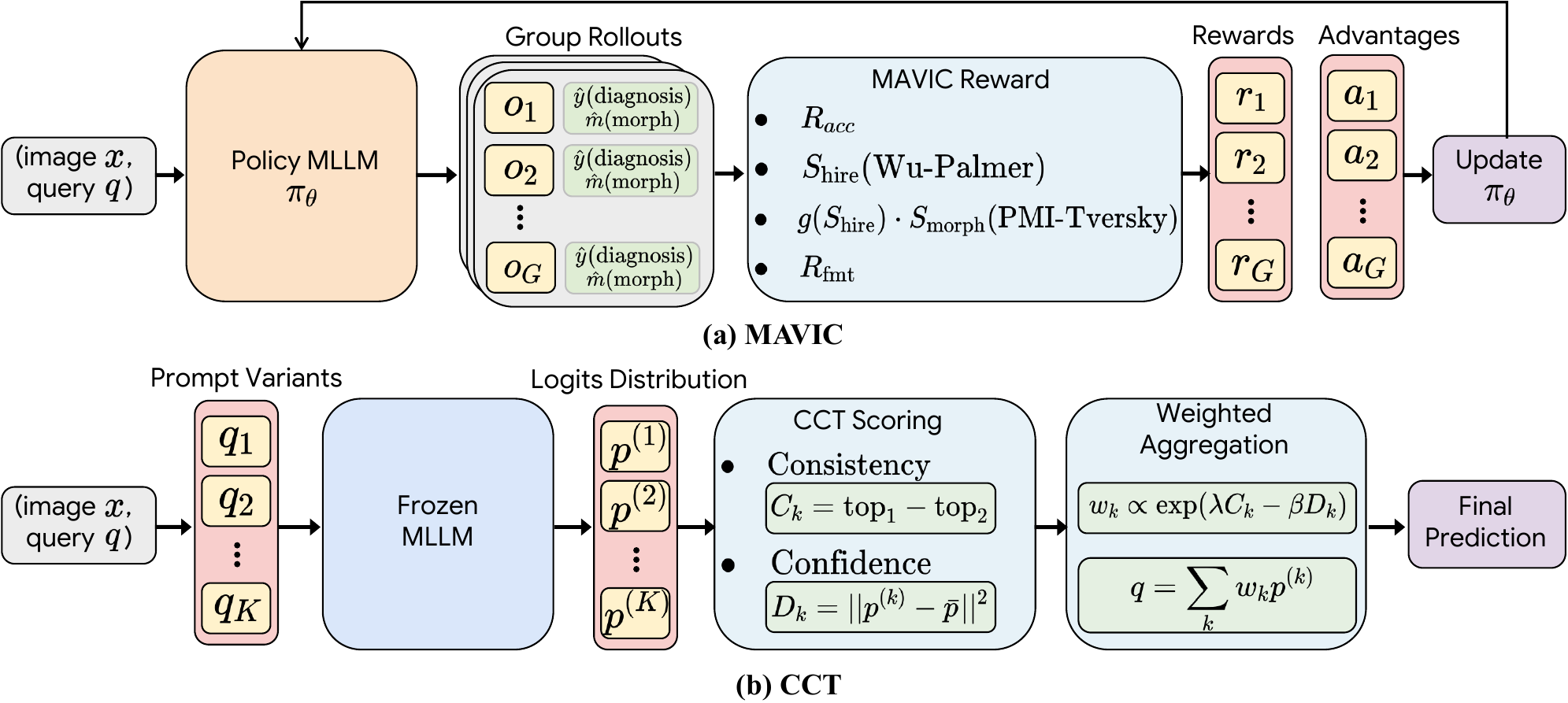}
  \caption{Method overview of MAVIC and CCT. (a) MAVIC integrates diagnosis accuracy, taxonomy-level similarity, gated morphology agreement, and format validity into a GRPO-style group reward to enforce morphology-first alignment. (b) CCT is a decoding-only test-time aggregation that reweights prompt-variant distributions by confidence and cross-variant consistency, requiring no parameter updates.}
  \label{fig:vis}
  \vspace{-3mm}
\end{figure*}
\section{DermoBench}

\subsection{Benchmark Construction} 
We construct DermoBench, a comprehensive evaluation suite comprising 33,999 VQA pairs spanning 11 subtasks across 4 dimensions: \textit{Morphology}, \textit{Diagnosis}, \textit{Reasoning}, and \textit{Fairness} (Table~\ref{tab:dermobench_tasks} and Appendix Figure~\ref{fig:3-move-supp}). The benchmark consists of 3,600 open-ended instances from a 900-case core image set (enabling cross-task consistency evaluation across T1.1, T1.2, T3.1, T3.2) and 30,399 closed-ended MCQAs. Each open-ended sample underwent strict line-by-line dermatologist revision to serve as gold-standard references. Two reasoning tasks (T1.2, T3.2) require structured morphological evidence before diagnosis to prevent ungrounded predictions. Independent sanity checks by two dermatologists confirmed high annotation quality with mean scores of 3.88–4.60 in a 5-scale score across tasks (Appendix Figure~\ref{fig:3-move-supp}b). The closed-ended component comprises 12,533 diagnoses, 654 fairness and 17,212 attribute-related MCQAs across 7 subtasks, including Out-of-Distribution tasks (T2.4). All images are isolated from training data to prevent leakage. Please refer to Appendix~\ref{app:dermobench_tasks} for details about task definitions and data sources.

\subsection{Evaluation Metrics} We adopt distinct metrics \citep{dentalgpt,oralgptomni} tailored to the nature of each subtask. For closed-ended questions, we use standard accuracy. For open-ended tasks, we employ an LLM-as-a-Judge protocol (using Gemini-2.5-Pro), which compares model outputs against human-curated references to generate mean fidelity scores. Judge consistency was validated through model substitution experiments and human sanity checks. Crucially, to quantify the real-world utility gap, we invited board-certified dermatologists to complete all tasks. Their performance serves as the clinical ceiling, allowing us to precisely measure where MLLMs fall short compared to human experts. Complete LLM-as-a-judge protocol are in Appendix \ref{app:judge_prompts}.

\section{DermoGPT}
\label{sec:method}

We aim to develop models that follow the dermatological reasoning chain \emph{morphology $\rightarrow$ reasoning $\rightarrow$ diagnosis} with explicitly verifiable intermediate steps. We propose MAVIC (Morphologically-Anchored Visual-Inference-Consistent) reward, an end-to-end computable reward function requiring no external judge, achieving morphology-first alignment during RL training. We further introduce CCT (Confidence--Consistency Test-time adaptation), a plug-and-play decoding strategy that enhances OOD generalization without fine-tuning. Hyperparameters and implementation details are documented in Appendix~\ref{app:train_details}.

\subsection{MAVIC: Morphologically-Anchored Visual-Inference-Consistent Reward}
\label{sec:mavic}

We begin with multi-task supervised fine-tuning (SFT) on DermoInstruct using Qwen3-VL-8B-Instruct~\citep{qwen3vl}. We optimize cross-entropy loss for 1 epoch with LoRA (rank 64, $\alpha=64$, dropout 0.05) while freezing the LLM and training the vision tower and projector, obtaining DermoGPT-SFT. To enable automatic verification of morphological evidence, we adopt a concept bottleneck framework~\citep{cbm} that compels the model to output structured morphological features following the ``seven-point checklist''~\cite{derm7pt} and ``general dermatology guideline''~\cite{skincon} schema. This structured output enables direct computation of morphology-level rewards without external judges—a key departure from prior approaches that rely on costly LLM-as-a-judge pipelines.

However, RL training for open-ended morphology descriptions faces a critical challenge: lack of directly verifiable reward signals. Diagnosis-only rewards are sparse and encourage shortcut learning that bypasses morphological evidence. To address this, we design MAVIC reward with the following components. Given an image and instruction, we sample $G$ completions from the current policy following GRPO~\citep{deepseekmath} and compute the following reward components for each rollout:

\begin{table}[t]
\centering
\resizebox{\linewidth}{!}{
\begin{tabular}{l l l r}
\toprule
\textbf{Axis} & \textbf{Task} & \textbf{Type} & \textbf{\#Pairs} \\
\midrule
Morphology &
T1.1 Detailed Description & Open-ended & 900 \\
& T1.2 Morph-grounded Description & Open-ended & 900 \\
& T1.3 Dermoscopic attribute MCQA & MCQA & 5,530 \\
& T1.4 Clinical attribute MCQA & MCQA & 11,682 \\
\midrule
Diagnosis &
T2.1 ID 4-way MCQA & MCQA & 2,000 \\
& T2.2 ID 25-way MCQA & MCQA & 2,000 \\
& T2.3 Hierarchical diagnosis & MCQA (multi-step) & 2,000 \\
& T2.4 OOD 4-way MCQA & MCQA & 6,533 \\
\midrule
Reasoning &
T3.1 CoT reasoning & Open-ended & 900 \\
& T3.2 Morph-grounded reasoning & Open-ended & 900 \\
\midrule
Fairness &
T4 Skin-type fairness MCQA & MCQA & 654 \\
\bottomrule
\end{tabular}}
\vspace{-2mm}
\caption{DermoBench tasks, sizes, and data sources.}
\vspace{-3mm}
\label{tab:dermobench_tasks}
\end{table}

\noindent\textbf{(1) $R_{\text{acc}}$}: Standard 0-1 reward for tasks with unique ground-truth (e.g., MCQAs).

\noindent\textbf{(2) $S_{\text{hier}} \in [0,1]$}: Hierarchical similarity over the diagnostic ontology using Wu-Palmer function~\citep{wupalmer}. This differentiates completely incorrect diagnoses from predictions correct at superclass level, mitigating sparse rewards while encouraging coarse-to-fine diagnostic alignment.

\noindent\textbf{(3) $S_{\text{morph}} \in [0,1]$}: Morphology similarity computed via PMI-weighted Tversky matching on structured outputs (Derm7pt/SkinCon attributes).

\noindent\textbf{(4) Gating $g(\cdot)$ and $R_{\text{fmt}}$}: To prevent models from exploiting template-style morphology outputs when diagnoses diverge from ground truth, we progressively unlock morphology rewards only when diagnostic alignment is reasonable:
\begin{equation}
 \vspace{2mm}
 \footnotesize
g(S_{\text{hier}})=\sigma\!\big(k\cdot(S_{\text{hier}}-\mu)\big),
\label{eq:gating}
\vspace{-3mm}
\end{equation}
where $\mu$ is the median $S_{\text{hier}}$ within each batch (adaptive difficulty threshold). $R_{\text{fmt}}$ verifies JSON schema validity and critical tags to ensure auditable outputs. The total MAVIC reward is:
\begin{equation}
 \vspace{2mm}
 \footnotesize
R = R_{\text{acc}} + \lambda_{\text{hier}} S_{\text{hier}} + \lambda_{\text{morph}}\, g(S_{\text{hier}})\, S_{\text{morph}} + R_{\text{fmt}},
\label{eq:mavic}
\end{equation}
with $\lambda_{\text{hier}}=\lambda_{\text{morph}}=1$ by default. We optimize the standard GRPO objective using MAVIC rewards to obtain DermoGPT-RL. Complete implementation detail is in Appendix~\ref{app:mavic_details}.

\subsection{Confidence--Consistency Test-time Adaptation}
\label{subsec:tta}
To further improve generalization under distribution shifts, we note that trivial deterministic decoding often yields unstable predictions on out-of-distribution (OOD) samples, yet full test-time fine-tuning is infeasible in clinical workflows. Thus, we propose CCT, a purely decoding-level strategy that enhances OOD robustness through weighted aggregation of multiple stochastic rollouts, without updating model parameters. The key insight is that reliable predictions should be both confident and consistent across sampling variations, aligning with dermatological practice where diagnostic certainty requires stable evidence.

\begin{table*}[ht]
\centering
\resizebox{\textwidth}{!}{%
\begin{tabular}{l|c|>{\columncolor{openended}}c>{\columncolor{openended}}c>{\columncolor{closeended}}c>{\columncolor{closeended}}c|c|>{\columncolor{closeended}}c>{\columncolor{closeended}}c>{\columncolor{closeended}}c|c|>{\columncolor{closeended}}c>{\columncolor{closeended}}c>{\columncolor{closeended}}c>{\columncolor{closeended}}c|c|>{\columncolor{openended}}c>{\columncolor{openended}}c|c|>{\columncolor{closeended}}c}
\toprule
\multirow{3}{*}{\textbf{Model}} & \multirow{3}{*}{\textbf{Params}} & \multicolumn{5}{c|}{\textbf{Task 1: Morphology}} & \multicolumn{9}{c|}{\textbf{Task 2: Diagnosis}} & \multicolumn{3}{c|}{\textbf{Task 3: Reasoning}} & \textbf{Task 4} \\
\cmidrule(lr){3-7} \cmidrule(lr){8-16} \cmidrule(lr){17-19} \cmidrule(lr){20-20}
& & \textbf{T1.1} & \textbf{T1.2} & \textbf{T1.3} & \textbf{T1.4} & \textbf{Avg.} & \multicolumn{4}{c|}{\textbf{In-Distribution (ID)}} & \multicolumn{5}{c|}{\textbf{Out-of-Distribution (OOD)}} & \textbf{T3.1} & \textbf{T3.2} & \textbf{Avg.} & \textbf{Fair.} \\
& & \scriptsize{(Desc)} & \scriptsize{(Struct)} & \scriptsize{(D7pt)} & \scriptsize{(SkinCon)} & \scriptsize{(T1)} & \textbf{4-cls} & \textbf{25-cls} & \textbf{Hier.} & \textbf{Avg.} & \textbf{Derm1M} & \textbf{DDI} & \textbf{D7pt} & \textbf{SNU} & \textbf{Avg.} & \scriptsize{(CoT)} & \scriptsize{(M-CoT)} & \scriptsize{(T3)} & \scriptsize{(Score)} \\
\midrule
\multicolumn{20}{l}{\textit{General Purpose MLLMs}} \\
GPT-4o-mini & / & 34.55 & 51.80 & 41.19 & 61.09 & 47.16 & 59.50 & 34.75 & 65.90 & 53.38 & 52.12 & 58.54 & 56.48 & 59.17 & 56.57 & 42.83 & 51.65 & 47.24 & 94.06 \\
Claude-Sonnet-4.5-Thinking & / & 36.75 & 55.90 & 29.73 & 59.20 & 45.40 & 55.35 & 34.15 & 63.40 & 50.97 & 53.64 & 52.90 & 50.40 & 68.75 & 56.42 & 43.54 & 54.37 & 48.95 & 91.40 \\
Gemini-2.5-Flash & / & 40.08 & 53.48 & 39.28 & 66.59 & 49.86 & 72.60 & 47.20 & 70.31 & 63.37 & 66.33 & 59.15 & 53.96 & 65.42 & 61.21 & 48.92 & 58.49 & 53.70 & 79.89 \\
GLM-4.5V & 106B & 36.85 & 42.75 & 45.50 & 52.03 & 44.28 & 63.65 & 28.85 & 52.39 & 48.30 & 45.51 & 48.17 & 43.08 & 57.08 & 48.46 & 44.19 & 53.28 & 48.73 & 93.59 \\
Qwen2.5-VL-72B & 72B & 27.97 & 49.35 & 52.91 & 60.51 & 47.69 & 61.50 & 35.95 & 53.93 & 50.46 & 54.63 & 54.88 & 58.36 & 66.67 & 58.63 & 40.39 & 49.71 & 45.05 & \textbf{97.32} \\
QVQ-72B-Preview & 72B & 22.38 & 41.02 & 49.77 & 59.20 & 43.09 & 64.65 & 47.30 & 57.25 & 56.40 & 60.53 & 53.66 & 56.92 & 62.92 & 58.51 & 51.56 & 54.14 & 52.85 & 86.26 \\
Llama-3.2-90B & 90B & 28.20 & 44.43 & 35.84 & 49.19 & 39.41 & 47.85 & 51.65 & 51.20 & 50.23 & 44.76 & 49.09 & 37.14 & 49.58 & 45.14 & 44.61 & 56.14 & 50.38 & 91.31 \\
Llama-3.2-11B & 11B & 12.33 & 38.48 & 39.13 & 29.93 & 29.97 & 29.25 & 16.50 & 35.98 & 27.58 & 25.50 & 21.80 & 26.90 & 42.92 & 29.28 & 36.16 & 38.29 & 37.22 & 53.85 \\
Nemotron-Nano & 12B & 18.93 & 29.09 & 38.72 & 59.20 & 36.49 & 47.25 & 25.60 & 40.17 & 37.67 & 44.12 & 39.48 & 36.84 & 52.08 & 43.14 & 31.90 & 37.40 & 34.65 & 92.40 \\
Qwen3-VL-32B & 32B & 50.30 & 57.43 & 46.15 & 60.67 & 53.64 & 64.25 & 38.05 & 64.08 & 55.46 & 48.13 & 57.93 & 63.11 & \textbf{69.58} & 59.69 & 55.04 & 53.85 & 54.45 & 81.78 \\
Qwen3-VL-8B (Base) & 8B & 33.18 & 46.05 & 40.43 & 62.06 & 45.43 & 67.20 & 45.35 & 44.77 & 52.44 & 52.67 & 51.07 & 59.10 & 55.42 & 54.31 & 47.53 & 53.43 & 50.48 & 89.37 \\
\midrule
\multicolumn{20}{l}{\textit{Medical/Dermatology Specialized}} \\
HuatuoGPT-Vis-7B & 7B & 18.15 & 34.50 & 33.82 & 38.15 & 31.15 & 51.60 & 26.05 & 46.10 & 41.25 & 31.40 & 36.13 & 41.64 & 47.92 & 39.27 & 39.41 & 43.98 & 41.69 & 76.80 \\
LLaVA-Med-v1.5 & 7B & 23.07 & 29.73 & 40.15 & 56.42 & 37.34 & 49.65 & 32.40 & 43.29 & 41.78 & 41.38 & 36.74 & 33.63 & 37.08 & 37.21 & 38.33 & 46.19 & 42.26 & 60.48 \\
SkinVL-PubMM & 7B & 27.82 & 42.63 & 43.62 & 61.31 & 43.84 & 57.15 & 38.75 & 52.19 & 49.36 & 51.12 & 48.93 & 58.95 & 54.58 & 53.40 & 42.92 & 54.62 & 48.77 & 83.04 \\
Lingshu-32B & 32B & 14.94 & 44.85 & 43.47 & 52.39 & 38.91 & 53.45 & 38.40 & 49.11 & 46.99 & 30.29 & 34.91 & 32.24 & 45.83 & 35.82 & 44.41 & 49.55 & 46.98 & 75.44 \\
Lingshu-7B & 7B & 16.44 & 40.74 & 43.92 & 46.08 & 36.80 & 49.55 & 31.90 & 43.43 & 41.64 & 25.95 & 32.16 & 33.88 & 40.00 & 33.00 & 47.16 & 49.30 & 48.23 & 61.58 \\
\midrule
\arrayrulecolor{black!60}\hline
\textbf{DermoGPT-SFT} & 8B & 41.74 & 49.11 & 53.69 & 75.56 & 55.02 & 89.55 & 64.30 & 77.91 & 77.25 & 68.91 & 62.80 & 65.88 & 59.17 & 64.19 & 62.57 & 63.34 & 62.95 & 91.12 \\
\textbf{DermoGPT-SFT + CCT} & 8B & 43.49 & 50.96 & 54.10 & 75.92 & 56.12 & 89.75 & 64.45 & 78.06 & 77.42 & 70.65 & \textbf{64.33} & 65.58 & 61.25 & 65.45 & 63.73 & 65.31 & 64.52 & 92.41 \\
\textbf{DermoGPT-RL} & 8B & 43.93 & 59.29 & 56.53 & 76.67 & 59.10 & \textbf{90.30} & 64.60 & 79.12 & 78.01 & 69.68 & 62.80 & 68.59 & 60.00 & 65.27 & 66.04 & 65.48 & 65.76 & 93.49 \\
\textbf{DermoGPT-RL + CCT} & 8B & \textbf{44.76} & \textbf{60.33} & \textbf{56.94} & \textbf{77.22} & \textbf{59.81} & 89.60 & \textbf{65.40} & \textbf{79.12} & \textbf{78.04} & \textbf{71.56} & 62.96 & \textbf{70.13} & 61.25 & \textbf{66.48} & \textbf{67.74} & \textbf{66.64} & \textbf{67.19} & 93.88 \\
\arrayrulecolor{black!60}\hline
\arrayrulecolor{black}
\midrule
Human Performance & - & 73.36 & 79.27 & 83.00 & 92.00 & 81.90 & 85.00 & 77.00 & 87.54 & 83.18 & 94.00 & 86.00 & 89.00 & 93.00 & 90.50 & 82.15 & 78.41 & 80.28 & 94.00 \\
\bottomrule
\end{tabular}%
}
\caption{\textbf{Main Results on DermoBench.} We evaluate models across four dimensions, and report each model's parameter count when publicly available (\textbf{Params}; ``/'' denotes unknown). \colorbox{openended}{\small Blue} columns indicate open-ended generation tasks (description and structured output), while \colorbox{closeended}{\small orange} columns indicate close-ended classification/scoring tasks. White columns represent aggregate metrics. \textbf{CCT} denotes our confidence--consistency test-time adaptation module. \textbf{Bold} indicates the best result in each column.}
\label{tab:main_results}
\vspace{-0.3cm}
\end{table*}

\subsubsection{Confidence--Consistency Ensemble}
\label{subsubsec:tta-ensemble}

At each decoding step $t$ for input $(x,\mathrm{query})$, we sample $K$ rollouts yielding token distributions $p^{(1)}_t,\dots,p^{(K)}_t \in \Delta^{V-1}$. For each rollout $r$, we compute:

\noindent\textbf{Confidence $C_r$ (margin-based):} Let $p^{(r)}_{t,(1)}$ and $p^{(r)}_{t,(2)}$ denote the highest and second-highest probabilities in $p^{(r)}_t$. We define $C_r = p^{(r)}_{t,(1)} - p^{(r)}_{t,(2)} \in [0,1]$. A larger margin indicates a more confident prediction. For discrete answer tasks, we compute this over option tokens; for free-form generation, over the full vocabulary.

\noindent\textbf{Consistency $D_r$ (deviation from barycenter):} We compute the empirical barycenter $\bar{p}_t = \frac{1}{K}\sum_{j=1}^K p^{(j)}_t$ and set $D_r = \frac{1}{2}\,\|p^{(r)}_t - \bar{p}_t\|_2^2$. Rollouts that deviate significantly from $\bar{p}_t$ (large $D_r$) are downweighted exponentially.

We construct the aggregated distribution via weighted combination:
\begin{equation}
 \vspace{2mm}
 \footnotesize
q_t = \sum_{r=1}^K w_r p^{(r)}_t, \quad
w_r = \frac{\exp(\lambda C_r - \beta D_r)}{\sum_{j=1}^K \exp(\lambda C_j - \beta D_j)},
\label{eq:cct}
\vspace{-2mm}
\end{equation}
where $\lambda$ and $\beta$ control the relative importance of confidence and consistency. The weighting exponentially suppresses outlier rollouts (high $D_r$) while favoring confident predictions (high $C_r$), ensuring predictions are both stable and confident, critical for clinical reliability. The next token is sampled from $q_t$, and this process repeats for each step. In practice, we set $K=8$ and $\lambda=\beta=1.0$.

\subsubsection{Theoretical Guarantee}
\label{subsubsec:tta-theory-main}

To formalize the robustness of this weighting scheme, we establish the following guarantee under distribution contamination; full proofs appear in Appendix~\ref{app:theory}.

\begin{theorem}[Robustness of CCT, informal]
\label{thm:tta-robust}
Let $\{p^{(r)}_t\}_{r=1}^K$ be sampled from a mixture where fraction $(1-\varepsilon)$ comes from a ``good'' component concentrated near $p^\star_t$, and fraction $\varepsilon$ comes from an arbitrary ``bad'' component ($\varepsilon<\tfrac{1}{2}$). Under bounded variance assumptions, there exist constants $\varepsilon_{\mathrm{eff}}, C_U, \gamma_{\mathrm{eff}} > 0$ such that:
\begin{equation}
 \vspace{2mm}
 \footnotesize
\bigl\|q_t - p^\star_t\bigr\|_2 \le \varepsilon_{\mathrm{eff}} + C_U + \mathrm{const} \cdot \exp\!\bigl(-\beta \gamma_{\mathrm{eff}} + \lambda\bigr).
\label{eq:theorem}
\vspace{-2mm}
\end{equation}
\end{theorem}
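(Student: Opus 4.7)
The plan is to decompose $q_t - p^\star_t$ into contributions from good and bad rollouts and then control each term. Writing $G$ and $B$ for the index sets of good and bad rollouts, and using $\sum_r w_r = 1$, we have $q_t - p^\star_t = \sum_{r\in G} w_r (p^{(r)}_t - p^\star_t) + \sum_{r\in B} w_r (p^{(r)}_t - p^\star_t)$. The good part is controlled by the concentration radius of the good component (bounded-variance assumption), which yields the $C_U$ term. The bad part satisfies $\bigl\|\sum_{r\in B} w_r (p^{(r)}_t - p^\star_t)\bigr\|_2 \le \sqrt{2}\sum_{r\in B} w_r$, since any two probability vectors on the simplex differ by at most $\sqrt{2}$ in $\ell_2$. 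Thus the whole argument reduces to bounding the aggregate mass $\sum_{r\in B} w_r$ placed on outliers.

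Next, I would analyze the empirical barycenter $\bar p_t = \tfrac{1}{K}\sum_r p^{(r)}_t$. Since a fraction $1-\varepsilon>\tfrac{1}{2}$ of rollouts come from a component concentrated near $p^\star_t$ with bounded variance, a standard concentration/bias decomposition gives $\|\bar p_t - p^\star_t\|_2 \le \varepsilon_{\mathrm{eff}}$ for some effective radius that absorbs both the adversarial pull from $B$ (bounded by $\sqrt{2}\varepsilon$) and the residual variance of the good samples. This is the step that produces the $\varepsilon_{\mathrm{eff}}$ term in the bound and is also what makes the gap argument in the next step possible.

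The crux of the argument is to produce the exponential suppression factor. Because $\bar p_t$ is close to $p^\star_t$ (previous step), every good rollout has $D_r = \tfrac{1}{2}\|p^{(r)}_t - \bar p_t\|_2^2$ small, while every bad rollout has deviation at least $\gamma_{\mathrm{eff}}$, defined as the minimum separation between the bad component and $p^\star_t$ minus the small movement of $\bar p_t$. Using $C_r \in [0,1]$ to bound the confidence contribution by at most $\exp(\lambda)$ on the numerator and at least $\exp(0)$ on the denominator, a direct ratio calculation on the unnormalized scores gives $\tfrac{W_B}{W_G} \le \tfrac{|B|}{|G|}\exp(\lambda - \beta\gamma_{\mathrm{eff}}) \le \tfrac{\varepsilon}{1-\varepsilon}\exp(\lambda - \beta\gamma_{\mathrm{eff}})$, hence $\sum_{r\in B} w_r \le \tfrac{W_B}{W_G}$. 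Folding this back into the triangle-inequality decomposition, together with the good-side bound $C_U$ and the barycenter bound $\varepsilon_{\mathrm{eff}}$, collects into the stated inequality.

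The main obstacle I anticipate is the gap step: one needs to argue quantitatively that $\bar p_t$ lies close enough to the good cluster that every adversarial rollout is genuinely far from the barycenter, even though the bad component is arbitrary. This requires combining the minority condition $\varepsilon<\tfrac{1}{2}$, the bounded-variance assumption on the good component, and a concentration inequality on $\bar p_t$, and it is where the definition of $\gamma_{\mathrm{eff}}$ must be calibrated carefully so that the gap is strictly positive with high probability. Once $\gamma_{\mathrm{eff}}$ is secured, the remaining manipulations are essentially algebraic and follow the outline above.
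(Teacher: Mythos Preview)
Your overall skeleton (triangle-inequality decomposition, barycenter concentration, then a gap argument to exponentially suppress the bad weight) matches the paper, but the two-way split by mixture label $G$ versus $B$ is where the argument breaks down. You implicitly assume \emph{every} bad rollout has deviation at least $\gamma_{\mathrm{eff}}$ (``minimum separation between the bad component and $p^\star_t$'') and \emph{every} good rollout is within a small radius. Neither holds under the stated assumptions: the good component only has bounded \emph{second moment} around $p^\star_t$, so Markov gives that \emph{most} good rollouts are close, not all; and the bad component's assumption in the formal version is likewise a second-moment lower bound, not a pointwise separation, so some bad draws can land right next to $p^\star_t$ and hence to $\bar p_t$. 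For those rollouts $D_r$ is small and your ratio bound on $W_B/W_G$ collapses. The obstacle you flag at the end is therefore not just about pinning down $\bar p_t$; no amount of barycenter control will make ``every adversarial rollout is far'' true when the bad law is allowed to put mass near $p^\star_t$.

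The paper's fix is a \emph{three-way} partition $G_{\mathrm{eff}}, B_{\mathrm{eff}}, U_{\mathrm{eff}}$ defined by distance to $p^\star_t$ rather than by mixture label: $G_{\mathrm{eff}}$ contains only those good draws within radius $\varepsilon_{\mathrm{eff}}$ (a $(1-\alpha)$ fraction by Markov), $B_{\mathrm{eff}}$ contains only those bad draws beyond radius $\Delta_{\mathrm{eff}}$ (a positive fraction by reverse Markov on the second-moment lower bound), and $U_{\mathrm{eff}}$ absorbs the leftover tail-good and near-bad rollouts. The gap $D_b \ge D_g + \gamma_{\mathrm{eff}}$ is then provable \emph{only} for $g\in G_{\mathrm{eff}}$, $b\in B_{\mathrm{eff}}$, and the exponential suppression applies to $W_{B_{\mathrm{eff}}}$ alone. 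The residual weight $W_U$ on the unclassified set is what generates $C_U = (\Delta_{\max}-\varepsilon_{\mathrm{eff}})W_U$; this is not the good-side concentration radius as you wrote. Correspondingly, $\varepsilon_{\mathrm{eff}}$ in the final bound comes from $\varepsilon_{\mathrm{eff}}\cdot W_{G_{\mathrm{eff}}} \le \varepsilon_{\mathrm{eff}}$, i.e., the good-cluster radius, not from the barycenter displacement (that quantity is used only internally to certify the $D$-gap). Once you move to the three-way split, the rest of your algebra (including the $e^{\lambda}$ inflation from $C_r\in[0,1]$) goes through essentially as you describe.
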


The bound shows that corrupted rollouts' influence decays exponentially with $\beta$, keeping $q_t$ near $p^\star_t$ when $\beta$ is sufficiently large relative to $\lambda$. This theoretical guarantee explains why CCT remains robust even when a substantial fraction (up to $\varepsilon < 50\%$) of rollouts are corrupted by distribution shifts—the aggregation automatically suppresses outliers without requiring knowledge of the corruption distribution.
\section{Experiments}
\label{sec:experiments}

\noindent \textbf{Performance on Closed-Ended Tasks.}
We evaluate model accuracy across Dermoscopic/Clinical Attribute Recognition (T1.3--1.4), Diagnosis (including In-Distribution 4-cls/25-cls/Hierarchical MCQA and OOD MCQA; T2), and Fairness (T4). Results demonstrate that our DermoGPT-SFT baseline alone establishes a new state-of-the-art, validating the high quality of our instruction data. On In-Distribution (ID) diagnosis (T2 Avg), SFT achieves 77.25\%, surpassing its base model (Qwen3-VL-8B; 52.44\%) and the strongest commercial baseline Gemini-2.5-Flash (63.37\%) by substantial margins; notably, it excels in Hierarchical Diagnosis (77.91\% vs. Gemini 70.31\%) and Clinical Attribute Recognition (T1.4: 75.56\% vs. Gemini 66.59\%). Building on this foundation, our subsequent modules steadily improve robustness: the RL stage enhances OOD performance from 64.19\% (SFT) to 65.27\%, and the CCT module further elevates it to 66.48\% by mitigating domain shifts. Consequently, our final DermoGPT-RL+CCT establishes a comprehensive new state-of-the-art, significantly outperforming Gemini-2.5-Flash across all axes: it improves ID and OOD diagnostic accuracy by +14.67\% and +5.27\%, respectively; crucially, it simultaneously achieves an exceptional Fairness score of 93.88 (Task 4), surpassing Gemini (79.89) by nearly 14\%, effectively minimizing diagnostic disparities across diverse skin tones. %while maintaining superior precision.
\begin{figure*}[ht]
  \centering
  \includegraphics[width=\linewidth]{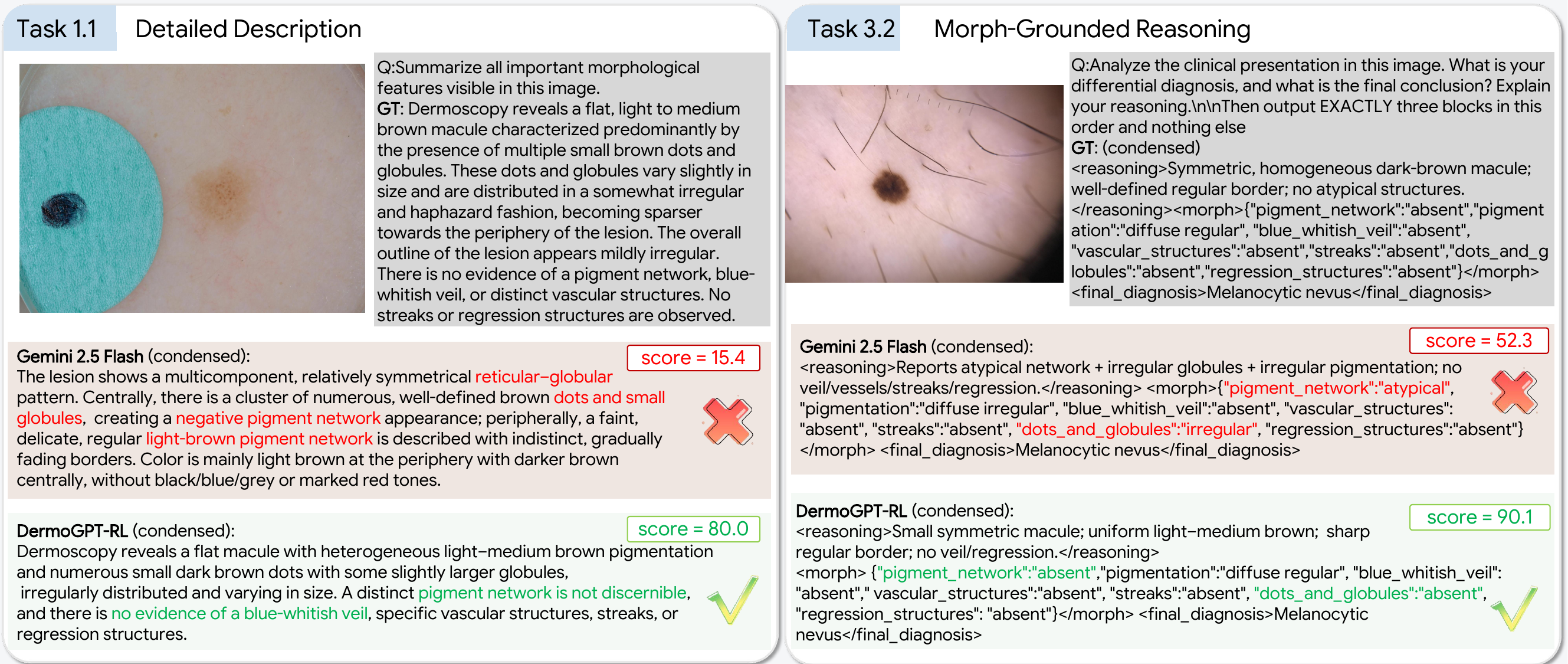}
  \vspace{-0.6cm}
  \caption{\textbf{Qualitative comparison on DermoBench.} Left: Task~1.1 (Detailed Description). Right: Task~3.2 (Morph-Grounded Reasoning with ultra-short structured outputs). Compared to Gemini-2.5 Flash, DermoGPT-RL better matches the reference morphology and achieves higher scores.}
  \vspace{-0.4cm}
    \label{fig:vis}
\end{figure*}

\vspace{2mm}
\noindent \textbf{Open-Ended Morphology \& Reasoning.}
In Open-Ended Morphology and Reasoning tasks (T1.1, T1.2, T3), DermoGPT-RL+CCT demonstrates superior generation quality over both general-purpose MLLMs and existing medical-specialized models. Notably, previous medical specialized models (e.g., HuatuoGPT-Vis-7B) score only 41.69\% on the Reasoning axis, lower than most general MLLMs, suggesting that naive fine-tuning without morphological constraints produces "black-box" classifiers rather than genuine reasoning agents. In contrast, our model scores 67.19\% on the Reasoning axis on average (T3), outperforming Gemini-2.5-Flash (53.70\%) by over 13.49\%; this verifies that our Concept Bottleneck design effectively reduces hallucination by grounding reasoning in explicit morphological evidence. Human sanity checks confirmed high reliability of LLM-Judge scoring ($>4.0/5.0$, Figure~\ref{fig:2}c). Despite these algorithmic advances, a significant Human-AI gap persists, particularly in Detailed Description (T1.1; 73.36 vs 44.76), highlighting that capturing fine-grained visual nuances remains a critical challenge.

\begin{table}[t]
\centering
\small
\setlength{\tabcolsep}{4pt}
\begin{tabular}{lcccc}
\toprule
Setting & T1.1 & T1.2 & T3.1 & T3.2 \\
\midrule
SFT only & 41.74 & 49.11 & 62.57 & 63.34 \\
GRPO (acc+fmt) & 35.13 & 41.20 & 61.34 & 59.88 \\
w/o $S_{\text{morph}}$ & 39.65 & 48.09 & 65.40 & 65.27 \\
w/o $S_{\text{hier}}$ & 42.59 & 50.11 & 63.96 & 65.02 \\
w/o gate ($g{=}1$) & 43.26 & 56.03 & \textbf{66.71} & 63.89 \\
PMI$\to$uniform & 42.56 & 56.98 & 57.32 & 56.64 \\
Full MAVIC & \textbf{43.93} & \textbf{59.29} & 66.04 & \textbf{65.48} \\
\bottomrule
\end{tabular}
\vspace{-2mm}
\caption{MAVIC ablations under GRPO setup ($K{=}8$). Higher is better for all metrics.}
\vspace{-2mm}
\label{tab:mavic_ablation}
\end{table}

\begin{table}[t]
\centering
\small
\setlength{\tabcolsep}{3pt}
\begin{tabular}{lcccc}
\toprule
Method & ID MCQA & OOD MCQA & Hier. & Fair. \\
\midrule
Single ($K{=}1$) & 77.80 & 65.27 & 79.63 & 93.49 \\
Vote ($K{=}4$) & 78.10 & 65.83 & 79.15 & 93.50 \\
MeanProb ($K{=}4$) & 77.95 & 65.69 & 79.51 & 93.32 \\
ConfOnly ($K{=}4$) & 78.40 & 66.47 & 79.49 & 93.09 \\
ConsOnly ($K{=}4$) & 78.35 & \textbf{66.59} & 79.82 & 93.58 \\
CC (Ours, $K{=}4$) & \textbf{78.80} & 66.27 & \textbf{80.31} & \textbf{93.76} \\
\bottomrule
\end{tabular}
\vspace{-2mm}
\caption{Ablation of confidence--consistency components on 900-case core set. Higher is better for all metrics.}
\vspace{-5mm}
\label{tab:tta_component_ablation}
\end{table}
\vspace{2mm}
\noindent\textbf{Ablation Study.} We further dissect component contributions on the core set and OOD benchmarks. Please refer to Appendix~\ref{app:tta-ablation} for more results.

\noindent\textbf{(1) MAVIC Reward Analysis.} 
We first investigate the necessity of morphology-guided rewards (Table~\ref{tab:mavic_ablation}). Naively applying standard RL with only accuracy and format rewards (GRPO(acc+fmt)) proves detrimental, degrading performance below SFT baseline across all reasoning tasks. This indicates that unconstrained RL encourages metric gaming rather than genuine clinical reasoning. Incorporating morphological similarity ($S_{\text{morph}}$) and hierarchical diagnosis rewards ($S_{\text{hier}}$) steadily improves performance. Crucially, the full MAVIC setup with gating mechanism ($g$) achieves peak performance (65.48 on T3.2). Ablating the gate drops performance to 63.89, confirming that difficulty-aware gating prevents the model from bypassing morphological evidence to make uninformed diagnostic guesses.

\noindent\textbf{(2) CCT Test-Time Adaptation Analysis.} 
% We evaluate inference strategies using $K$ prompt variants (Table~\ref{tab:tta_component_ablation}). Our Confidence--Consistency (CC) aggregation consistently outperforms standard ensemble baselines, including Majority Vote and Mean Probability (\textit{MeanProb}). We find that neither confidence nor consistency alone is sufficient: on Task 2.1, pure confidence weighting (\textit{ConfOnly}: 78.40\%) and pure consistency weighting (\textit{ConsOnly}: 78.35\%) both underperform their combination (\textit{CC}: 78.80\%), supporting the claim that these metrics provide complementary signals for robustness. Furthermore, we observe a scaling law in test-time compute (Appendix Table~\ref{tab:tta_k_sensitivity}): as the number of prompt variants $K$ increases from 2 to 8, OOD performance consistently improves from 66.73\% to 67.83\%, validating the potential of test-time scaling for enhancing medical reliability in unseen domains.
We evaluate test-time inference with $K$ prompt variants and find that Confidence–Consistency (CC) aggregation consistently outperforms standard ensemble baselines (Majority Vote, MeanProb). As shown in Table~\ref{tab:tta_component_ablation}, on Task 2.1, neither signal alone is sufficient: ConfOnly (78.40\%) and ConsOnly (78.35\%) both underperform CC (78.80\%), indicating complementary robustness cues. We also observe test-time scaling: as $K$ increases from 2 to 8, OOD performance rises from 65.82\% to 66.48\%, supporting TTS for improved reliability.

% \vspace{2mm}
\noindent\textbf{Qualitative Analysis.}
% Qualitative results (Figure~\ref{fig:vis}) validate DermoGPT's reasoning superiority. The best-performing baseline, Gemini-2.5-Flash, suffers from: (1) \textbf{hallucinated morphology concepts} (Task 1.1), generating inaccurate morphology feature descriptions that misidentify visible structures; and (2) \textbf{inconsistent reasoning chains} (Task 3.2), where morphological observations contradict final diagnoses. For instance, Gemini describes \textit{absent} pigment network yet diagnoses a condition requiring its \textit{presence}. In contrast, DermoGPT maintains strict alignment through MAVIC-guided training. In Task 1.1, our model accurately describes morphological features (score: 85.3 vs. Gemini's 62.7). In Task 3.2, it correctly identifies ``diffuse regular pigmentation'' and ``absent pigment network,'' yielding correct Melanocytic Nevus diagnosis (score: 90.1 vs. 52.3), while Gemini hallucinates an ``atypical network.''
Fig.~\ref{fig:vis} validates DermoGPT's reasoning superiority over Gemini-2.5-Flash, which exhibits hallucinated morphology concepts (Task 1.1) and inconsistent reasoning between observations and diagnoses (Task 3.2). MAVIC-guided training enables DermoGPT to maintain strict alignment, achieving significantly higher accuracy in feature description and diagnostic consistency.

\section{Conclusion}
\label{sec:conclusion}
We present a comprehensive framework for dermatology MLLMs grounded in morphology-first clinical reasoning. Our unified data--benchmark--model suite—comprising DermoInstruct, DermoBench, and DermoGPT—enables systematic training and evaluation across diverse dermatological tasks, significantly advancing the state-of-the-art while narrowing the human--AI performance gap. This work establishes a foundation for developing clinically-viable dermatology AI systems that mirror expert diagnostic workflows.

\section*{Limitations}
\label{sec:limitations}
Despite substantial progress, several limitations warrant discussion. First, while DermoGPT significantly narrows the human--AI gap, performance disparities persist across all tasks, highlighting the inherent difficulty of clinical-grade diagnostic reasoning. Second, although our benchmark is comprehensive, it may not fully capture the complexity of real-world clinical scenarios, such as patient-level holistic analysis \cite{panderm} or cases requiring longitudinal patient histories. Third, despite integrating expert knowledge during data curation, the morphology-grounded reasoning chains remain susceptible to noise, particularly in ambiguous cases where visual features alone are insufficient for definitive diagnosis. Finally, computational constraints limited our exploration of larger model architectures and full parameter fine-tuning, both of which may further improve performance.
\bibliography{custom}

\clearpage
\onecolumn
\begin{center}
    \LARGE Appendix
\end{center}

\appendix
% 关键：在生成目录前恢复深度
\addtocontents{toc}{\protect\setcounter{tocdepth}{3}} 

\begingroup
  \renewcommand{\contentsname}{Contents}
  \tableofcontents
\endgroup
\clearpage

\twocolumn

\appendix

\section{Source Datasets and Extended Related Work}
\label{sec:appendix-related}

\begin{table*}[t]
  \centering
  \small
  \begin{tabular}{p{2.6cm}p{2.0cm}p{3.5cm}p{1.1cm}p{5.0cm}}
    \toprule
    Dataset & Modality & Population / setting & Scale & Notes \\
    \midrule
    Daffodil~\citep{daffodil} & Dermoscopic & Bangladesh hospital & S &
    Biopsy-proven dermoscopy dataset. \\
    \midrule
    DermNet~\citep{dermnet} & Clinical & Global web atlas & L &
    Expert-curated clinical photos. \\
    \midrule
    Fitzpatrick17k~\citep{f17k} & Clinical & US outpatient clinics & M &
    Includes Fitzpatrick skin-type labels. \\
    \midrule
    ISIC Archive~\citep{isic} & Dermoscopic & Multi-center dermoscopy & L &
    Standard benchmark for dermoscopic lesions. \\
    \midrule
    MIDAS~\citep{midas} & Clinical \& dermoscopic & Multi-institution NEJM AI dataset & M &
    Paired clinical/dermoscopy with biopsy labels. \\
    \midrule
    PAD-UFES-20~\citep{pad} & Clinical & Brazilian teledermatology & S-M &
    Smartphone photos with rich metadata. \\
    \midrule
    PASSION~\citep{passion} & Clinical & Sub-Saharan Africa & M &
    Smartphone images emphasizing pigmented skin. \\
    \midrule
    PUMCH~\citep{pumch} & Clinical & Chinese tertiary hospital & M &
    Broad inflammatory and neoplastic diseases. \\
    \midrule
    SCIN~\citep{scin} & Clinical & US crowdsourced users & M &
    Diverse smartphone photos with demographics. \\
    \midrule
    SD-198~\citep{sd198} & Clinical & China dermatology clinic & S-M &
    198-category long-tail dataset. \\
    \midrule
    BCN20000~\citep{bcn20000} & Dermoscopic & Barcelona tertiary center & M-L &
    Large European dermoscopy cohort. \\
    \midrule
    HAM10000~\citep{ham10000} & Dermoscopic & Austria \& Australia & M &
    Classic dermoscopy benchmark. \\
    \midrule
    Derm12345~\citep{derm12345} & Dermoscopic & Turkish hospital & M &
    40-class dermoscopic dataset. \\
    \midrule
    MILK10k~\citep{milk10k} & Clinical \& dermoscopic & ISIC multimodal cohort & M &
    Paired clinical/dermoscopy with metadata. \\
    \bottomrule
  \end{tabular}
  \caption{Summary of the fourteen source dermatology datasets used to construct DermoInstruct and DermoBench. ``Scale'' is qualitative (S: $<$5k images, M: 5k-20k, L: $>$20k).}
  \label{tab:source-datasets}
\end{table*}

\subsection{Source Dermatology Datasets}
\label{app:datasets}

To construct DermoInstruct and DermoBench, we aggregate
fourteen public or institutionally curated dermatology datasets
covering clinical photographs, dermoscopic images, and
smartphone or teledermatology photos from diverse healthcare
systems:
Daffodil~\citep{daffodil},
DermNet~\citep{dermnet},
Fitzpatrick17k~\citep{f17k},
ISIC Archive~\citep{isic},
MIDAS~\citep{midas},
PAD-UFES-20~\citep{pad},
PASSION~\citep{passion},
PUMCH~\citep{pumch},
SCIN~\citep{scin},
SD-198~\citep{sd198},
BCN20000~\citep{bcn20000},
HAM10000~\citep{ham10000},
Derm12345~\citep{derm12345}, and
MILK10k~\citep{milk10k}.
Note that images hosted on the ISIC platform that are not part of these named subsets are grouped into ``ISIC Archive'' collection. These datasets span pigmented and non-pigmented lesions,
benign and malignant conditions, a wide range of anatomic
sites and skin tones, and both controlled and real-world
acquisition conditions.
We briefly summarize their scope in
Table~\ref{tab:source-datasets}; the main paper focuses on
the unified ontology and task construction built on top of
these sources.

Across these datasets, we harmonize heterogeneous diagnosis labels into
a unified hierarchy of superclasses and subclasses, and map existing
attribute schemas (e.g., dermoscopic structures, pigmentation patterns)
into a common morphology ontology used consistently throughout
DermoInstruct and DermoBench.

\subsection{Leakage prevention and de-duplication.}
\label{app:646k}
We split data at the \emph{patient} level (all images from the same \texttt{patient\_id} are confined to a single split), allowing multiple cases per patient in the test set but ensuring no patient overlap with training.
We exclude images from DDI, SCIN, PAD, SkinCon, and Derm7pt from training and reserve them for evaluation-only settings.
Finally, we apply near-duplicate filtering with perceptual hashing (pHash; Hamming distance $\le 2$) to remove visually redundant images.
In total, we retain 646,018 pairs for training after leakage controls and de-duplication.

\subsection{Dermatology Benchmarks and Vision-Language Models}
\label{sec:appendix-derm-benchmarks}

Traditional deep-learning systems for dermatology have focused on
single-image diagnosis of a limited set of conditions, often trained
and evaluated on individual datasets such as HAM10000 or ISIC,
and commonly framed as closed-set classification
tasks~\citep{kshirsagar2022deep,alsuwaidan2023deep,noronha2023deep,ddi}.
Recent work has begun to emphasize both fairness and robustness,
highlighting disparities across skin tones and acquisition conditions
and calling for more diverse benchmarks~\citep{f17k,sd198,scin,ddi}.

In parallel, several multimodal and vision-language dermatology
datasets and models have emerged.
MAKE~\citep{make} pre-trains a dermatology VLM with multi-aspect
knowledge, and PanDerm~\citep{panderm} proposes a dermatology
vision foundation model trained on large-scale multimodal data.
SkinGPT-4~\citep{skingpt}, Skin-R1~\citep{skinr1}, and
SkinGPT-R1~\citep{skingptr1} explore instruction-tuning and
reasoning-style training for dermatology LLMs.
DermBench~\citep{dermbench} and DermaVQA~\citep{dermavqa}
provide evaluation datasets for diagnostic narratives and
question answering,
while SkinCap~\citep{skincap} and SkinCaRe~\citep{skincare}
enrich image-text pairs with medical captions and chain-of-thought
reasoning.
More recently, Derm1M~\citep{derm1m} and DermaSynth~\citep{dermasynth}
scale dermatology vision-language data to the million-sample regime.

Beyond dermatology, there is a growing ecosystem of multimodal medical
benchmarks and foundation models, such as GEMEX for chest
X-ray VQA~\citep{gemex},
PathGen for pathology image-text pairs~\citep{pathgen},
EndoBench for endoscopy~\citep{endobench}, and
VisionUnite for ophthalmology~\citep{visionunite}.
Compared to these efforts, DermoBench is specifically
designed to evaluate dermatology MLLMs along a morphology
$\to$ reasoning $\to$ diagnosis axis, with fairness and robustness
explicitly foregrounded.

\subsection{Concept Bottleneck Models and Morphology-Grounded Reasoning}
\label{sec:appendix-cbm}

Concept bottleneck models (CBMs) explicitly insert an
interpretable concept layer between raw features and task
predictions: the model first predicts a vector of
human-understandable concepts and then predicts the final label
from those concepts~\citep{cbm}.
Such models allow users to inspect and intervene on the
intermediate concept predictions, improving transparency and
enabling richer human-model interaction.
Subsequent work has studied robustness, intervention strategies,
and automatic discovery of concepts,
but the core idea remains to align model internals with
domain-relevant abstractions.

Dermatology is naturally aligned with the CBM paradigm, because
clinical practice is organized around lesion morphology.
Dermatologists rely on structured morphology descriptors in both
clinical and dermoscopic settings~\citep{morphology1,morphology2,morphology3},
and recent datasets such as the SkinCon schema and the dermoscopic
seven-point checklist provide explicit morphology annotations
for skin lesions~\citep{skincon,derm7pt}.
Our benchmark instantiates a \emph{soft} concept bottleneck for
dermatology: Task~1 evaluates morphology descriptions and attributes,
Task~3 assesses chain-of-thought reasoning grounded in these concepts,
and Task~2 measures whether diagnoses are consistent with both.
Rather than inserting a fixed-dimensional concept layer into a single
network, we expose morphology, reasoning, and diagnosis as separate
but tightly coupled tasks, and exploit cross-task consistency as both
a training signal (via DermoInstruct) and an evaluation criterion
(via DermoBench).

\subsection{Reinforcement Learning, GRPO, and Instruction-Tuned MLLMs}
\label{sec:appendix-grpo}

Reinforcement learning has become a central tool for enhancing the
reasoning capabilities of large language models beyond standard
supervised fine-tuning.
DeepSeekMath~\citep{deepseekmath}, for example, combines
continued pre-training on math-heavy corpora with RL and introduces
\emph{Group Relative Policy Optimization} (GRPO), a variant of PPO
that replaces a learned value-function critic with a group-based
baseline over multiple sampled trajectories.
GRPO-style objectives have quickly been adopted in reasoning-focused
LLMs because they are sample-efficient, remove the need for a separate
critic network, and work well with verifiable or heuristic reward
signals.

Our MAVIC framework is inspired by this line of work but tailors the
reward design to dermatology.
Instead of rewarding only final correctness, we combine multiple
terms capturing hierarchical diagnosis correctness, proximity in the
ontology, morphology-grounded agreement with Task~1 outputs, and
format constraints.
This connects GRPO-style RL with clinical desiderata such as
lesion understanding and cross-skin-type robustness, and is
complementary to standard instruction-tuning with LoRA
adaptation~\citep{lora} and chain-of-thought prompting~\citep{cot}
used in general-purpose MLLMs such as Qwen3-VL and Gemini
\citep{qwen3vl,gemini,tang} and in domain-specific models such as
SkinGPT-4 and Skin-R1~\citep{skingpt,skinr1,skingptr1,vargpt,lisa}.

\begin{figure*}[ht]
  \centering
  
  \subcaptionbox{}{\includegraphics[height=4.3cm]{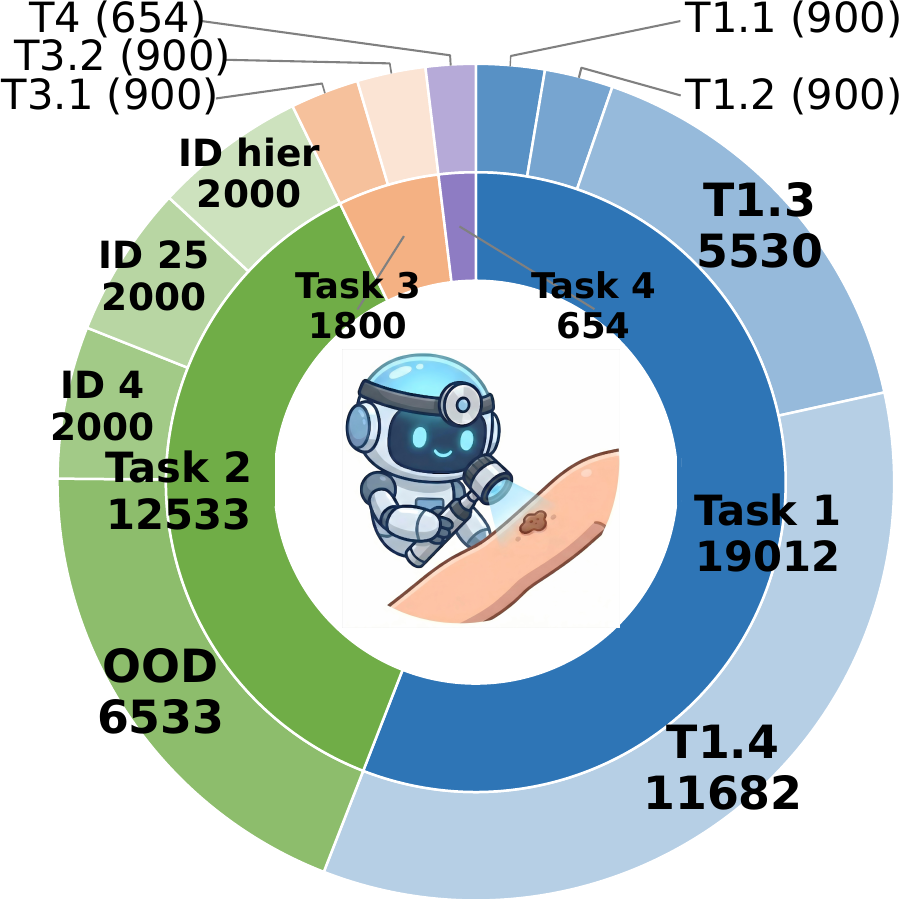}}
  \hfill
  \subcaptionbox{}{\includegraphics[height=4.3cm]{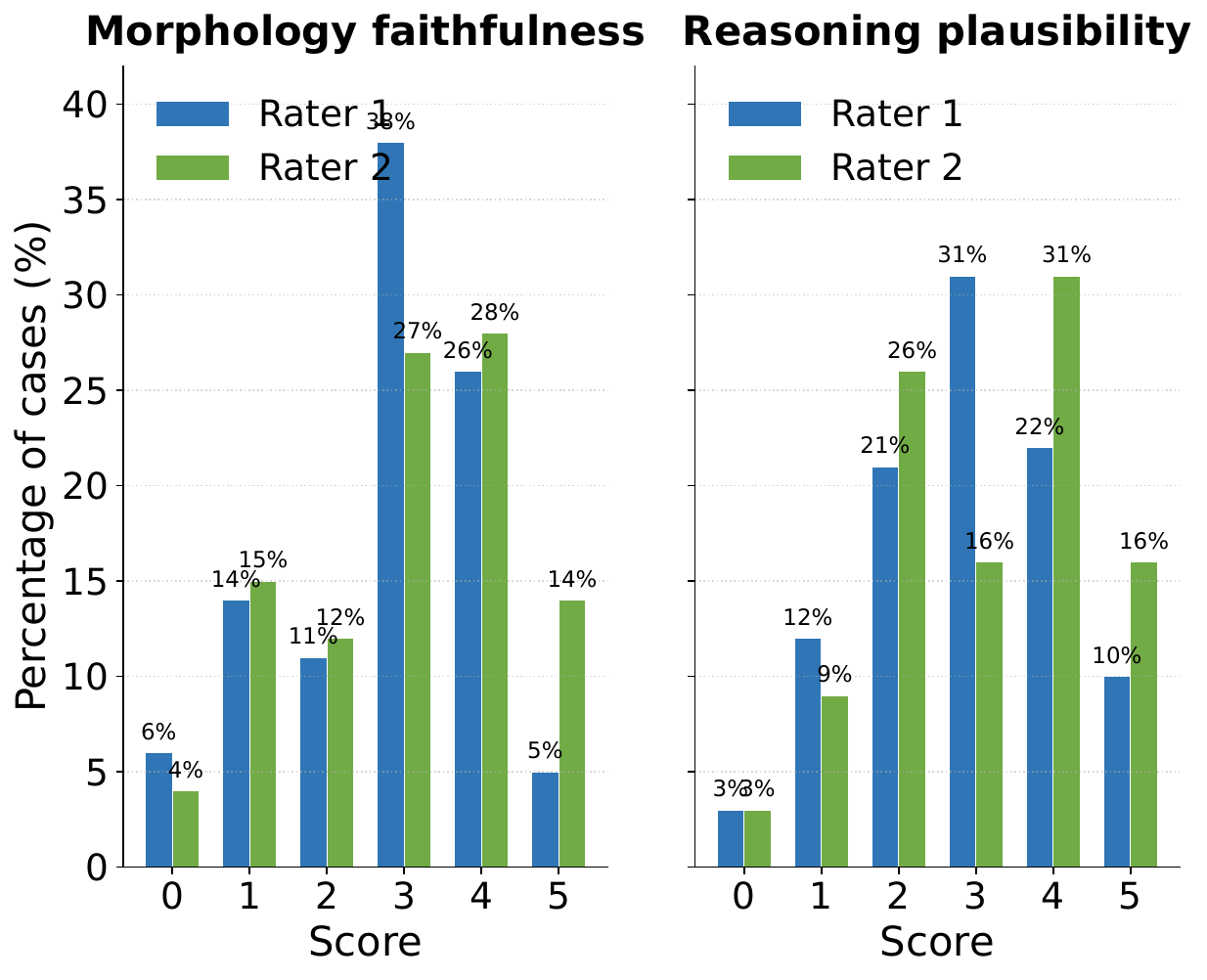}}
  \hfill
  \subcaptionbox{}{\includegraphics[height=4.5cm]{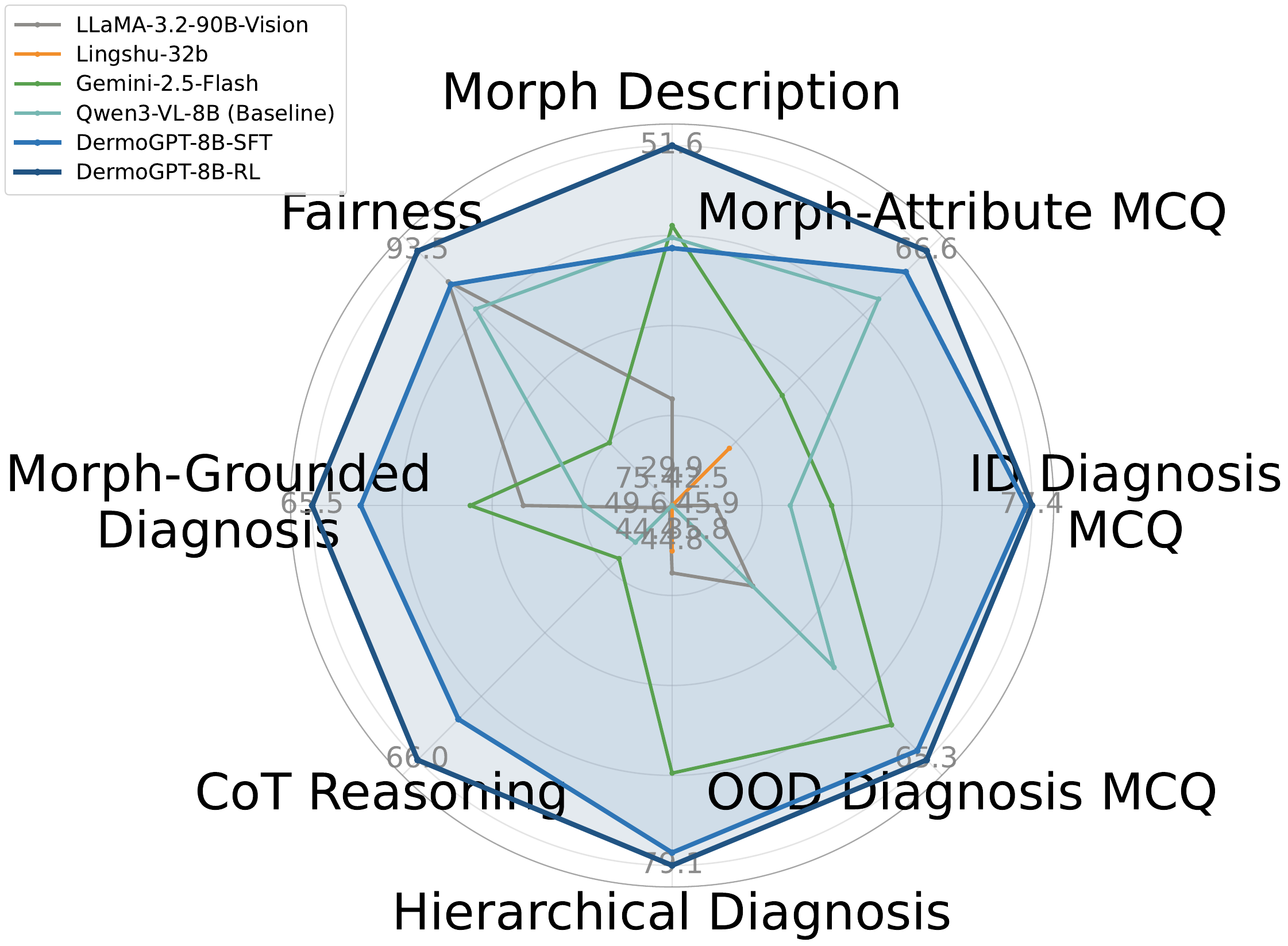}}

  % 总标题
  \caption{Benchmark statistics and key evaluation dimensions of DermoBench and DermoInstruct.
(a) Task-wise and sub-task-wise distribution of VQA pairs.
(b) Human ratings of synthesized morphological features and CoT of DermoInstruct.
(c) Performance of representative MLLMs.}
  \label{fig:3-move-supp}
  \vspace{-0.3cm}
\end{figure*}

\subsection{Test-Time Adaptation and Test-Time Scaling}
\label{sec:appendix-tta-tts}

\paragraph{Test-time adaptation.}
Test-time adaptation (TTA) adapts a pre-trained model to unlabeled
test data at deployment time, typically to mitigate covariate shifts
without full re-training.
Classical domain adaptation methods ~\citep{iosda, tent} update batch-normalization statistics or
minimize prediction entropy, while more recent work explores online
adaptation, pseudo-labeling, and robustness under dynamic
streams~\citep{tta_survey, rnlm, atri}.
For vision-language models, recent methods study both optimization
based and optimization-free strategies.
ZERO~\citep{zero} shows that a surprisingly strong VLM TTA baseline
can be obtained by aggressive test-time augmentation, temperature-$0$
prediction, and confidence-based marginalization, requiring only a
single batched forward pass and no backpropagation.
These results demonstrate that much of the benefit of prompt-tuning
style TTA can be captured by carefully designed test-time inference
procedures.

Our CCT framework is complementary to
these methods.
Instead of updating model parameters, we adapt how the model is
\emph{queried} and how multiple stochastic predictions are aggregated:
we sample multiple responses under morphology- and diagnosis-focused
prompting, then aggregate them using confidence- and
consistency-based weighting across tasks, images, and augmentations.
This can be seen as a lightweight, domain-specific TTA scheme that
relies on cross-task dermatology priors rather than parameter updates.

\paragraph{Test-time scaling.}
Test-time scaling (TTS) refers to improving model performance by
allocating more compute at inference time without changing model
parameters~\citep{lisa, s1}.
In the LLM literature, canonical examples include chain-of-thought
prompting with self-consistency, where multiple reasoning paths are
sampled and the majority answer is selected, and best-of-$n$
sampling guided by task-specific scorers~\citep{cot}.
Such techniques can substantially improve reasoning quality but
incur linear cost in the number of samples.

Our CCT procedure can be interpreted as a specialized TTS scheme for
dermatology MLLMs.
By combining multi-sample decoding with confidence- and
consistency-based aggregation across morphology, reasoning, and
diagnosis tasks, CCT leverages the structure of DermoBench to
stabilize predictions under distribution shifts (e.g., across
devices or skin-tone groups) while keeping computation modest
relative to naive best-of-$n$ sampling.

\section{DermoBench Task Definitions and Data Sources}
\label{app:dermobench_tasks}

\subsection{Task Overview and Sample Statistics}
\label{app:dermobench_overview}

Table~\ref{tab:dermobench_tasks} summarizes all DermoBench subtasks, data sources, and sample sizes.
The complete benchmark contains 33,999 VQA-style samples, distributed as follows: Task 1 has 19,012 samples; Task 2 has 12,533; Task 3 has 1,800; and Task 4 has 654.

\subsection{Training Isolation and Leakage Control (Clean Separation)}
\label{app:dermobench_split}

To ensure credible evaluation results, DermoBench implements the following isolation strategies:

\paragraph{(1) Image-level Isolation.} Unless explicitly stated, DermoBench images are sourced from datasets unused in DermoInstruct or from strictly held-out splits of the same source datasets. Critical morphology evaluation datasets such as Derm7pt and SkinCon are designated as evaluation-exclusive sources, with no images or labels utilized for training.

\paragraph{(2) Text-level Isolation.} Reference texts for all open-ended tasks (T1.1/T1.2/T3.1/T3.2)—including morphological reports, attribute JSONs, reasoning chains, and diagnostic statements—are excluded from training corpora to prevent artificially inflated performance through answer memorization.

\paragraph{(3) Question/Template-level Isolation.}
Both multiple-choice and open-ended tasks employ minimal sets of semantically equivalent templates. We perform rigorous deduplication checks between training and evaluation template sets, and provide complete template inventories with cryptographic hashes for reproducibility upon release (see Appendix~\ref{app:prompts}).

\subsection{Morphology Understanding (Task 1.x)}
\label{app:task1}

\subsubsection{T1.1--T1.2: Open-Ended Morphology Evaluation on 900-Case Core Set}
\label{app:task1_open}

\paragraph{Input.}
A single clinical or dermoscopic image + instruction.

\paragraph{Output and Format Constraints.}
\begin{itemize}
\item \textbf{T1.1 (Morph report)}: Generate a structured morphological examination report covering key aspects including lesion type, color, border, surface/scales, and distribution.
\item \textbf{T1.2 (Morph JSON + report)}: In addition to the report, output a JSON object wrapped in \texttt{<morph>}...\texttt{</morph>} tags. Dermoscopic images follow Derm7pt checklist fields; clinical images follow SkinCon fields.
\end{itemize}

\paragraph{Gold Standard Construction (Core Process).}
We first use a strong VLM to generate for each core set image: (i) morphological report, (ii) attribute JSON, and (iii) diagnostic reasoning with final diagnosis (for Task 3.x). Dermatologists then conduct line-by-line review and revision to ensure (a) textual descriptions align with visible evidence in images, (b) JSON field values conform to clinical terminology and definitions, and (c) consistency between descriptions and diagnoses. Detailed review guidelines, conflict resolution examples, and final consistency checks are provided in Appendix~\ref{app:core900_protocol}.

\subsubsection{T1.3: Dermoscopic Attribute MCQA}
\label{app:task1_derm7pt}

\paragraph{Data Source.}
The dermoscopic test split of Derm7pt is used for evaluation. Although Derm7pt provides training splits, we exclude all its images and labels from training.

\paragraph{Question Construction.}
Each question queries one attribute from the Derm7pt checklist (e.g., pigment network, streaks, etc.), with options corresponding to valid states for that attribute. Question templates and option generation rules are specified in Appendix~\ref{app:prompts}.

\subsubsection{T1.4: SkinCon Attribute Multiple-Choice Questions (Clinical Attribute MCQA)}
\label{app:task1_skincon}

\paragraph{Data Source.}
SkinCon does not provide an official test split; we treat all its annotated samples as evaluation-only, generating MCQAs from its morphological annotations.
Question and option construction follow the same principles as above, with fields and value spaces determined by the SkinCon schema.

\subsection{Diagnosis classification (Task 2.x)}
\label{app:task2}

\subsubsection{In-distribution (ID) diagnosis (T2.1--T2.3)}
\label{app:task2_id}

\paragraph{Data sources and partitioning.}
The ID diagnosis evaluation set was constructed by extracting strictly held-out images from the same 14 source datasets as DermoInstruct (completely isolated from training instruction pairs, see Appendix~\ref{app:dermobench_split}).

\paragraph{T2.1: 4-way MCQA (leaf-level).}
The correct option is a fine-grained leaf-node diagnosis; distractors are preferentially sampled from neighboring nodes/siblings under the same parent node in the unified ontology to enhance "clinical confusability."

\paragraph{T2.2: 25-way MCQA (coarse-grained triage).}
The 325 leaf-node diagnoses are collapsed into 25 coarse-grained categories with stronger clinical significance, creating a fixed option menu to simulate real-world triage scenarios.

\paragraph{T2.3: Hierarchical diagnosis.}
A single diagnosis is decomposed into sequential decisions along the ontology path (root$\to$leaf).
Each question corresponds to one step along the path, with both per-level accuracy and path-level metrics measured.

\subsubsection{Out-of-distribution (OOD) diagnosis (T2.4)}
\label{app:task2_ood}

\paragraph{Data sources.}
Evaluation partitions from multiple external dermoscopy/clinical datasets are used, including Derm1M educational split, Derm7pt, DDI, and SNU134.

\paragraph{Key setting: Non-aligned label spaces.}
Unlike ID tasks, OOD tasks construct MCQAs within each dataset's \textbf{original label space}:
We do not map ground-truth labels or options to a unified ontology.
Consequently, models must simultaneously handle visual distribution shifts and label space mismatches, preventing inflated scores from "interpolating" on a unified taxonomy.

\paragraph{MCQA construction.}
For each sample, the original dataset label serves as the correct option; distractors are sampled from the same dataset's label set (potentially weighted by class frequency or confusability).

\subsection{Reasoning (Task 3.x)}
\label{app:task3}

\subsubsection{T3.1: CoT reasoning}
\label{app:task3_cot}

\paragraph{Data and objective.}
We use the same 900-case core set as in T1.1/T1.2.
Models must output reasoning text enclosed in \texttt{<reasoning>}...\texttt{</reasoning>} tags,
connecting visible evidence with candidate diagnoses, and provide the final diagnosis within \texttt{<final\_diagnosis>} tags.

\subsubsection{T3.2: Morph-grounded reasoning}
\label{app:task3_mground}

Building upon T3.1, models are additionally required to output \texttt{<morph>} JSON (with the same schema as in T1.2).
This setup explicitly tests: \emph{whether the morphological evidence documented by the model sufficiently supports its reasoning chain and final diagnosis}.

\paragraph{Consistency check (analysis dimension).}
Beyond open-ended scoring, we additionally perform automated "morphology$\leftrightarrow$diagnosis consistency" checks on the core set:
For example, contradictions are counted when the model declares critical negative features in the JSON (e.g., \emph{no pigment network}) but cites contradictory evidence in its reasoning.

\subsection{Fairness (Task 4.x)}
\label{app:task4}

\paragraph{Data and grouping.}
We reuse the DDI-based 4-way MCQAs and group images according to Fitzpatrick skin type (FST I--V).

\paragraph{Fairness metric.}
Let $\mathrm{Acc}_k$ denote the model accuracy for each group. Fairness is defined as:
\[
\mathrm{Fairness} \;=\; \frac{\min_k \mathrm{Acc}_k}{\max_k \mathrm{Acc}_k}.
\]
This metric achieves higher values when overall performance is high and performance gaps across skin tone groups are small.
In addition to this primary metric, we also report per-group accuracies to avoid misinterpretations where "ratios mask absolute performance differences".

\subsection{Gold standard annotation protocol for the 900-case core set}
\label{app:core900_protocol}

\begin{figure*}[ht]
  \centering
  \includegraphics[width=\linewidth]{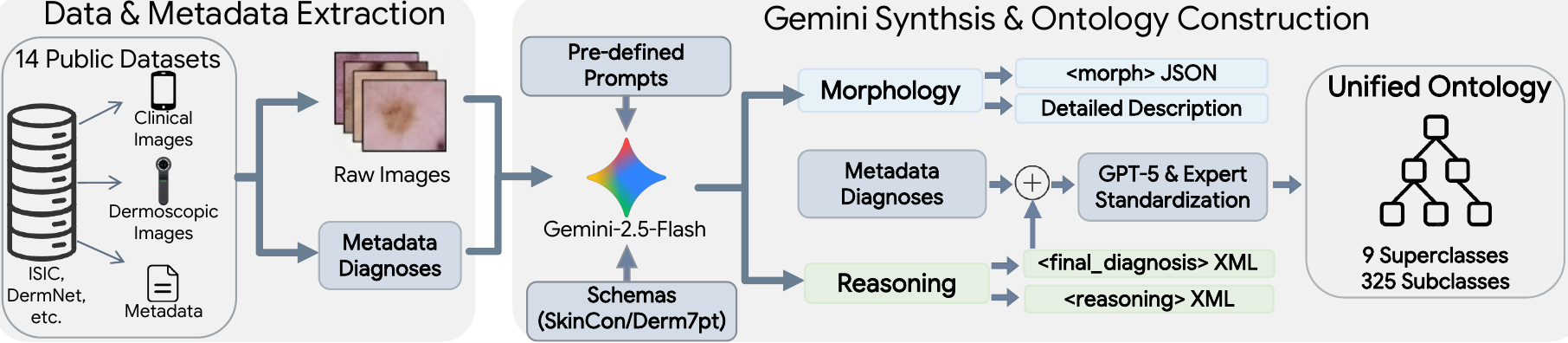}
  \label{fig:vis1}
  \caption{Construction pipeline for DermoInstruct. We aggregate 14 source datasets, apply leakage controls and de-duplication, then generate morphology- and reasoning-grounded instruction pairs using a SOTA multimodal LLM (Gemini-2.5-Flash).
  The final training subset of DermoInstruct dataset contains 646k high-quality image-instruction pairs.
  }
\end{figure*}

\paragraph{Step 1: Draft generation.}
Three types of drafts are generated for each image: (i) morphological report, (ii) attribute JSON (Derm7pt/SkinCon schema), and (iii) reasoning chain + final diagnosis.

\paragraph{Step 2: Clinical line-by-line revision.}
Two dermatologists conduct line-by-line review and revision of the drafts, with focus on correcting:
(a) invisible or exaggerated morphological descriptions; (b) JSON field values inconsistent with definitions; (c) reasoning inconsistent with morphological evidence; (d) diagnoses unsupported by the evidence chain.

\paragraph{Step 3: Consistency and format validation.}
We perform format validation (tag/JSON parseability) and consistency checks (morphology$\leftrightarrow$reasoning$\leftrightarrow$diagnosis) for all samples.
If conflicts are detected, we return to Step 2 and iterate until all checks pass.

\paragraph{Step 4: Quality spot-checking and documentation.}
A random subset undergoes dual review by two annotators, with common error patterns documented and revision guidelines updated to ensure annotation consistency and scalability.

\subsection{Concept bottleneck tasks}
\label{app:concept_bottleneck}

\paragraph{Task motivation.}
T1.2 and T3.2 enforce the output of standardized morphological concepts (the \texttt{<morph>} JSON), using “interpretable morphological evidence” as a diagnostic intermediate bottleneck. This extends evaluation from merely “whether the answer is correct” to “whether the evidence chain is auditable and self-consistent.”

\paragraph{Output format and ordering constraints.}
Both tasks require outputting parseable JSON enclosed within \texttt{<morph>}...\texttt{</morph>} tags:
(i) T1.2: the \texttt{<morph>} tag is placed before the morphological report;
(ii) T3.2: output the \texttt{<morph>} JSON right after the \texttt{<reasoning>} paragraph, and finally the \texttt{<final\_diagnosis>}.

\section{Prompt Templates and Example Outputs for DermoInstruct}
\label{app:prompts}

\subsection{Morphology and Reasoning Supervision}

We obtain morphology-centric supervision by querying a SOTA multimodal LLM, Gemini-2.5-Flash \citep{gemini}, with a small set of templates for every image. For each case, the model is asked to (i) describe the lesion in free text, (ii) output a structured set of morphology attributes, and (iii) perform step-by-step diagnostic reasoning that ends in a final diagnosis chosen from a candidate list. This provides a unified image-to-text pipeline whose outputs are reused across DermoInstruct and DermoBench.

We distinguish clinical and dermoscopic images only through the morphology schema. For clinical photographs, prompts align Gemini’s outputs with the 48 SkinCon concepts \citep{skincon}, returning a short report plus a JSON object indicating which attributes are present. For dermoscopic photographs, we instead condition on the seven-point checklist \citep{derm7pt} to obtain an analogous JSON over dermoscopic structures and a brief dermoscopy report. In both cases, the model must first commit to morphology before predicting any disease label. We then augment each case with chain-of-thought (CoT) supervision \citep{cot}: given the image, the morphology JSON, and a small candidate set of diagnoses derived from metadata and our ontology (Sec.~2.1.3), Gemini produces a reasoning paragraph and a \texttt{<final\_diagnosis>} tag selecting one fine-grained diagnosis.
\subsection{Morphology JSON Prompts}

\subsubsection{Clinical Images (SkinCon)}
% \onecolumn  % 从这里开始附录单栏排版
\begin{promptbox}{SkinCon clinical prompt (system + user)}
PROMPT_DICT = { 
 "system_prompt": "You are an expert in dermatology. Your task is to perform a detailed visual analysis of a provided skin lesion image (clinical or dermoscopic). You will be given an image of a skin lesion and a predefined list of 48 standardized clinical concepts from the SkinCon dataset. Your task is to analyze the image, describe it clinically, and then map the observed features to the provided SkinCon concepts. Any features you observe that are not on the list must be categorized separately. Your output must be a single, clean JSON object and nothing else.", 
 "user_prompt": "Analyze the provided skin lesion image using the established SkinCon vocabulary. First, perform a detailed, step-by-step visual assessment. Second, generate a single, valid JSON object as your final and ONLY output. Do not include any text, explanations, or markdown formatting outside of the JSON object.\n\n### SkinCon Morphological Concepts List\nHere are the 48 standardized concepts you MUST use for classification:\n1. Abscess\n2. Acuminate\n3. Atrophy\n4. Black\n5. Blue\n6. Brown(Hyperpigmentation)\n7. Bulla\n8. Burrow\n9. Comedo\n10. Crust\n11. Cyst\n12. Dome-shaped\n13. Erosion\n14. Erythema\n15. Excoriation\n16. Exophytic/Fungating\n17. Exudate\n18. Fissure\n19. Flat topped\n20. Friable\n21. Gray\n22. Induration\n23. Lichenification\n24. Macule\n25. Nodule\n26. Papule\n27. Patch\n28. Pedunculated\n29. Pigmented\n30. Plaque\n31. Poikiloderma\n32. Purple\n33. Purpura/Petechiae\n34. Pustule\n35. Salmon\n36. Scale\n37. Scar\n38. Sclerosis\n39. Telangiectasia\n40. Translucent\n41. Ulcer\n42. Umbilicated\n43. Vesicle\n44. Warty/Papillomatous\n45. Wheal\n46. White(Hypopigmentation)\n47. Xerosis\n48. Yellow\n\n### Required JSON Output Structure\nThe JSON object MUST contain exactly three keys:\n1. detailed_description: (String) A comprehensive clinical narrative of the lesion's morphology, including primary lesion type, color, shape, border, surface, and texture.\n2. morphological_features_skincon: (Array of Strings) A list of all observed features that EXACTLY MATCH one or more terms from the 48 SkinCon concepts provided above.\n3. morphological_features_others: (Array of Strings) A list of important observed features that are NOT found in the SkinCon list. If all features are covered by the SkinCon list, this array should be empty [].\n\n ### Examples for Guidance\n\n**INPUT:** [Image of a Psoriasis Plaque] \n**REQUIRED JSON OUTPUT:**\n{\n \"detailed_description\": \"The image shows a sharply demarcated, erythematous plaque with a raised, indurated surface. The lesion is ovoid and its borders are well-defined. The surface is covered by a thick layer of silvery-white, lamellar scales. The perilesional skin appears unremarkable.\",\n \"morphological_features_skincon\": [\n \"Plaque\",\n \"Erythema\",\n \"Scale\"\n ],\n \"morphological_features_others\": [\n \"well-demarcated\",\n \"silvery-white\"\n ]\n}\n\n---\n### YOUR TASK\nNow, for the image I have provided, please perform the same analysis and generate the JSON output. Remember, the JSON object is the only thing you should return.\n" 
}
\end{promptbox}

\begin{promptbox}[colback=gray!5,colframe=gray!60]{SkinCon example JSON}
{""detailed_description"": ""The image shows multiple digits (toes) affected by severe onychodystrophy and prominent periungual inflammation. The nail plates are markedly thickened, opaque, and display significant discoloration, predominantly yellow and brownish hues. Many nails exhibit onycholysis, appearing separated from the nail bed, often with underlying subungual hyperkeratosis. The surrounding periungual skin and distal phalanges are diffusely erythematous, swollen, and indurated, indicative of chronic inflammation. Localized areas of scaling and subtle crusting are also observed on the inflamed periungual tissue."", ""morphological_features_skincon"": [ ""Yellow"", ""Brown(Hyperpigmentation)"", ""Erythema"", ""Scale"", ""Induration"", ""Crust"" ], ""morphological_features_others"": [ ""Onychodystrophy"", ""Onycholysis"", ""Subungual hyperkeratosis"", ""Nail thickening"", ""Opaque nails"", ""Swelling"", ""Periungual inflammation"" ] }
\end{promptbox}

\subsubsection{Dermoscopic Images (Derm7pt)}
\begin{promptbox}{Derm7pt dermoscopic prompt (system + user)}
PROMPT_DICT = { 
 "system_prompt": "You are an expert in dermatology. Your task is to perform a detailed visual analysis of a provided dermoscopic image. You will analyze the image and classify its features according to the 7-point checklist, assigning the single most fitting morphological label to each of the seven criteria. Your output must be a single, clean JSON object and nothing else.", 
 "user_prompt": "Analyze the provided skin lesion image using the established Derm7pt vocabulary. First, perform a detailed, step-by-step visual assessment. Second, for each of the 7 criteria, select the single most appropriate label from the lists provided below. Finally, generate a single, valid JSON object as your final and ONLY output. Do not include any text, explanations, or markdown formatting outside of the JSON object.\n\n### Derm7pt Morphological Concepts and Labels\nYou MUST classify the lesion by selecting exactly one label for each of the 7 criteria:\n\n1. **pigment_network**: [\"absent\", \"typical\", \"atypical\"]\n2. **blue_whitish_veil**: [\"absent\", \"present\"]\n3. **vascular_structures**: [\"absent\", \"arborizing\", \"comma\", \"hairpin\", \"within regression\", \"wreath\", \"dotted\", \"linear irregular\"]\n4. **pigmentation**: [\"absent\", \"diffuse regular\", \"localized regular\", \"diffuse irregular\", \"localized irregular\"]\n5. **streaks**: [\"absent\", \"regular\", \"irregular\"]\n6. **dots_and_globules**: [\"absent\", \"regular\", \"irregular\"]\n7. **regression_structures**: [\"absent\", \"blue areas\", \"white areas\", \"combinations\"]\n\n### Required JSON Output Structure\nThe JSON object MUST contain exactly three keys:\n1. detailed_description: (String) A comprehensive clinical narrative of the lesion's morphology, including primary lesion type, color, shape, border, surface, and texture, justifying your label choices.\n2. morphological_features_Derm7pt: (Object) An object where each key is one of the 7 Derm7pt criteria and its value is a single (String) label selected from the lists above.\n3. morphological_features_others: (Array of Strings) A list of important observed features that are NOT part of the 7-point checklist classification (e.g., symmetry, specific colors). If none, this array should be empty [].\n\n### Examples for Guidance\n\n**INPUT:** [Dermoscopic image of a melanoma]\n**REQUIRED JSON OUTPUT:**\n{\n \"detailed_description\": \"Dermoscopy reveals a chaotic and asymmetrical lesion. The pigment network is thickened and irregular, with variable hole sizes and abrupt cut-offs at the periphery, classifying it as 'atypical'. Irregular streaks are visible radiating from the main body. There are multiple blotches of dark brown and black pigment concentrated in one quadrant, consistent with 'localized irregular' pigmentation. Additionally, a peppering of various-sized gray-black dots and globules is present, indicating an 'irregular' pattern. The lesion also features both scar-like white areas and peppercorn-like blue areas, which points to 'combinations' of regression structures. Abnormal linear irregular vessels are noted. A blue-whitish veil is absent.\",\n \"morphological_features_Derm7pt\": {\n \"pigment_network\": \"atypical\",\n \"blue_whitish_veil\": \"absent\",\n \"vascular_structures\": \"linear irregular\",\n \"pigmentation\": \"localized irregular\",\n \"streaks\": \"irregular\",\n \"dots_and_globules\": \"irregular\",\n \"regression_structures\": \"combinations\"\n },\n \"morphological_features_others\": [\n \"asymmetry\",\n \"chaotic appearance\",\n \"color variegation (dark brown, black, gray-black, white, blue)\"\n ]\n}\n\n---\n### YOUR TASK\nNow, for the image I have provided, please perform the same analysis and generate the JSON output. Remember, the JSON object is the only thing you should return.\n" 
}
\end{promptbox}

\subsection{Chain-of-Thought Reasoning Prompt}
\begin{promptbox}{CoT reasoning prompt (system + user)}
PROMPT_DICT = { 
 "system_prompt": "You are an expert dermatologist AI, acting as a clinical consultant. Your primary task is to analyze a skin lesion image and generate a concise clinical reasoning narrative. You will be provided with potential clinical concepts (which may not be entirely accurate) and a confirmed diagnosis. You must critically evaluate the visual evidence in the image to explain how it supports the diagnosis, adhering to a strict XML format for your output.", 
 "user_prompt_template": "Analyze the provided image and its context. Your entire output must be a structured response containing a reasoning block (<reasoning>) and a final diagnosis block (<final_diagnosis>).\n\n### Input Context\n* **Image:**\n\n* **Potential Clinical Concepts:** {clinical_concepts}\n* **Confirmed Diagnoses:** {diagnoses}\n\n### Your Task\nYour response MUST follow these three rules precisely:\n1. **First,** provide a step-by-step clinical rationale explaining how the visual evidence in the image leads to the confirmed diagnosis. Your explanation should be from the perspective of an expert explaining the case to a colleague. Ground your reasoning in the visual features of the lesion (e.g., shape, color, border, texture, specific structures). Use the 'Potential Clinical Concepts' as a guide, but your primary justification must come from the image itself. Enclose this entire process within <reasoning> and </reasoning> tags.\n2. **Second,** provide the most specific diagnosis from the 'Confirmed Diagnoses' list inside <final_diagnosis> and </final_diagnosis> tags.\n3. **Third,** ensure there is absolutely NO extra text, explanation, or markdown formatting outside of these two required XML tags.\n\n### Example for Guidance\n\n**INPUT CONTEXT:**\n* **Image:** [Dermoscopic image of a melanoma]\n* **Potential Clinical Concepts:** [\"Asymmetry\", \"Irregular Border\", \"Color Variegation (Brown, Black, Blue-Gray)\", \"Atypical Pigment Network\"]\n* **Confirmed Diagnoses:** [\"Malignant\", \"Malignant Melanoma\"]\n\n**REQUIRED OUTPUT:**\n<reasoning>Upon examining the image, the lesion exhibits several hallmark features concerning for malignancy. There is clear asymmetry in its overall shape and the border is poorly defined and irregular, with notches and blurred edges in several areas. I observe significant color variegation, with multiple shades of brown and black, as well as a focal blue-gray area, which is a strong indicator of pigment regression or deep melanin. These observations align with the classic clinical signs for melanoma. The combination of these visual findings provides a strong basis for diagnosing this lesion as a malignant melanoma, differentiating it from a benign nevus.</reasoning><final_diagnosis>Malignant Melanoma</final_diagnosis>\n\n---\n### YOUR TASK\nNow, for the image, concepts, and diagnoses I have provided, generate the response in the required format." 
}
\end{promptbox}
\begin{promptbox}[colback=gray!5,colframe=gray!60]{CoT example XML}
<reasoning>Upon visual inspection, the image displays an erythematous, ill-defined plaque with an irregular shape on the skin. Centrally, there are multiple confluent erosions and ulcerations, appearing moist and suggestive of serous exudate. A yellowish-brown crust is also visible within this central eroded area, indicating dried serous fluid or possibly a secondary bacterial component. The presence of acute erythema, clustered erosions, exudate, and crusting is highly characteristic of an acute viral infection, such as Herpes Simplex Virus. This morphology strongly supports the diagnosis as fitting within the category of "other Viral Infections," as distinct from typical warts or molluscum contagiosum which present differently.</reasoning><final_diagnosis>Warts Molluscum and other Viral Infections</final_diagnosis>
\end{promptbox}

% \subsection{Human Evaluation Details}
% \label{app:human_eval}

% Figure~\ref{fig:appendix_human_ratings} shows the distribution of dermatologist ratings for the 900-image DermoBench core set. Two raters scored each case on a 0--5 Likert scale for morphology faithfulness and reasoning plausibility of the raw Gemini-2.5-Flash outputs (before any manual editing), as described in Sec.~2.1.2.

% \begin{figure}[h]
%   \centering
%   \includegraphics[width=\linewidth]{}
%   \caption{Distribution of dermatologist ratings (0--5) for morphology faithfulness and reasoning plausibility of raw Gemini-2.5-Flash outputs on the 900-image DermoBench core set.}
%   \label{fig:appendix_human_ratings}
% \end{figure}

\subsection{Diagnosis VQA prompt templates}
\label{app:diagnosis_prompts}

Using the ontology described above, we synthesize diagnosis VQA items in two forms. First, for flat four-way MCQA questions, we sample one ground-truth diagnosis and three ontology-consistent distractors (typically siblings or closely related conditions), and render them as options A--D. The question stem is drawn at random from a small pool of interchangeable prompts that ask the model to choose the most likely diagnosis. This yields diverse yet semantically equivalent formulations while keeping the underlying label space fixed.

Second, for hierarchical diagnosis VQA, we traverse the ontology level by level. At each step, we present the image and a set of candidate categories, and instantiate one of several templated prompts for (i) selecting a top-level superclass, (ii) refining the choice within its subcategories, and (iii) choosing a final leaf diagnosis. Additional declarative prompts are used to convert the completed path into a natural-language statement of the final diagnosis, and a small set of ``human correction'' prompts supports expert editing when the automatically proposed path is incorrect.

Together, these instruction types give dense supervision over both \emph{what} diagnosis to output and \emph{how} to traverse and correct a hierarchical diagnostic reasoning process.

\begin{promptbox}{PROMPTS for 4-way diagnosis MCQA}
PROMPTS = [
    "Observe this skin image. Which of the following diagnoses is the most likely?",
    "Based on the skin lesion shown in this image, please select the most accurate diagnosis from the options below.",
    "Which of the following diagnoses best matches the skin condition shown in this image?",
    "Considering the clinical presentation of the skin lesion in the image, which of the following is the most likely diagnosis?"
]
\end{promptbox}

\begin{promptbox}{TOP\_LEVEL\_PROMPTS\_GEN}
TOP_LEVEL_PROMPTS_GEN = [
    "Based on the clinical image, identify the most fitting major dermatological category from the following list: {options_list}.",
    "Observe the skin lesion. Which of these high-level classifications best describes it? Here are the possibilities: {options_list}.",
    "Please provide a broad categorization for the skin condition shown. Your answer should be one of the following: {options_list}."
]
\end{promptbox}

\begin{promptbox}{SUB\_LEVEL\_PROMPTS\_GEN}
SUB_LEVEL_PROMPTS_GEN = [
    "Correct, the condition is a form of '{parent_category}'. Now, specify the sub-category from this list: {options_list}.",
    "Proceeding from '{parent_category}', which of the following groups does this lesion belong to? {options_list}.",
    "Understood. Let's refine the diagnosis within '{parent_category}'. Please choose the most accurate description from the following: {options_list}."
]
\end{promptbox}

\begin{promptbox}{FINAL\_LEVEL\_PROMPTS\_GEN}
FINAL_LEVEL_PROMPTS_GEN = [
    "We've classified this under '{parent_category}'. Now, provide the definitive diagnosis from the choices available: {options_list}.",
    "Excellent. To finalize, please state the specific diagnosis for '{parent_category}', which should be one of the following: {options_list}.",
    "Perfect. Based on our hierarchical classification ending with '{parent_category}', please identify the definitive diagnosis from this list: {options_list}."
]
\end{promptbox}

\begin{promptbox}{DECLARATIVE\_PROMPTS}
DECLARATIVE_PROMPTS = [
    "Following the diagnostic path to '{parent_category}', the evidence points to a single definitive diagnosis, which is {final_diagnosis}.",
    "Correct. The reasoning has led us to '{parent_category}', which contains only one specific condition. Therefore, the diagnosis must be {final_diagnosis}.",
    "Excellent. Since '{parent_category}' is the most specific category and it corresponds to a single diagnosis, we can conclude the condition is {final_diagnosis}."
]
\end{promptbox}

\begin{promptbox}{HUMAN\_CORRECTION\_PROMPTS}
HUMAN_CORRECTION_PROMPTS = [
    "That's not quite right. While '{wrong_choice}' is a possibility, the visual evidence points more strongly to '{correct_choice}'. Let's proceed with the correct category.",
    "Actually, that's incorrect. A closer look reveals features more consistent with '{correct_choice}'. Please correct the path.",
    "Incorrect. The diagnosis should be '{correct_choice}', not '{wrong_choice}'. Let's continue from the right category.",
    "I disagree. '{correct_choice}' is the more accurate classification here. Let's use that one instead."
]
\end{promptbox}

\section{LLM-as-a-Judge Prompts}
\label{app:judge_prompts}
We use a text-only LLM-as-a-Judge protocol: the judge \emph{does not see the image} and evaluates by comparing the \textbf{REFERENCE} text versus the \textbf{CANDIDATE} text under a strict dermatology morphology rubric.
All tasks output a scalar \texttt{final\_overall} in $[0,100]$ and we report \texttt{mean\_final\_overall} in the main paper.

\subsection{Task 1.1 (Morph Description)}

\begin{promptbox}{Task 1.1 -- SYSTEM PROMPT}
You are a strict, no-nonsense clinical dermatology evaluator.
You DO NOT see the image; evaluate ONLY by comparing the REFERENCE vs the CANDIDATE text.
Use dermatology morphology standards. Avoid rewarding verbosity; penalize contradictions and invented findings.
Focus on: anatomical site, number/arrangement, primary lesion types, color, shape, borders, surface features, size/extent,
distribution/pattern, and special/contextual features (e.g., pen markings, dermoscopic 7-point structures if applicable).
Return STRICT JSON only.
\end{promptbox}

\begin{promptbox}{Task 1.1 -- USER PROMPT TEMPLATE}
[Task Prompt]
\{task\_prompt\}

[REFERENCE]
\{reference\}

[CANDIDATE]
\{candidate\}

Evaluate as follows:
1) Decompose REFERENCE into <=25 atomic CLAIMS.
2) For each CLAIM, label wrt CANDIDATE: Supported, PartiallySupported, Contradicted, Missing, or Vague.
3) Identify any EXTRA INCORRECT statements in CANDIDATE.
4) Score:
   recall_like = (Supported + 0.5*PartiallySupported) / max(1, total_ref_claims)
   precision_penalty = min(1.0, (Contradicted + ExtraIncorrect) / max(1, total_ref_claims))
   overall [0-100] = round(100 * max(0, recall_like - 0.5*precision_penalty), 1)
   Provide rubric sub-scores (accuracy, completeness, consistency) in [0,1].

JSON ONLY. Schema:
\{
  "claims": [\{"text":"...","label":"Supported|PartiallySupported|Contradicted|Missing|Vague"\}],
  "counts": \{"supported":0,"partial":0,"contradicted":0,"missing":0,"vague":0,"extra\_incorrect":0,"total\_ref\_claims":0\},
  "rubric": \{"accuracy":0.0,"completeness":0.0,"consistency":0.0\},
  "overall": 0.0,
  "short\_feedback": "<=40 words concise justification"
\}
\end{promptbox}

\subsection{Task 1.2 (Morph Content + Narrative)}

\begin{promptbox}{Task 1.2 -- SYSTEM PROMPT}
You are a strict dermatology evaluator for Task 1.2 (morph content + narrative).
You DO NOT see the image. Focus on CONTENT, not formatting.
Both REFERENCE and CANDIDATE may or may not wrap the morph JSON in <morph> tags.
Do NOT penalize missing tags, extra whitespace, or minor ordering/format differences.
If a JSON block is present anywhere, treat the FIRST JSON object as the morph content.
If no JSON is present, infer the morph feature set from the surrounding text.
Schemas you may encounter:
- SkinCon: \{"morphological\_features\_skincon": [<feature strings>]\}
- Derm7pt: \{"morphological\_features\_Derm7pt": \{pigment\_network, blue\_whitish\_veil, vascular\_structures, pigmentation, streaks, dots\_and\_globules, regression\_structures\}\}
For the narrative comparison, use dermatology morphology standards (site, number/arrangement, primary lesion types, color, shape, borders, surface features, size/extent, distribution/pattern, special/context).
Also check CROSS-CONSISTENCY between the CANDIDATE morph content and CANDIDATE narrative.
Return STRICT JSON only.
\end{promptbox}

\begin{promptbox}{Task 1.2 -- USER PROMPT TEMPLATE}
You will be given REFERENCE and CANDIDATE texts.
Each may contain a morph JSON (SkinCon or Derm7pt) with or without <morph> tags,
possibly followed by a narrative paragraph. Do NOT penalize formatting.
Rules:
- If a JSON object appears anywhere, treat the FIRST JSON object as the morph content.
- If no JSON is found, infer the morph feature set from the surrounding text (best-effort).
- Use synonyms tolerance for semantic matching.

[Task Prompt]
\{task\_prompt\}

[REFERENCE]
\{reference\}

[CANDIDATE]
\{candidate\}

Your tasks:
1) MORPH SEMANTICS (content-first): Compare CANDIDATE-morph vs REFERENCE-morph semantically (synonyms allowed).
   Count supported/missing/contradicted/extra and give a semantic score in [0,1].
   If CANDIDATE has no explicit JSON, infer its morph set from the candidate text.

2) TEXT (NARRATIVE): Compare REFERENCE-narrative vs CANDIDATE-narrative using morphology standards.
   Extract <=25 atomic claims from the REFERENCE-narrative; for each, label CANDIDATE as Supported/PartiallySupported/Contradicted/Missing/Vague.
   Provide rubric sub-scores (accuracy, completeness, consistency) in [0,1] and overall [0,100] using:
   recall_like = (Supported + 0.5*PartiallySupported) / max(1, total_ref_claims)
   precision_penalty = min(1.0, (Contradicted + ExtraIncorrect) / max(1, total_ref_claims))
   overall = round(100 * max(0, recall_like - 0.5*precision_penalty), 1)

3) CROSS-CONSISTENCY: Judge if the CANDIDATE narrative contradicts the CANDIDATE morph content.
   Output a penalty in [0,1] (0=no issue, 1=severe) and short notes.

Output STRICT JSON:
\{
  "morph\_semantic": \{
    "schema": "SkinCon" | "Derm7pt" | "Unknown",
    "supported": 0, "missing": 0, "contradicted": 0, "extra": 0,
    "score\_semantic": 0.0,
    "notes": "<=60 words"
  \},
  "text\_judge": \{
    "claims": [\{"text":"...","label":"Supported|PartiallySupported|Contradicted|Missing|Vague"\}],
    "counts": \{"supported":0,"partial":0,"contradicted":0,"missing":0,"vague":0,"extra\_incorrect":0,"total\_ref\_claims":0\},
    "rubric": \{"accuracy":0.0,"completeness":0.0,"consistency":0.0\},
    "overall": 0.0,
    "short\_feedback": "<=40 words"
  \},
  "cross\_consistency": \{"penalty": 0.0, "notes": "<=40 words"\}
\}
\end{promptbox}

\subsection{Task 3.1 (Reasoning + Final Diagnosis)}

\begin{promptbox}{Task 3.1 -- SYSTEM PROMPT}
You are a strict dermatology evaluator for Task 3 (reasoning + final diagnosis).
You DO NOT see the image; evaluate ONLY the textual content. Ignore formatting and tags.
Goal: robustly extract (A) the candidate's reasoning and (B) the candidate's final diagnosis,
then score (1) REASONING ALIGNMENT vs the GT reasoning and (2) DIAGNOSIS SIMILARITY vs the GT final diagnosis.
Penalize contradictions and hallucinated findings. Do not reward verbosity. Return STRICT JSON only.
\end{promptbox}

\begin{promptbox}{Task 3.1 -- USER PROMPT TEMPLATE}
[Task Prompt]
\{task\_prompt\}

[GROUND\_TRUTH\_RAW]
\{reference\}

[CANDIDATE\_RAW]
\{candidate\}

Evaluate with these steps (format-agnostic; focus on content):
A) Extraction (be robust even if the candidate is unstructured):
   - From GROUND_TRUTH_RAW, extract:
     gt_reasoning: inside <reasoning>...</reasoning> if present; else best-effort summary.
     gt_final_dx: inside <final_diagnosis>...</final_diagnosis> if present; else best-effort label.
   - From CANDIDATE_RAW, extract:
     cand_reasoning: the explanation/rationale (anywhere).
     cand_final_dx: the single most likely final diagnosis term/phrase.

B) Reasoning Alignment:
   - Decompose gt_reasoning into <=25 atomic claims.
   - For each claim, label wrt cand_reasoning: Supported | PartiallySupported | Contradicted | Missing | Vague.
   - Compute reasoning_score [0-100] using the same recall/penalty formula.

C) Diagnosis Similarity (graded, not binary):
   - Decide relation: Exact | Synonym | Parent | Child | Sibling/CloseDifferential | SameSuperfamily | UnrelatedPlausible | WrongSystem | Nonsense/NoAnswer.
   - Map to similarity in [0,1] and compute diagnosis_score [0-100].

D) Overall:
   - overall [0-100] = round(0.5 * reasoning_score + 0.5 * diagnosis_score, 1)

STRICT JSON ONLY (use the specified schema in the paper).
\end{promptbox}

\subsection{Task 3.2 (Morph-grounded Reasoning)}

\begin{promptbox}{Task 3.2 -- SYSTEM PROMPT}
You are a strict dermatology evaluator for Task 3.2 (reasoning + morph JSON + final diagnosis).
You DO NOT see the image. Focus on CONTENT, not formatting.
Both REFERENCE and CANDIDATE may or may not wrap the morph JSON in <morph> tags.
Do NOT penalize missing tags, extra whitespace, or ordering differences.
If a JSON object appears anywhere, treat the FIRST JSON object as the morph content.
If no JSON is present, infer the morph feature set from the surrounding text.
SCHEMA SELECTION RULE: Detect the schema used by REFERENCE. Compare and output using the SAME schema.
\end{promptbox}

\begin{promptbox}{Task 3.2 -- USER PROMPT TEMPLATE}
You will be given REFERENCE and CANDIDATE texts containing three conceptual parts: <reasoning>, <morph> JSON, and <final_diagnosis>.
Be format-agnostic; extract content even when tags are missing or order differs.

Allowed schemas:
- Derm7pt (object with EXACT keys): pigment_network, blue_whitish_veil, vascular_structures, pigmentation, streaks, dots_and_globules, regression_structures
- SkinCon (array of strings only): \{"morphological\_features\_skincon": [ ... ]\} from a CLOSED set.

SCHEMA SELECTION:
- Detect the schema used by REFERENCE (Derm7pt vs SkinCon). Use that schema for extraction/normalization and comparison. Do NOT switch schemas.

[Task Prompt]
\{task\_prompt\}

[REFERENCE]
\{reference\}

[CANDIDATE]
\{candidate\}

Tasks:
A) EXTRACTION: reasoning, morph (normalized to REFERENCE schema), final_dx for both sides.
B) REASONING ALIGNMENT: compute reasoning_score [0-100].
C) MORPH SEMANTICS: score_semantic in [0,1].
D) DIAGNOSIS SIMILARITY: diagnosis_score [0-100].
E) CROSS-CONSISTENCY: penalty in [0,1] if candidate reasoning contradicts candidate morph JSON.

STRICT JSON ONLY (use the specified schema in the paper).
\end{promptbox}

\subsection{Judge Reliability and Human Sanity Check}
\label{app:judge_validation}

\subsubsection{Judge sensitivity on the 900-case core set}
Table~\ref{tab:judge_sensitivity_900} reports \texttt{mean\_final\_overall} on the 900-case core set when swapping the judge between Gemini-2.5-Pro (main paper default) and GPT-5.
This comparison is intended as a robustness check for evaluator choice rather than a replacement of the main evaluation protocol.

\begin{table}[h]
  \centering
  \small
  \resizebox{\columnwidth}{!}{
  \begin{tabular}{l l c c c c}
    \toprule
    Candidate model & Judge & T1.1 & T1.2 & T3.1 & T3.2 \\
    \midrule
    Qwen3-VL-8B & Gemini-2.5-Pro & 33.18 & 46.05 & 47.53 & 53.43 \\
    Qwen3-VL-8B & GPT-5          & 37.73 & 43.92 & 51.08 & 59.81 \\
    GPT-4o-mini & Gemini-2.5-Pro & 34.55 & 51.80 & 42.83 & 51.65 \\
    GPT-4o-mini & GPT-5          & 31.32 & 47.82 & 45.28 & 49.17 \\
    \bottomrule
  \end{tabular}
  }
  \caption{Judge sensitivity on the 900-case core set (reported as \texttt{mean\_final\_overall} in $[0,100]$).}
  \label{tab:judge_sensitivity_900}
\end{table}

\subsubsection{Aggregate-level inter-judge agreement metrics}
Using the 8 paired items in Table~\ref{tab:judge_sensitivity_900} (2 candidate models $\times$ 4 tasks), we compute rank/absolute agreement metrics between GPT-5 and Gemini-2.5-Pro judge scores.
Results indicate strong agreement at the level of model-task means.

\begin{table}[h]
  \centering
  \small
  \begin{tabular}{l c}
    \toprule
    Metric & Value \\
    \midrule
    Pearson $r$ & 0.883 \\
    Spearman $\rho$ & 0.857 \\
    Mean difference (GPT-5 $-$ Gemini) & +0.65 \\
    Mean absolute difference (MAE) & 3.60 \\
    \bottomrule
  \end{tabular}
  \caption{Inter-judge agreement between GPT-5 and Gemini-2.5-Pro computed over the 8 paired model-task means in Table~\ref{tab:judge_sensitivity_900}.}
  \label{tab:judge_agreement_metrics}
\end{table}

\subsection{Human sanity check (20 cases)}
We further sample 20 cases from \texttt{Qwen3-VL-8B + Gemini-2.5-Pro} and ask clinicians to rate whether the judge scoring and feedback are reasonable on a $0$--$5$ scale (higher is more reasonable).
Figure~\ref{fig:2}c summarizes the reasonableness ratings.
% Table~\ref{tab:human_sanity_check} summarizes the average reasonableness ratings.

% \begin{table}[h]
%   \centering
%   \small
%   \begin{tabular}{l c c c c}
%     \toprule
%     & T1.1 & T1.2 & T3.1 & T3.2 \\
%     \midrule
%     Avg.\ reasonableness & 4.2 & 4.6 & 3.9 & 4.3 \\
%     \bottomrule
%   \end{tabular}
%   \caption{Human reasonableness ratings ($0$--$5$) for the Gemini-based judge on 20 sampled cases (higher is more reasonable).}
%   \label{tab:human_sanity_check}
% \end{table}

\section{Training Details}
\label{app:train_details}
\subsection{Hyperparameters}

\paragraph{Backbone and precision.}
We initialize from \texttt{Qwen3-VL-8B-Instruct}, train with Deepspeed ZeRO-2,
and use BF16 with TF32 enabled. FlashAttention-2 is used unless stated
otherwise. Gradient checkpointing is enabled in both stages.

\paragraph{Stage 1: Supervised fine-tuning (SFT).}
We perform one epoch of multi-task SFT on the merged instruction data.
We enable LoRA adapters with rank $r{=}64$, $\alpha{=}64$, dropout $0.05$, and
exclude \texttt{lm\_head} and \texttt{embed\_tokens} from LoRA injection.
We freeze the language model backbone (\texttt{freeze\_llm=True}), while keeping
the vision tower and merger trainable (\texttt{freeze\_vision\_tower=False},
\texttt{freeze\_merger=False}).
We set per-device batch size to $8$ on $8$ GPUs with gradient accumulation
steps $2$ (global batch size $128$). We train with learning rate $1\mathrm{e}{-4}$,
and optionally use module-specific learning rates for the vision tower
($2\mathrm{e}{-6}$) and the merger ($1\mathrm{e}{-5}$). Weight decay is $0.1$,
warmup ratio is $0.03$, and we use a cosine scheduler. Images are resized by
pixel constraints with \texttt{image\_min\_pixels} $=256\cdot32^2$ and
\texttt{image\_max\_pixels} $=1280\cdot32^2$. Unless otherwise specified, we use
the training framework's default AdamW-type optimizer settings.

\paragraph{Stage 2: GRPO with MAVIC reward.}
We further optimize the SFT checkpoint with GRPO using group size
$K{=}\texttt{num\_generations}=8$. We train for one epoch with per-device batch
size $32$ and gradient accumulation steps $3$. We sample completions with
temperature $1.0$, top-$p$ $1.0$, and top-$k$ $50$, using maximum prompt length
$4096$ and maximum completion length $640$. We set learning rate to
$1\mathrm{e}{-6}$, weight decay to $0.1$, warmup ratio to $0.03$, and cosine
scheduler. We use \texttt{beta}=0.1 for GRPO's KL regularization. In this stage,
we freeze the vision tower, language model, and merger, and train only LoRA
adapters (LoRA rank $16$, $\alpha=32$, dropout $0.05$, excluding
\texttt{lm\_head} and \texttt{embed\_tokens}). Images are constrained by
\texttt{image\_min\_pixels} $=256\cdot28^2$ and \texttt{image\_max\_pixels}
$=1280\cdot28^2$.

\begin{table}[ht]
\centering
\small
\resizebox{\columnwidth}{!}{
\begin{tabular}{lcc}
\toprule
Hyperparameter & SFT & GRPO \\
\midrule
GPUs & 8 & 8 \\
Epochs & 1 & 1 \\
Per-device batch & 8 & 32 \\
Grad.\ accumulation & 2 & 3 \\
Global batch & 128 & 768 \\
LoRA rank / $\alpha$ & 64 / 64 & 16 / 32 \\
LoRA dropout & 0.05 & 0.05 \\
Backbone frozen? & LLM frozen & LLM/Vision/Merger frozen \\
LR & $1\mathrm{e}{-4}$ & $1\mathrm{e}{-6}$ \\
Vision LR / Merger LR & $2\mathrm{e}{-6}$ / $1\mathrm{e}{-5}$ & -- \\
Weight decay & 0.1 & 0.1 \\
Warmup / Scheduler & 0.03 / cosine & 0.03 / cosine \\
Group size $K$ & -- & 8 \\
Sampling & -- & $T{=}1.0$, top-$p{=}1.0$, top-$k{=}50$ \\
Max prompt / completion & -- & 4096 / 640 \\
KL coef.\ (\texttt{beta}) & -- & 0.1 \\
\bottomrule
\end{tabular}}
\caption{Key hyperparameters for SFT and RL training.}
\label{tab:train_hparams}
\end{table}

% -------------------- Appendix: MAVIC details --------------------
\subsection{MAVIC Implementation Details}
\label{app:mavic_details}

\paragraph{Morphology representation (tokens).}
Each completion must contain a structured morphology field encoded as JSON under
a \texttt{<morph>} tag. For dermoscopic images, we use Derm7pt-style attributes
\citep{derm7pt}; for clinical images, we use SkinCon-style attributes
\citep{skincon}. We binarize morphology into a vector
$\mathbf m\in\{0,1\}^{F}$, where each dimension $f$ corresponds to an attribute
indicator. For Derm7pt, we expand categorical states into attribute-state
indicators (e.g., \texttt{streaks\_irregular}); for SkinCon, each label is an
indicator.

\paragraph{PMI-based weights (precomputed lookup).}
Because each training sample has a known leaf diagnosis $y$, we precompute
diagnosis-conditioned weights $w_f(y)$ \emph{once} before RL training. We estimate
PMI with log and $\epsilon=10^{-5}$ smoothing and keep negative values:
\begin{equation}
\label{eq:pmi}
\mathrm{PMI}(m_f;y)
=
\log\frac{\hat p(m_f{=}1,y)+\epsilon}{\hat p(m_f{=}1)\hat p(y)+\epsilon}.
\end{equation}
We then normalize per diagnosis with a softmax over features:
\begin{equation}
\label{eq:pmi_softmax}
w_f(y)=\frac{\exp(\mathrm{PMI}(m_f;y))}{\sum_{f'}\exp(\mathrm{PMI}(m_{f'};y))}.
\end{equation}
During RL, $w_f(y)$ is obtained by table lookup.

\paragraph{Morphology similarity $S_{\text{morph}}$.}
Let $P$ and $G$ be the predicted and ground-truth sets of active morphology
indicators. We compute a PMI-weighted Tversky score with $\alpha=0.7,\beta=0.3$:
\begin{equation}
\begin{aligned}
\mathrm{TP} &= \sum_f w_f \mathbf{1}[\hat{m}_f = 1 \land m_f = 1],\\
\mathrm{FP} &= \sum_f w_f \mathbf{1}[\hat{m}_f = 1 \land m_f = 0],\\
\mathrm{FN} &= \sum_f w_f \mathbf{1}[\hat{m}_f = 0 \land m_f = 1].
\end{aligned}
\vspace{-3mm}
\end{equation}
\begin{equation}
S_{\mathrm{morph}}(\hat{\mathbf{m}}, \mathbf{m})
= 
\frac{\mathrm{TP}}{\mathrm{TP} + \alpha \mathrm{FP} + \beta \mathrm{FN}}.
\end{equation}

\paragraph{Hierarchy similarity $S_{\text{hier}}$.}
We map a diagnosis to its taxonomy path (ancestors) and append the leaf label to
the end of the path. We compute Wu--Palmer similarity:
\begin{equation}
\label{eq:wup}
S_{\text{hier}}=
\frac{2\cdot \mathrm{depth}(\mathrm{LCA}(\text{path}_{pred},\text{path}_{gt}))}
{|\text{path}_{pred}|+|\text{path}_{gt}|}.
\end{equation}
When parsing model outputs, we canonicalize strings and use alias/fuzzy matching
(threshold $0.8$) to map predictions to taxonomy leaves.

\paragraph{Soft gate.}
Within each GRPO sampling group (size $K$), we set $\mu$ as the median $S_{\text{hier}}$ and apply the sigmoid gate with $k=10$.

\paragraph{Format term $R_{\text{fmt}}$.}
$R_{\text{fmt}}\in\{0,1\}$ indicates whether the completion satisfies required
tag structure and JSON validity: (i) presence of required tags
(e.g., \texttt{<morph>} and, for reasoning tasks, \texttt{<final\_diagnosis>});
(ii) parseable JSON under \texttt{<morph>}; (iii) exactly one valid schema
(Derm7pt \emph{or} SkinCon); (iv) schema matches image modality; and (v) tag
ordering constraints when applicable. Invalid outputs receive $R_{\text{fmt}}=0$.

\paragraph{Hyperparameters.}
We use $\lambda_{\text{hier}}=\lambda_{\text{morph}}=1$, $\alpha=0.7$,
$\beta=0.3$, $\epsilon=10^{-5}$, fuzzy threshold $0.8$, and gate slope $k=10$.

% \begin{table*}[h]
% \centering
% \small
% \begin{tabular}{lcccc}
% \toprule
% Setting & Task1.1 $\uparrow$ & Task1.2 $\uparrow$ & Task3.1 $\uparrow$ & Task3.2 $\uparrow$ \\
% \midrule
% SFT only & 41.74 & 49.11 & 62.57 & 63.34 \\
% GRPO (acc+fmt) & 35.13 & 41.20 & 61.34 & 59.88 \\
% w/o $S_{\text{morph}}$ & 39.65 & 48.09 & 65.40 & 65.27 \\
% w/o $S_{\text{hier}}$ & 42.59 & 50.11 & 63.96 & 65.02 \\
% w/o gate ($g\!=\!1$) & 43.26 & 56.03 & 66.71 & 63.89 \\
% PMI$\rightarrow$uniform $w_f$ & 42.56 & 56.98 & 57.32 & 56.64 \\
% Full MAVIC (PMI + gate) & \textbf{43.93} & \textbf{59.29} & \textbf{66.04} & \textbf{65.48} \\
% \bottomrule
% \end{tabular}
% \caption{MAVIC ablations under the same TRL-GRPO setup (group size $K{=}8$).}
% \label{tab:mavic_ablation}
% \end{table*}

\section{Ablation Study}
\label{app:tta-ablation}

\subsection{Impact of MAVIC Reward Components}
As shown in Table~\ref{tab:mavic_ablation}, using standard reinforcement learning rewards alone (acc+fmt) actually degrades performance on T3.2 (59.88). Incorporating morphological similarity reward $S_{\text{morph}}$ and hierarchical diagnosis reward $S_{\text{hier}}$ steadily improves scores to 65.48. Crucially, the combination of $S_{\text{morph}}$ with the logical gating mechanism $g(S_{\text{hier}})$ effectively prevents models from bypassing pathological features to make uninformed diagnostic guesses.

\subsection{Ablation of Confidence--Consistency Components}
\paragraph{Setup.}
We evaluate test-time adaptation (TTA) under the same deterministic decoding setting as the main paper (temperature $=0$). The only source of diversity is prompt paraphrasing: we use $K$ prompt variants per example (including the original prompt), and aggregate MCQA option probabilities derived from the first-step logits.

\paragraph{Baselines.}
We compare against standard, simpler ensemble decoding variants:
(i) \textbf{Single} ($K{=}1$), no TTA;
(ii) \textbf{Vote}, majority vote over predicted option letters across prompts;
(iii) \textbf{MeanProb}, unweighted averaging of option probability vectors $\mathbf{p}_r$;
(iv) \textbf{ConfOnly}, weights based on confidence margin only ($\beta{=}0$);
(v) \textbf{ConsOnly}, weights based on consistency only (drop $\tilde{C}_r$ term);
(vi) \textbf{CC (Ours)}, full confidence--consistency weighting.

% \begin{table*}[t]
%   \centering
%   \small
%   \begin{tabular}{lcccc}
%     \toprule
%     Method & Task2.1 (ID) $\uparrow$ & Task2.1 (OOD) $\uparrow$ & Task2.2 (Hier.) $\uparrow$ & Task4 (Fair.) $\uparrow$ \\
%     \midrule
%     Single ($K{=}1$)          & 77.45 & 65.27 & 79.12 & 93.49 \\
%     Vote ($K{=}4$)            & 78.10 & 65.83 & 79.15 & 93.50 \\
%     MeanProb ($K{=}4$)        & 77.95 & 65.69 & 79.51 & 93.77 \\
%     ConfOnly ($K{=}4$)        & 78.40 & 66.47 & 79.49 & 93.09 \\
%     ConsOnly ($K{=}4$)        & 78.35 & 66.59 & 79.82 & 93.58 \\
%     CC (Ours, $K{=}4$)        & \textbf{78.80} & \textbf{67.49} & \textbf{80.31} & \textbf{93.76} \\
%     \bottomrule
%   \end{tabular}
%   \caption{Ablation of confidence and consistency components for MCQA diagnosis aggregation.}
%   \label{tab:tta_component_ablation}
% \end{table*}

\paragraph{Sensitivity to $K$ and hyperparameters.}
We further vary the number of prompt variants $K$ and the confidence exponent $\alpha$ / consistency weight $\beta$.

\begin{table}[t]
  \centering
  \small
  \begin{tabular}{lcc}
    \toprule
    $K$ & Task2.4 (OOD) $\uparrow$ & Task4 (Fair.) $\uparrow$ \\
    \midrule
    2  & 65.82 & 93.81 \\
    4  & 66.27 & 93.76 \\
    8  & \textbf{66.48} & \textbf{93.88} \\
    \bottomrule
  \end{tabular}
  \caption{Sensitivity to the number of prompt variants $K$.}
  \label{tab:tta_k_sensitivity}
\end{table}

% Optional: If you want to explicitly address reviewer requests about "simpler MI/temp self-consistency",
% you can add the following row(s) once you have results.
% \paragraph{Optional comparison to temperature-based self-consistency.}
% We also report temperature-based self-consistency with $N$ samples (Temp-SC) as a stronger but stochastic baseline.

\paragraph{Takeaway.}
Across datasets, the gains of CC aggregation cannot be explained solely by using more prompts ($K$), and persist after controlling for simpler voting/averaging baselines, supporting the claim that \emph{confidence} and \emph{consistency} provide complementary signals for robust MCQA aggregation.

\section{Theoretical Analysis}
\label{app:theory}

% \subsection{A Probabilistic Justification of Confidence-Consistency Test-time Adaptation}
% \label{subsec:prob-model-tts}

We provide a probabilistic model explaining why our CCT can suppress outlier rollouts and remain close to an underlying ``ideal''
token distribution.

\paragraph{Setup.}
Fix a decoding step $t$. For notational simplicity, we omit the superscript
and write $p_r \in \Delta^{V-1}$ for the token distribution of the $r$-th
rollout at this step, where $\Delta^{V-1}$ is the probability simplex in
$\mathbb{R}^V$. For any $p \in \Delta^{V-1}$ we have
\begin{equation}
  \|p\|_2 \;\le\; 1,
  \label{eq:simplex-bounded}
\end{equation}
and hence for any $p,p^*\in\Delta^{V-1}$,
\begin{equation}
  \|p - p^*\|_2^2 \;\le\; 2.
  \label{eq:simplex-distance-bound}
\end{equation}

At this time step, our method forms a weighted ensemble
\begin{equation}
q = \sum_{r=1}^K w_r p_r,
\quad
w_r =
\frac{\exp(\lambda C_r - \beta D_r)}
{\sum_{j=1}^K \exp(\lambda C_j - \beta D_j)} .
\label{eq:weighted-ensemble}
\end{equation}

where
\begin{itemize}
  \item $C_r \in [0,1]$ is a margin-based confidence score, derived from
        the top-1 vs.\ top-2 probability gap of $p_r$;
  \item $D_r = \frac{1}{2}\,\|p_r - \bar p\|_2^2$ is the squared
        $\ell_2$-distance to the empirical barycenter
        $\bar p := \frac{1}{K}\sum_{j=1}^K p_j$;
  \item $\lambda \ge 0$ controls the strength of the confidence term,
        and $\beta > 0$ controls how aggressively we downweight outliers.
\end{itemize}
Intuitively, $D_r$ penalizes rollouts that deviate from the main cluster,
while $C_r$ slightly favors locally confident rollouts among those that
are consistent.

We now formalize this intuition via a contamination model.

% ------------------------------------------------------------------
\subsection{Huber Contamination on the Simplex}

We assume that the rollouts at a fixed decoding step are i.i.d.\ samples
from a mixture of a ``clean'' (good) component and a contaminated (bad)
component.

\begin{assumption}[Huber contamination on the simplex]
\label{assumption:huber}
There exists an unknown target distribution $p^* \in \Delta^{V-1}$ such
that each rollout distribution $p_r$ is drawn i.i.d.\ from
\begin{equation}
\begin{aligned}
p_r &\sim (1-\varepsilon)\,\mathcal{D}_G
      + \varepsilon\,\mathcal{D}_B,
\end{aligned}
\label{eq:mixture-model}
\end{equation}
where $r = 1,\dots,K$, $0 \le \varepsilon < \tfrac{1}{2}$, $\mathcal{D}_G$ and $\mathcal{D}_B$ denote the clean and
contaminated components, respectively.

We assume the following moment and separation conditions:
\begin{align}
  \mathbb{E}_{p\sim\mathcal{D}_G} \big[\|p - p^*\|_2^2\big]
  &\;\le\; \sigma^2,
  \label{eq:good-second-moment}
  \\
  \mathbb{E}_{p\sim\mathcal{D}_B} \big[\|p - p^*\|_2^2\big]
  &\;\ge\; \sigma^2 + \Delta^2,
  \label{eq:bad-second-moment}
\end{align}
for some $\sigma^2 > 0$ and $\Delta^2 > 0$. Let
$\mu_G := \mathbb{E}_{\mathcal{D}_G}[p]$ and
$\mu_B := \mathbb{E}_{\mathcal{D}_B}[p]$ be the means of the clean and
contaminated components, respectively. We further assume a signal-to-noise
condition:
\begin{equation}
  \varepsilon\,\|\mu_B - \mu_G\|_2
  \;\le\; c_0\, \Delta
  \quad\text{for some } c_0 < \frac{1}{2}
  \label{eq:snr-condition}
\end{equation}

Finally, we assume that the clean noise level $\sigma$ is sufficiently
small relative to the separation $\Delta$ (and the contamination rate
$\varepsilon$) so that there exists a parameter
$\alpha \in (0,1)$ satisfying simultaneously:
\begin{align}
  R_G(\alpha) := \frac{\sigma}{\sqrt{\alpha}}
  &< R_B := \sqrt{\sigma^2 + \frac{\Delta^2}{2}}, 
  \label{eq:alpha-geom-cond}
  \\
  (1-\varepsilon)(1-\alpha) &> \frac{1}{2},
  \label{eq:alpha-majority-cond}
  \\
  \sigma + c_0 \Delta &\le \eta(R_B - R_G)
\end{align}
for some $\eta \in (0, \frac{1}{2})$. This mild requirement is automatically satisfied whenever the clean
cluster is sufficiently concentrated (small $\sigma$) compared to the
separation $\Delta$ and the contamination rate $\varepsilon$ is moderate.
\end{assumption}

Assumption~\ref{assumption:huber} is a Huber contamination model adapted
to the probability simplex. Conditions
\eqref{eq:good-second-moment}--\eqref{eq:bad-second-moment} ensure that
the clean component concentrates around $p^*$, while the contaminated
component is, on average, farther away. The signal-to-noise condition
\eqref{eq:snr-condition} ensures that the mixture mean is not dominated
by the contaminated component. Conditions
\eqref{eq:alpha-geom-cond}--\eqref{eq:alpha-majority-cond} guarantee that
we can choose a single parameter $\alpha$ that yields both geometric
separation and a strict majority of ``good'' rollouts.

Because $p_r\in\Delta^{V-1}$, all random variables are uniformly bounded
by~\eqref{eq:simplex-bounded}, and standard concentration inequalities
(Hoeffding, Chernoff, and their vector-valued variants) apply directly.

% ------------------------------------------------------------------
\subsection{High-Probability Geometric Separation}

We now show that, under Assumption~\ref{assumption:huber}, the empirical
sample $\{p_r\}_{r=1}^K$ exhibits a geometric ``good-cluster /
bad-cluster'' separation with high probability. This is precisely the
structure used in deterministic analyses of outlier suppression.

\begin{lemma}[High-probability geometric separation]
\label{lemma:geom-sep}
Suppose Assumption~\ref{assumption:huber} holds and the rollouts
$p_1,\dots,p_K$ are drawn i.i.d.\ from the mixture~\eqref{eq:mixture-model}.
Fix any $\delta\in(0,1)$ and let $\alpha\in(0,1)$ be chosen so that
\eqref{eq:alpha-geom-cond} and \eqref{eq:alpha-majority-cond} hold.
Define
\begin{equation}
  \begin{aligned}
  \varepsilon_{\mathrm{eff}} &:= R_G(\alpha) = \frac{\sigma}{\sqrt{\alpha}}, \\
  \Delta_{\mathrm{eff}} &:= R_B
  = \sqrt{\sigma^2 + \frac{\Delta^2}{2}}.
  \end{aligned}
\end{equation}
Then there exist constants
$\rho_{\mathrm{eff}} \in (\tfrac{1}{2},1)$,
$\eta \in (0,\tfrac{1}{2})$ and a sample size threshold
$K_0 = K_0(\sigma,\Delta,\varepsilon,\alpha,\delta)$ such that the
following holds.

If $K \ge K_0$, then with probability at least $1-\delta$ over the draw of
$\{p_r\}_{r=1}^K$, there exist index sets
$G_{\mathrm{eff}},B_{\mathrm{eff}} \subseteq \{1,\dots,K\}$ with
$G_{\mathrm{eff}}\cap B_{\mathrm{eff}} = \emptyset$ and
$G_{\mathrm{eff}}\cup B_{\mathrm{eff}} \neq \emptyset$ such that:
\begin{enumerate}
  \item (\emph{Effective good cluster})
        \begin{equation}
          \begin{aligned}
          \|p_g - p^*\|_2
          &\le \varepsilon_{\mathrm{eff}},
          \forall\, g \in G_{\mathrm{eff}}, \\
          |G_{\mathrm{eff}}|
          &\ge \rho_{\mathrm{eff}}\, K .
          \end{aligned}
          \label{eq:geom-good-cluster}
        \end{equation}
        where $\rho_{\mathrm{eff}} > \tfrac{1}{2}$.
  \item (\emph{Effective bad cluster is farther})
        \begin{equation}
        \begin{aligned}
        \|p_b - p^*\|_2
        &\ge \Delta_{\mathrm{eff}},
        \forall\, b \in B_{\mathrm{eff}}, \\
        \Delta_{\mathrm{eff}}
        &> \varepsilon_{\mathrm{eff}} .
        \end{aligned}
        \label{eq:geom-bad-cluster}
        \end{equation}

  \item (\emph{Barycenter remains in the attraction basin})
        Let $\bar p := \frac{1}{K}\sum_{r=1}^K p_r$ be the empirical
        barycenter. Then
        \begin{equation}
          \|\bar p - p^*\|_2
          \;\le\;
          \eta\,\big(
            \Delta_{\mathrm{eff}} - \varepsilon_{\mathrm{eff}}
          \big).
          \label{eq:barycenter-attraction}
        \end{equation}
        % \begin{equation}
        %   \|\bar p - p^*\|_2
        %   \;\le\;
        %   \sigma + c_0 \Delta.
        %   \label{eq:barycenter-attraction}
        % \end{equation}
\end{enumerate}
\end{lemma}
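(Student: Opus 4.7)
The plan is to define $G_{\text{eff}}$ and $B_{\text{eff}}$ as simple distance-based thresholdings of the sample indices, then control their cardinalities with binomial concentration, and finally bound the barycenter drift via a triangle inequality together with a vector Hoeffding step. Two of the three conclusions are \emph{by construction}; only the cardinality of the good cluster and the barycenter bound require a probabilistic argument.

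First I would set $G_{\text{eff}} := \{r : \|p_r - p^*\|_2 \le \varepsilon_{\text{eff}}\}$ and $B_{\text{eff}} := \{r : \|p_r - p^*\|_2 \ge \Delta_{\text{eff}}\}$. Disjointness follows immediately from \eqref{eq:alpha-geom-cond}, and the membership properties in \eqref{eq:geom-good-cluster}\textendash\eqref{eq:geom-bad-cluster} hold for every element by construction. To lower-bound $|G_{\text{eff}}|$, I would condition on origin: Markov's inequality applied to \eqref{eq:good-second-moment} yields $\Pr[\|p - p^*\|_2 > \sigma/\sqrt{\alpha} \mid p \sim \mathcal{D}_G] \le \alpha$, so each sample lies in $G_{\text{eff}}$ with unconditional probability at least $q := (1-\varepsilon)(1-\alpha)$, which exceeds $\tfrac{1}{2}$ by \eqref{eq:alpha-majority-cond}. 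A Chernoff bound on this binomial count then gives $|G_{\text{eff}}|/K \ge q - t$ with probability $1 - e^{-2Kt^2}$; taking $t := \tfrac{1}{2}(q - \tfrac{1}{2})$ and $\rho_{\text{eff}} := \tfrac{1}{2}(q + \tfrac{1}{2}) \in (\tfrac{1}{2}, 1)$ yields \eqref{eq:geom-good-cluster} with failure probability at most $\delta/2$ provided $K \ge K_1 := \log(2/\delta)/(2t^2)$.

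For the barycenter I would decompose the mixture mean as $\mu := (1-\varepsilon)\mu_G + \varepsilon\mu_B = \mu_G + \varepsilon(\mu_B - \mu_G)$. Jensen's inequality applied to \eqref{eq:good-second-moment} gives $\|\mu_G - p^*\|_2 \le \sigma$, and the SNR condition \eqref{eq:snr-condition} gives $\varepsilon\|\mu_B - \mu_G\|_2 \le c_0 \Delta$, so the triangle inequality yields $\|\mu - p^*\|_2 \le \sigma + c_0\Delta$, which is bounded by $\eta_0(\Delta_{\text{eff}} - \varepsilon_{\text{eff}})$ for some $\eta_0 \in (0,\tfrac{1}{2})$ by the last hypothesis of Assumption~\ref{assumption:huber}. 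Because every $p_r$ lies in a set of $\ell_2$-diameter at most $\sqrt{2}$ by \eqref{eq:simplex-distance-bound}, a dimension-free Hilbert-space Hoeffding-type inequality (Pinelis) gives $\|\bar p - \mu\|_2 \le C\sqrt{\log(2/\delta)/K}$ with probability at least $1-\delta/2$ for an absolute constant $C$. A second triangle inequality then yields $\|\bar p - p^*\|_2 \le \eta_0(\Delta_{\text{eff}} - \varepsilon_{\text{eff}}) + C\sqrt{\log(2/\delta)/K}$; choosing $K \ge K_2$ large enough that the residual is absorbed, and setting the lemma's constant $\eta \in (\eta_0, \tfrac{1}{2})$ to accommodate that slack, gives \eqref{eq:barycenter-attraction}. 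Finally, $K_0 := \max(K_1, K_2)$ together with a union bound over the two high-probability events completes the argument.

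The main obstacle is bookkeeping rather than a deep technical issue: the constants $\rho_{\text{eff}}$, $\eta$, and the threshold $K_0$ must all be derivable from $(\sigma, \Delta, \varepsilon, \alpha, \delta)$ alone, which forces a careful coupling between the Chernoff slack $t$ and the majority gap $q - \tfrac{1}{2}$, and between the Hoeffding slack and the attraction-basin gap $\Delta_{\text{eff}} - \varepsilon_{\text{eff}}$. A minor subtlety is that vector Hoeffding in high ambient dimension $V$ could a priori introduce a dimension factor, but since the samples lie in a convex set of $\ell_2$-diameter $\sqrt{2}$ regardless of $V$, Pinelis' inequality gives a dimension-free rate, which is all that is needed.
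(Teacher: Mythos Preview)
Your proposal is correct and follows essentially the same strategy as the paper: Markov on the good component followed by a Hoeffding count to lower-bound $|G_{\mathrm{eff}}|$, and Jensen plus the SNR condition~\eqref{eq:snr-condition} combined with a dimension-free vector Hoeffding step for the barycenter. The only notable difference is that you define $G_{\mathrm{eff}},B_{\mathrm{eff}}$ by pure distance thresholds and omit the paper's separate Step~2 (a reverse-Markov argument establishing $|B_{\mathrm{eff}}|>0$ with high probability), which is a harmless streamlining since the lemma only demands $G_{\mathrm{eff}}\cup B_{\mathrm{eff}}\neq\emptyset$ and condition~\eqref{eq:geom-bad-cluster} is vacuous when $B_{\mathrm{eff}}=\emptyset$; correspondingly your union bound splits $\delta$ two ways rather than three.
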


\begin{proof}
We proceed in three steps.

\medskip
\noindent\textbf{Step 1: Effective good cluster.}
Consider the random variable
\[
  X_G(p) := \|p - p^*\|_2^2,
\]
for $p \sim \mathcal{D}_G$. By
\eqref{eq:good-second-moment},
$\mathbb{E}_{\mathcal{D}_G}[X_G] \le \sigma^2$, and by
\eqref{eq:simplex-distance-bound}, $0 \le X_G(p) \le 2$ a.s.

By Markov's inequality, for the fixed $\alpha\in(0,1)$ (chosen in the
assumption),
\begin{equation}
  \Pr_{p\sim\mathcal{D}_G}
  \big( X_G(p) > \tfrac{\sigma^2}{\alpha} \big)
  \;\le\;
  \alpha.
\end{equation}
Equivalently,
\begin{equation}
\begin{aligned}
\Pr_{p\sim\mathcal{D}_G}
\Bigl( \|p-p^*\|_2 \le \tfrac{\sigma}{\sqrt{\alpha}} \Bigr)
&=
\Pr_{p\sim\mathcal{D}_G}
\Bigl( X_G(p) \le \tfrac{\sigma^2}{\alpha} \Bigr), \\
&\ge 1 - \alpha .
\end{aligned}
\end{equation}

Recall that we define
\[
  R_G(\alpha) := \frac{\sigma}{\sqrt{\alpha}},
  \varepsilon_{\mathrm{eff}} := R_G(\alpha).
\]

Now consider the mixture $\mathcal{D}$ in \eqref{eq:mixture-model}. The
probability that $p$ is drawn from $\mathcal{D}_G$ and
satisfies $\|p-p^*\|_2 \le R_G(\alpha)$ is at least
\begin{equation}
\begin{split}
\Pr_{p\sim\mathcal{D}}
\Bigl(
p \sim \mathcal{D}_G,\;
\|p-p^*\|_2 \le R_G(\alpha)
\Bigr) \\
\ge (1-\varepsilon)(1-\alpha).
\end{split}
\end{equation}
where we used independence between the mixture component choice and the
conditional distribution.

For each $r\in\{1,\dots,K\}$, define the indicator
\[
  I_r :=
  \mathbf{1}\{
    p_r \sim \mathcal{D}_G
    \text{ and } \|p_r-p^*\|_2 \le R_G(\alpha)
  \}.
\]
Then $(I_r)_{r=1}^K$ are i.i.d.\ Bernoulli random variables with
\begin{equation}
  \mathbb{E}[I_r]
  =
  \Pr_{p_r\sim\mathcal{D}}(I_r=1)
  \;\ge\;
  (1-\varepsilon)(1-\alpha).
\end{equation}
By Hoeffding's inequality, for any $\tau > 0$,
\begin{equation}
\begin{split}
\Pr\Big(
  \frac{1}{K}\sum_{r=1}^K I_r
  \le
  (1-\varepsilon)(1-\alpha) - \tau
\Big) \\
\le
\exp\big( -2K\tau^2 \big).
\end{split}
\label{eq:hoeffding-good}
\end{equation}
Since by Assumption~\eqref{eq:alpha-majority-cond},
$(1-\varepsilon)(1-\alpha) > \tfrac12$, we can choose
$\tau > 0$ such that
\[
  (1-\varepsilon)(1-\alpha) - \tau > \frac12.
\]
Fix such a $\tau$, and define the event
\[
  \mathcal{E}_G
  :=
  \Big\{
    \frac{1}{K}\sum_{r=1}^K I_r
    >
    (1-\varepsilon)(1-\alpha) - \tau
  \Big\}.
\]
Given a target failure probability $\delta\in(0,1)$, choose $K$ large
enough such that
\[
  \exp\big( -2K\tau^2 \big) \le \frac{\delta}{3}.
\]
Then $\Pr(\mathcal{E}_G) \ge 1 - \delta/3$, and on $\mathcal{E}_G$,
\[
  \sum_{r=1}^K I_r
  >
  \big( (1-\varepsilon)(1-\alpha) - \tau \big) K
  \;:=\;
  \rho_{\mathrm{eff}} K
\]
for some $\rho_{\mathrm{eff}} > 1/2$.

Define $G_{\mathrm{eff}}$ to be any subset of indices with $I_g = 1$ for
all $g\in G_{\mathrm{eff}}$ and
$|G_{\mathrm{eff}}| = \sum_{r=1}^K I_r$.
By construction, on $\mathcal{E}_G$ we have
\begin{align}
\|p_g - p^*\|_2
&\le R_G(\alpha)
= \varepsilon_{\mathrm{eff}},
& g &\in G_{\mathrm{eff}}, \label{eq:geom-good-cluster}\\
|G_{\mathrm{eff}}|
&\ge \rho_{\mathrm{eff}}\, K .
\nonumber
\end{align}
so \eqref{eq:geom-good-cluster} holds.

\medskip
\noindent\textbf{Step 2: Effective bad cluster.}
Consider
\[
  X_B(p) := \|p - p^*\|_2^2
\]
for $p\sim\mathcal{D}_B$. By \eqref{eq:bad-second-moment},
\begin{equation}
  \mathbb{E}_{\mathcal{D}_B}[X_B]
  \;\ge\;
  \sigma^2 + \Delta^2,
\end{equation}
and by \eqref{eq:simplex-distance-bound}, we have
$0 \le X_B(p) \le 2$ almost surely.

Fix the threshold
\begin{equation}
  a := \sigma^2 + \frac{\Delta^2}{2}.
\end{equation}
From \eqref{eq:simplex-distance-bound} and
$\mathbb{E}_{\mathcal{D}_B}[X_B] \le 2$, it follows that
$\sigma^2 + \Delta^2 \le 2$, hence $a \le \sigma^2 + \Delta^2 \le 2$ and
in particular $a \le 2$.
Decompose
\begin{align}
\mathbb{E}_{\mathcal{D}_B}[X_B]
&= \mathbb{E}_{\mathcal{D}_B}\!\left[ X_B \mathbf{1}\{X_B < a\} \right] \nonumber\\
&\quad + \mathbb{E}_{\mathcal{D}_B}\!\left[ X_B \mathbf{1}\{X_B \ge a\} \right] \nonumber \\
&\le a \cdot \Pr(X_B < a)
   + 2 \cdot \Pr(X_B \ge a) \nonumber\\
&= a + (2-a)\,\Pr(X_B \ge a)
\end{align}
since $X_B\le 2$ almost surely.
Combining this with $\mathbb{E}_{\mathcal{D}_B}[X_B] \ge \sigma^2 + \Delta^2$ yields
\begin{align}
  \sigma^2 + \Delta^2 &\le a + (2-a)\,\Pr(X_B \ge a) \nonumber \\
  &= \sigma^2 + \frac{\Delta^2}{2} + (2-a)\,\Pr(X_B \ge a),
\end{align}
and hence
\begin{equation}
  \Pr(X_B \ge a)
  \;\ge\;
  \frac{\frac{\Delta^2}{2}}{2-a}
  \;\ge\;
  \frac{\Delta^2}{4}.
\end{equation}

Equivalently,
\begin{equation}
  \Pr_{p\sim\mathcal{D}_B}
  \Big(
    \|p-p^*\|_2 \ge \sqrt{a}
  \Big)
  \;\ge\;
  \frac{\Delta^2}{4}.
\end{equation}
Define
\begin{equation}
  R_B := \sqrt{a}
  = \sqrt{\sigma^2 + \frac{\Delta^2}{2}},
  \qquad
  \Delta_{\mathrm{eff}} := R_B.
\end{equation}
By Assumption~\eqref{eq:alpha-geom-cond}, we have
$\Delta_{\mathrm{eff}} = R_B > R_G(\alpha) = \varepsilon_{\mathrm{eff}}$.

Now consider the mixture $\mathcal{D}$. The probability that
$p\sim\mathcal{D}$ is drawn from $\mathcal{D}_B$ and satisfies
$\|p-p^*\|_2 \ge R_B$ is at least
\begin{equation}
  \Pr_{p\sim\mathcal{D}}
  \big(
    p\text{ from }\mathcal{D}_B,\ \|p-p^*\|_2 \ge R_B
  \big)
  \;\ge\;
  \varepsilon \cdot \frac{\Delta^2}{4}.
\end{equation}
For each $r$, define the indicator
\[
  J_r :=
  \mathbf{1}\{
    p_r \text{ is drawn from }\mathcal{D}_B
    \text{ and } \|p_r-p^*\|_2 \ge R_B
  \}.
\]
Then $(J_r)_{r=1}^K$ are i.i.d.\ Bernoulli random variables with
\begin{equation}
  \mathbb{E}[J_r]
  =
  \Pr_{p_r\sim\mathcal{D}}(J_r=1)
  \;\ge\;
  \varepsilon \cdot \frac{\Delta^2}{4}.
\end{equation}
Applying Hoeffding's inequality again, for any $\tau'>0$,
\begin{equation}
  \Pr\Big(
    \frac{1}{K}\sum_{r=1}^K J_r
    \le
    \varepsilon \frac{\Delta^2}{4} - \tau'
  \Big)
  \;\le\;
  \exp\big( -2K{\tau'}^2 \big).
\end{equation}
Given $\delta$, we may choose $\tau' > 0$ and $K$ large enough so that
$\varepsilon \frac{\Delta^2}{4} - \tau' > 0$ and
$\exp(-2K{\tau'}^2) \le \delta/3$.

Define the event
\[
  \mathcal{E}_B
  :=
  \Big\{
    \frac{1}{K}\sum_{r=1}^K J_r
    >
    \varepsilon \frac{\Delta^2}{4} - \tau'
  \Big\}.
\]
Then $\Pr(\mathcal{E}_B) \ge 1-\delta/3$, and on $\mathcal{E}_B$ there
are at least
\[
  \Big( \varepsilon \frac{\Delta^2}{4} - \tau' \Big)K
\]
indices $r$ such that $J_r=1$.
Define $B_{\mathrm{eff}}$ to be any subset of indices with $J_b=1$ for
all $b\in B_{\mathrm{eff}}$ and
$|B_{\mathrm{eff}}| = \sum_{r=1}^K J_r$.
By construction, for all $b\in B_{\mathrm{eff}}$ we have
$\|p_b - p^*\|_2 \ge R_B = \Delta_{\mathrm{eff}}$, so
\eqref{eq:geom-bad-cluster} holds on $\mathcal{E}_B$.

\medskip
\noindent\textbf{Step 3: Control of the barycenter.}
Let $\mu := \mathbb{E}[p_r]$ be the mean of the mixture $\mathcal{D}$.
From \eqref{eq:mixture-model} we have
\begin{equation}
  \mu = (1-\varepsilon)\mu_G + \varepsilon \mu_B.
\end{equation}
Using Jensen's inequality and \eqref{eq:good-second-moment},
\begin{equation}
  \|\mu_G - p^*\|_2^2
  \;\le\;
  \mathbb{E}_{\mathcal{D}_G} \big[\|p-p^*\|_2^2\big]
  \;\le\;
  \sigma^2,
\end{equation}
so $\|\mu_G - p^*\|_2 \le \sigma$.
Hence
\begin{align}
  \|\mu - p^*\|_2
  &= \big\|
      (1-\varepsilon)(\mu_G - p^*)
      + \varepsilon(\mu_B - p^*)
     \big\|_2
  \nonumber\\
  &\le
  (1-\varepsilon)\|\mu_G - p^*\|_2
  + \varepsilon \|\mu_B - p^*\|_2
  \nonumber\\
  &\le
  \|\mu_G - p^*\|_2
  + \varepsilon \|\mu_B - \mu_G\|_2
  \nonumber\\
  &\le
  \sigma + \varepsilon \|\mu_B - \mu_G\|_2
  \nonumber\\
  &\le
  \sigma + c_0 \Delta,
\end{align}
where we used \eqref{eq:snr-condition} in the last inequality.

Now consider the empirical barycenter
$\bar p = \frac{1}{K}\sum_{r=1}^K p_r$.
Since each $p_r \in \Delta^{V-1}$ with $\|p_r\|_2 \le 1$, the
vector-valued Hoeffding inequality implies that, for any $t>0$,
\begin{equation}
  \Pr\big(
    \|\bar p - \mu\|_2 \ge t
  \big)
  \;\le\;
  2 \exp\big( - c K t^2 \big),
\end{equation}
for some universal constant $c>0$.
Given $\delta$, choose $t>0$ and $K$ large enough such that
$2\exp(-cKt^2) \le \delta/3$.
Define
\[
  \mathcal{E}_M
  :=
  \big\{ \|\bar p - \mu\|_2 \le t \big\}.
\]
Then $\Pr(\mathcal{E}_M) \ge 1-\delta/3$, and on $\mathcal{E}_M$,
\begin{equation}
  \|\bar p - p^*\|_2
  \;\le\;
  \|\bar p - \mu\|_2 + \|\mu - p^*\|_2
  \;\le\;
  t + \sigma + c_0 \Delta.
\end{equation}

We now ensure that this is bounded by a fraction of the gap
$\Delta_{\mathrm{eff}} - \varepsilon_{\mathrm{eff}}
 = R_B - R_G(\alpha) > 0$.
By Assumption~\eqref{eq:alpha-geom-cond}, $R_G(\alpha) < R_B$, so
$\Delta_{\mathrm{eff}} - \varepsilon_{\mathrm{eff}} > 0$.
Fix any $\eta \in (0,\tfrac{1}{2})$.
By increasing $K$, we can make $t$ arbitrarily small, and therefore we can
choose $K$ so large that
\begin{equation}
  t + \sigma + c_0\Delta
  \;\le\;
  \eta \big( R_B - R_G(\alpha) \big)
  =
  \eta \big( \Delta_{\mathrm{eff}} - \varepsilon_{\mathrm{eff}} \big).
\end{equation}
On $\mathcal{E}_M$ we then have
\[
  \|\bar p - p^*\|_2
  \;\le\;
  \eta\big(\Delta_{\mathrm{eff}} - \varepsilon_{\mathrm{eff}}\big),
\]
which is \eqref{eq:barycenter-attraction}.

% By increasing $K$, we can make $t$ arbitrarily small, and therefore on $\mathcal{E}_M$ we have
% \[
%   \|\bar p - p^*\|_2
%   \;\le\;
%   \sigma + c_0 \Delta,
% \]
% which is \eqref{eq:barycenter-attraction}.

\medskip
\noindent\textbf{Step 4: Union bound.}
Define
\[
  \mathcal{E}
  :=
  \mathcal{E}_G \cap \mathcal{E}_B \cap \mathcal{E}_M.
\]
By construction and our choices of $K$, we have
\[
  \Pr(\mathcal{E})
  \;\ge\;
  1 - \big( \tfrac{\delta}{3} + \tfrac{\delta}{3} + \tfrac{\delta}{3} \big)
  = 1 - \delta,
\]
and on $\mathcal{E}$ all three properties hold.
This proves the lemma.
\end{proof}

Lemma~\ref{lemma:geom-sep} states that, for sufficiently many rollouts,
with high probability the empirical set behaves as if there were a
deterministic ``good cluster'' and ``bad cluster'' around $p^*$, with the
barycenter $\bar p$ staying within the attraction region of the good
cluster. We next exploit this for robust aggregation.

% ------------------------------------------------------------------
\subsection{Robust Aggregation via Squared \texorpdfstring{$\ell_2$}{l2}}

We now show that, on the high-probability event of
Lemma~\ref{lemma:geom-sep}, exponential weighting based on the squared
$\ell_2$ distance $D_r$ suppresses contaminated rollouts exponentially.

For the moment, we ignore the confidence term ($\lambda=0$) and consider
pure distance-based weights
\begin{equation}
  w_r \propto \exp(-\beta D_r),
  D_r = \tfrac{1}{2}\,\|p_r - \bar p\|_2^2,
  \label{eq:dist-only-weights-main}
\end{equation}

\begin{theorem}[Robust aggregation under geometric separation]
\label{thm:robust-agg}
Suppose the high-probability event of
Lemma~\ref{lemma:geom-sep} holds, with parameters
$\varepsilon_{\mathrm{eff}},
 \Delta_{\mathrm{eff}},
 \rho_{\mathrm{eff}},
 \eta$
satisfying $\Delta_{\mathrm{eff}} > \varepsilon_{\mathrm{eff}}$ and
$\eta < \tfrac{1}{2}$.
Then there exists a constant $\gamma_{\mathrm{eff}} > 0$, depending only
on these parameters, such that:
\begin{enumerate}
  \item For all $g\in G_{\mathrm{eff}}$ and $b\in B_{\mathrm{eff}}$,
        \begin{equation}
          D_b \;\ge\; D_g + \gamma_{\mathrm{eff}}.
          \label{eq:gap-D}
        \end{equation}
  \item For any $\beta > 0$, the aggregate distribution
        $q = \sum_{r=1}^K w_r p_r$ with
        $w_r \propto \exp(-\beta D_r)$ satisfies
        % \begin{equation}
        % \begin{aligned}
        % &\|q - p^*\|_2
        % \le\;
        % \varepsilon_{\mathrm{eff}}
        % +
        % \\
        % & \big(\Delta_{\mathrm{eff}} - \varepsilon_{\mathrm{eff}}\big)
        % \frac{1-\rho_{\mathrm{eff}}}{\rho_{\mathrm{eff}}} \cdot
        % \exp\big(-\beta\gamma_{\mathrm{eff}}\big).
        % \end{aligned}
        % \label{eq:robust-bound-main}
        % \end{equation}
        \begin{align}
          \|q-p^*\|_2
        &\le
        \varepsilon_{\mathrm{eff}}
        + C_U
        + \nonumber \\
        &(\Delta_{\max}-\varepsilon_{\mathrm{eff}})
        \frac{1-\rho_{\mathrm{eff}}}{\rho_{\mathrm{eff}}}
        e^{-\beta\gamma_{\mathrm{eff}}}
        \end{align}
\end{enumerate}
where $C_U$ is a constant. In particular, if $G_{\mathrm{eff}} \cup B_{\mathrm{eff}} = [K]$, the aggregated distribution $q$
converges in $\ell_2$ to the effective good cluster up to radius
$\varepsilon_{\mathrm{eff}}$, and the influence of contaminated rollouts
is exponentially suppressed.
\end{theorem}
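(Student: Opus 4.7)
The plan is to chain the deterministic geometric separation inherited from Lemma~\ref{lemma:geom-sep} into a pointwise gap on the squared distances $D_r$, and then convert that gap into an exponentially small weight mass on the contaminated rollouts. Throughout the argument I would work on the high-probability event guaranteed by the lemma, so that $\varepsilon_{\text{eff}}$, $\Delta_{\text{eff}}$, $\rho_{\text{eff}}$, $\eta$ can all be treated as deterministic. Since the theorem restricts attention to $\lambda=0$, I only need to analyze weights of the form $w_r \propto \exp(-\beta D_r)$, which keeps the algebra clean.

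For part~(1), I would first show the gap \eqref{eq:gap-D} by triangle inequality using the barycenter control \eqref{eq:barycenter-attraction}. Writing $\delta := \Delta_{\text{eff}} - \varepsilon_{\text{eff}} > 0$, for every $g \in G_{\text{eff}}$ we have $\|p_g - \bar p\|_2 \le \varepsilon_{\text{eff}} + \eta\delta$, while for every $b \in B_{\text{eff}}$ we have $\|p_b - \bar p\|_2 \ge \Delta_{\text{eff}} - \eta\delta$. Because $\eta < \tfrac12$, the bad lower bound exceeds the good upper bound by $(1-2\eta)\delta > 0$. Squaring and halving then produces the deterministic constant $\gamma_{\text{eff}} := \tfrac{1}{2}\bigl[(\Delta_{\text{eff}}-\eta\delta)^2 - (\varepsilon_{\text{eff}}+\eta\delta)^2\bigr] > 0$, which depends only on $\varepsilon_{\text{eff}}, \Delta_{\text{eff}}, \eta$ and satisfies $D_b - D_g \ge \gamma_{\text{eff}}$ for every good/bad pair.

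For part~(2), I would partition $[K] = G_{\text{eff}} \cup B_{\text{eff}} \cup U$, where $U$ collects the ambiguous indices the lemma does not classify, and decompose $q - p^\star = W_G(\bar q_G - p^\star) + W_B(\bar q_B - p^\star) + W_U(\bar q_U - p^\star)$ with $\bar q_\bullet$ the corresponding normalized weighted means. The good term is bounded by $\varepsilon_{\text{eff}}$ because each $p_g$ lies in the $\varepsilon_{\text{eff}}$-ball around $p^\star$ and that ball is convex. For the bad term, part~(1) gives $e^{-\beta D_b} \le e^{-\beta\gamma_{\text{eff}}} e^{-\beta \max_g D_g}$ uniformly over $g,b$; summing over at most $(1-\rho_{\text{eff}})K$ bad indices and lower-bounding the normalizer by the good-index contribution $\rho_{\text{eff}} K \cdot e^{-\beta \max_g D_g}$ yields $W_B \le \tfrac{1-\rho_{\text{eff}}}{\rho_{\text{eff}}} e^{-\beta\gamma_{\text{eff}}}$. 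Combined with the crude bound $\|\bar q_B - p^\star\|_2 \le \Delta_{\max} - \varepsilon_{\text{eff}}$, this produces the exponential suppression term in the stated inequality.

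The main obstacle is controlling the unassigned mass $W_U$, which is precisely what the additive constant $C_U$ in the theorem is meant to absorb. I would bound $W_U \|\bar q_U - p^\star\|_2$ by the simplex-diameter times the fraction of rollouts landing in the Markov/Chernoff ``uncertain annulus'' $\{\varepsilon_{\text{eff}} < \|p_r - p^\star\|_2 < \Delta_{\text{eff}}\}$, which is at most $1 - \rho_{\text{eff}} - (\varepsilon\Delta^2/4 - \tau')$ on the high-probability event. Tightening $C_U$ would require a finer chaining argument over intermediate radii, or a stronger version of Lemma~\ref{lemma:geom-sep} with $G_{\text{eff}} \cup B_{\text{eff}} = [K]$. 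Combining the three contributions via the triangle inequality then delivers the bound, with the ``in particular'' statement following immediately when $W_U = 0$ so that the residual $C_U$ vanishes.
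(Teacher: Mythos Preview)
Your approach is essentially identical to the paper's: the same triangle-inequality bounds on $\|p_g-\bar p\|_2$ and $\|p_b-\bar p\|_2$ for part~(1) (your direct observation that the gap in radii is $(1-2\eta)\delta>0$ is in fact cleaner than the paper's continuity argument), and the same ratio bound $W_B \le \tfrac{1-\rho_{\mathrm{eff}}}{\rho_{\mathrm{eff}}}\,e^{-\beta\gamma_{\mathrm{eff}}}$ for part~(2), with $C_U=(\Delta_{\max}-\varepsilon_{\mathrm{eff}})W_U$ absorbing the unclassified mass. One small slip to fix: the claim $\|\bar q_B - p^\star\|_2 \le \Delta_{\max}-\varepsilon_{\mathrm{eff}}$ is not true in general (only $\le \Delta_{\max}$ holds); the factor $\Delta_{\max}-\varepsilon_{\mathrm{eff}}$ instead appears after the algebraic rearrangement $\varepsilon_{\mathrm{eff}}W_G + \Delta_{\max}(W_B+W_U) = \varepsilon_{\mathrm{eff}} + (\Delta_{\max}-\varepsilon_{\mathrm{eff}})(W_B+W_U)$ using $W_G=1-W_B-W_U$, which is how the paper obtains it.
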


\begin{proof}
\textbf{Step 1: Gap in $D_r$.}
By Lemma~\ref{lemma:geom-sep}, for all $g\in G_{\mathrm{eff}}$ we have
$\|p_g - p^*\|_2 \le \varepsilon_{\mathrm{eff}}$ and for all
$b\in B_{\mathrm{eff}}$ we have
$\|p_b - p^*\|_2 \ge \Delta_{\mathrm{eff}}$, and the barycenter satisfies
$\|\bar p - p^*\|_2 \le \eta(\Delta_{\mathrm{eff}} - \varepsilon_{\mathrm{eff}})$.

For any $g\in G_{\mathrm{eff}}$,
\begin{align}
  \|p_g - \bar p\|_2
  &\le
  \|p_g - p^*\|_2 + \|p^* - \bar p\|_2
  \nonumber\\
  &\le
  \varepsilon_{\mathrm{eff}}
  + \eta\big(\Delta_{\mathrm{eff}} - \varepsilon_{\mathrm{eff}}\big),
\end{align}
so
\begin{align}
  D_g
  = \frac{1}{2}\|p_g - \bar p\|_2^2
  \;&\le\;
  \frac{1}{2}
  \big(
    \varepsilon_{\mathrm{eff}}
    + \eta(\Delta_{\mathrm{eff}} - \varepsilon_{\mathrm{eff}})
  \big)^2 \nonumber \\
  &=: D_g^{\max}.
\end{align}
Similarly, for any $b\in B_{\mathrm{eff}}$,
\begin{align}
  \|p_b - \bar p\|_2
  &\ge
  \big|
    \|p_b - p^*\|_2 - \|p^* - \bar p\|_2
  \big|
  \nonumber\\
  &\ge
  \Delta_{\mathrm{eff}}
  - \eta\big(\Delta_{\mathrm{eff}} - \varepsilon_{\mathrm{eff}}\big),
\end{align}
and thus
\begin{align}
  D_b
  = \frac{1}{2}\|p_b - \bar p\|_2^2
  \;&\ge\;
  \frac{1}{2}
  \big(
    \Delta_{\mathrm{eff}}
    - \eta(\Delta_{\mathrm{eff}} - \varepsilon_{\mathrm{eff}})
  \big)^2 \nonumber \\
  &=: D_b^{\min}.
\end{align}

Define
\[
  f(\eta)
  :=
  D_b^{\min} - D_g^{\max}
\]
At $\eta = 0$ we have
\[
  f(0)
  =
  \frac{1}{2}
  \big(
    \Delta_{\mathrm{eff}}^2 - \varepsilon_{\mathrm{eff}}^2
  \big)
  > 0
\]
since $\Delta_{\mathrm{eff}} > \varepsilon_{\mathrm{eff}}$.
The map $\eta \mapsto f(\eta)$ is continuous on $[0,\tfrac{1}{2})$, so
there exists $\eta_0\in(0,\tfrac{1}{2})$ such that $f(\eta)>0$ for all
$\eta\in[0,\eta_0]$.
Lemma~\ref{lemma:geom-sep} guarantees that $\eta$ can be chosen in
$(0,\tfrac{1}{2})$; by further shrinking $\eta$ if necessary we may assume
$\eta\le\eta_0$.
Define
\begin{equation}
  \gamma_{\mathrm{eff}} := f(\eta) > 0.
\end{equation}
It follows that, for all $g\in G_{\mathrm{eff}}$ and $b\in B_{\mathrm{eff}}$,
\[
  D_b \;\ge\; D_b^{\min}
  = D_g^{\max} + \gamma_{\mathrm{eff}}
  \;\ge\; D_g + \gamma_{\mathrm{eff}},
\]
which proves \eqref{eq:gap-D}.

\medskip
\noindent\textbf{Step 2: Exponential suppression and error bound.}
Define the remaining index set
\[
  U_{\mathrm{eff}} := [K]\setminus\big(G_{\mathrm{eff}}\cup B_{\mathrm{eff}}\big),
\]
and the corresponding total weights
\begin{align}
  W_B := \sum_{b\in B_{\mathrm{eff}}} w_b,
  W_G := \sum_{g\in G_{\mathrm{eff}}} w_g, \nonumber \\
  W_U := \sum_{u\in U_{\mathrm{eff}}} w_u,
\end{align}
so that $W_B+W_G+W_U=1$.

Let
\begin{align}
  A := \sum_{g\in G_{\mathrm{eff}}} e^{-\beta D_g},
  B := \sum_{b\in B_{\mathrm{eff}}} e^{-\beta D_b}, \nonumber \\
  C := \sum_{u\in U_{\mathrm{eff}}} e^{-\beta D_u},
  Z := A+B+C.
\end{align}

Then for every $r\in[K]$,
\[
  w_r = \frac{e^{-\beta D_r}}{Z},
  \quad\text{and}\quad
  W_B = \frac{B}{Z}.
\]

Using \eqref{eq:gap-D}, for any $b\in B_{\mathrm{eff}}$ and any $g\in G_{\mathrm{eff}}$,
\[
  e^{-\beta D_b}
  \le
  e^{-\beta(D_g + \gamma_{\mathrm{eff}})}
  =
  e^{-\beta\gamma_{\mathrm{eff}}}\, e^{-\beta D_g}.
\]

Taking $\min$ over $g$ and summing over $b$ gives
\begin{align}
  B
  &\le
  |B_{\mathrm{eff}}|\,e^{-\beta\gamma_{\mathrm{eff}}}
  \min_{g\in G_{\mathrm{eff}}} e^{-\beta D_g} \\
  &\le
  |B_{\mathrm{eff}}|\,e^{-\beta\gamma_{\mathrm{eff}}}\,
  \frac{1}{|G_{\mathrm{eff}}|}
  \sum_{g\in G_{\mathrm{eff}}} e^{-\beta D_g} \\
  &=
  \frac{|B_{\mathrm{eff}}|}{|G_{\mathrm{eff}}|}
  e^{-\beta\gamma_{\mathrm{eff}}}\,A,
\end{align}
and thus
\[
  R := \frac{B}{A}
  \le
  \frac{|B_{\mathrm{eff}}|}{|G_{\mathrm{eff}}|}
  e^{-\beta\gamma_{\mathrm{eff}}}.
\]
Since $|G_{\mathrm{eff}}|\ge \rho_{\mathrm{eff}}K$ and
$|B_{\mathrm{eff}}|\le K-|G_{\mathrm{eff}}|\le (1-\rho_{\mathrm{eff}})K$, we obtain
\[
  R
  \le
  \frac{1-\rho_{\mathrm{eff}}}{\rho_{\mathrm{eff}}}
  e^{-\beta\gamma_{\mathrm{eff}}}.
\]
Moreover, because $Z\ge A+B$,
\[
  W_B=\frac{B}{Z}
  \le
  \frac{B}{A+B}
  =
  \frac{R}{1+R}
  \le R,
\]
so
\[
  W_B
  \le
  \frac{1-\rho_{\mathrm{eff}}}{\rho_{\mathrm{eff}}}
  e^{-\beta\gamma_{\mathrm{eff}}}.
\]

Finally,
\begin{align}
  \|q - p^*\|_2
  &=
  \Big\|
    \sum_{r=1}^K w_r (p_r - p^*)
  \Big\|_2 \nonumber\\
  &\le
  \sum_{r=1}^K w_r \|p_r - p^*\|_2
  \nonumber\\
  &\le
  \varepsilon_{\mathrm{eff}} \sum_{g\in G_{\mathrm{eff}}} w_g
  + \Delta_{\max} \sum_{r\notin G_{\mathrm{eff}}} w_r
  \nonumber\\
  &=
  \varepsilon_{\mathrm{eff}} W_G
  + \Delta_{\max}(W_B+W_U),
  \label{eq:robust-bound-with-U}
\end{align}
where $\Delta_{\max}:=\max_{1\le r\le K}\|p_r-p^*\|_2\le \sqrt{2}$ for distributions on the simplex.

Using $W_G=1-W_B-W_U$, \eqref{eq:robust-bound-with-U} implies
\begin{align}
  \|q-p^*\|_2
  &\le
  \varepsilon_{\mathrm{eff}}(1-W_B-W_U) + \nonumber\\
  &\quad\Delta_{\max}(W_B+W_U)
  \nonumber\\
  &=
  \varepsilon_{\mathrm{eff}}
  +(\Delta_{\max}-\varepsilon_{\mathrm{eff}})(W_B+W_U)
  \nonumber\\
  &\le
  \varepsilon_{\mathrm{eff}}
  +(\Delta_{\max}-\varepsilon_{\mathrm{eff}})W_U
  + \nonumber\\
  &(\Delta_{\max}-\varepsilon_{\mathrm{eff}})
  \frac{1-\rho_{\mathrm{eff}}}{\rho_{\mathrm{eff}}}
  e^{-\beta\gamma_{\mathrm{eff}}}.
  \label{eq:robust-bound-main-with-U}
\end{align}
Defining the residual term
\[
  C_U := (\Delta_{\max}-\varepsilon_{\mathrm{eff}})W_U
  \;\;\;(\le \Delta_{\max}-\varepsilon_{\mathrm{eff}}),
\]
we can rewrite \eqref{eq:robust-bound-main-with-U} in the same final form as
\[
  \|q-p^*\|_2
  \le
  \varepsilon_{\mathrm{eff}}
  + C_U
  +(\Delta_{\max}-\varepsilon_{\mathrm{eff}})
  \frac{1-\rho_{\mathrm{eff}}}{\rho_{\mathrm{eff}}}
  e^{-\beta\gamma_{\mathrm{eff}}}.
\]
\end{proof}

% ------------------------------------------------------------------
\subsection{Effect of the Margin Term as a Bounded Perturbation}

We now return to the full weighting scheme, which includes a margin-based
confidence term $C_r\in[0,1]$:
\begin{equation}
  s_r \;=\; \lambda C_r - \beta D_r,
  \quad
  w_r \;\propto\; \exp(s_r).
\end{equation}
Since $C_r\in[0,1]$, the margin term perturbs each log-weight by at most
$\lambda$:
\begin{align}
  -\beta D_r
  \;&\le\;
  s_r
  \;\le\;
  -\beta D_r + \lambda
  \Rightarrow \nonumber \\
  e^{-\beta D_r}
  \;&\le\;
  e^{s_r}
  \;\le\;
  e^{\lambda} e^{-\beta D_r}.
  \label{eq:bounded-margin-perturbation}
\end{align}

\begin{corollary}[Robustness with margin-based confidence]
\label{cor:margin-robust}
Under the high-probability event of Lemma~\ref{lemma:geom-sep}, consider
the full weighting scheme
\begin{align}
  w_r \propto \exp\big(\lambda C_r - \beta D_r\big),
  \qquad
  C_r \in [0,1], \nonumber\\
  D_r = \tfrac{1}{2}\|p_r - \bar p\|_2^2. \qquad \quad
\end{align}
Let $U_{\mathrm{eff}} := [K]\setminus(G_{\mathrm{eff}}\cup B_{\mathrm{eff}})$ and
\[
  W_U := \sum_{u\in U_{\mathrm{eff}}} w_u.
\]
Let $\Delta_{\max}:=\max_{1\le r\le K}\|p_r-p^*\|_2$ (for distributions on the simplex, $\Delta_{\max}\le\sqrt{2}$).
Then, for any $\beta>0$,
\begin{align}
  \|q - &p^*\|_2
  \;\le\;
  \varepsilon_{\mathrm{eff}}
  +
  (\Delta_{\max}-\varepsilon_{\mathrm{eff}}) W_U
  + \nonumber\\
  &(\Delta_{\max}-\varepsilon_{\mathrm{eff}})
  \frac{1-\rho_{\mathrm{eff}}}{\rho_{\mathrm{eff}}}
  \exp\!\big(-\beta\gamma_{\mathrm{eff}} + \lambda\big).
  \label{eq:margin-robust-bound-with-U}
\end{align}
In particular, as long as $\beta\gamma_{\mathrm{eff}} > \lambda$, the influence of
$B_{\mathrm{eff}}$ is exponentially suppressed (up to constant factors).
\end{corollary}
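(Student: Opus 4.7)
The plan is to reduce the margin-augmented weighting scheme to the pure distance-based setting of Theorem~\ref{thm:robust-agg} by treating $\lambda C_r$ as a uniformly bounded multiplicative perturbation. Because $C_r \in [0,1]$, inequality~\eqref{eq:bounded-margin-perturbation} gives $e^{-\beta D_r} \le e^{s_r} \le e^{\lambda} e^{-\beta D_r}$, so every log-weight lies within an additive band of width $\lambda$ around its distance-only counterpart. I expect the entire corollary to follow from propagating this two-sided envelope through the aggregate-weight estimate $W_B \le B/(A+B)$ used in the proof of Theorem~\ref{thm:robust-agg}, with the separation constant $\gamma_{\mathrm{eff}}$ (established in Step~1 of that proof) carrying over unchanged since it depends only on the geometric quantities $\varepsilon_{\mathrm{eff}}, \Delta_{\mathrm{eff}}, \eta$.

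Concretely, I first invoke Lemma~\ref{lemma:geom-sep} (which I may assume holds on its high-probability event) and keep its partition $G_{\mathrm{eff}}, B_{\mathrm{eff}}, U_{\mathrm{eff}}$ and the gap $D_b \ge D_g + \gamma_{\mathrm{eff}}$ for $g \in G_{\mathrm{eff}}$, $b \in B_{\mathrm{eff}}$. Then for the new unnormalized masses
\[
A' := \sum_{g \in G_{\mathrm{eff}}} e^{\lambda C_g - \beta D_g}, \qquad B' := \sum_{b \in B_{\mathrm{eff}}} e^{\lambda C_b - \beta D_b},
\]
I apply the envelope~\eqref{eq:bounded-margin-perturbation} asymmetrically: upper-bound $B'$ by $e^{\lambda}\sum_{b} e^{-\beta D_b}$ (using $C_b \le 1$) and lower-bound $A'$ by $\sum_{g} e^{-\beta D_g}$ (using $C_g \ge 0$). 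The pure-distance estimate from Step~2 of Theorem~\ref{thm:robust-agg} gives $\sum_{b} e^{-\beta D_b} \le (|B_{\mathrm{eff}}|/|G_{\mathrm{eff}}|)\, e^{-\beta \gamma_{\mathrm{eff}}} \sum_{g} e^{-\beta D_g}$, which combined with the cardinality bound $|B_{\mathrm{eff}}|/|G_{\mathrm{eff}}| \le (1-\rho_{\mathrm{eff}})/\rho_{\mathrm{eff}}$ yields
\[
\frac{B'}{A'} \le \frac{1-\rho_{\mathrm{eff}}}{\rho_{\mathrm{eff}}}\, \exp\!\big(-\beta \gamma_{\mathrm{eff}} + \lambda\big).
\]

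Since $W_B = B'/(A'+B'+C')$ where $C'$ collects the $U_{\mathrm{eff}}$ contribution, I use $W_B \le B'/(A'+B') \le (B'/A')/(1+B'/A') \le B'/A'$ to transfer the ratio bound directly to $W_B$. The norm bound is then finished exactly as in \eqref{eq:robust-bound-with-U}--\eqref{eq:robust-bound-main-with-U}: writing $W_G = 1 - W_B - W_U$ and using the triangle inequality with $\|p_g - p^*\|_2 \le \varepsilon_{\mathrm{eff}}$ on $G_{\mathrm{eff}}$ and $\|p_r - p^*\|_2 \le \Delta_{\max}$ otherwise gives $\|q - p^*\|_2 \le \varepsilon_{\mathrm{eff}} + (\Delta_{\max}-\varepsilon_{\mathrm{eff}})(W_B + W_U)$, into which I substitute the suppression bound on $W_B$ to obtain~\eqref{eq:margin-robust-bound-with-U}.

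The main obstacle here is not technical depth but one of bookkeeping: making sure the asymmetric use of the envelope (upper on $B'$, lower on $A'$) is sharp enough so that the only new factor is $e^{\lambda}$ rather than $e^{2\lambda}$, and explicitly documenting that $\gamma_{\mathrm{eff}}$ remains the same constant as in Theorem~\ref{thm:robust-agg} because it was derived purely from geometric separation and is untouched by the confidence term. The condition $\beta \gamma_{\mathrm{eff}} > \lambda$ for effective suppression then drops out directly from the exponent in the final bound.
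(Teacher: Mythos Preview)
Your proposal is correct and follows essentially the same route as the paper. The only cosmetic difference is that the paper bounds the pairwise log-weight difference $s_b - s_g \le \lambda - \beta\gamma_{\mathrm{eff}}$ directly before summing, whereas you first sandwich $A'$ and $B'$ by their distance-only counterparts via the envelope~\eqref{eq:bounded-margin-perturbation} and then invoke the pure-distance ratio from Theorem~\ref{thm:robust-agg}; both paths produce the identical factor $\exp(-\beta\gamma_{\mathrm{eff}}+\lambda)$ and the remainder of the argument (bounding $W_B$ by $B'/A'$ and the triangle-inequality decomposition) is verbatim the same.
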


\begin{proof}
Let $w_r$ be the full weights with $s_r=\lambda C_r-\beta D_r$.
Define the (unnormalized) sums
\begin{align}
  A_s := \sum_{g\in G_{\mathrm{eff}}} e^{s_g},
  \quad
  B_s := \sum_{b\in B_{\mathrm{eff}}} e^{s_b},
  \quad \nonumber\\
  C_s := \sum_{u\in U_{\mathrm{eff}}} e^{s_u},
  \quad
  Z_s := A_s + B_s + C_s.
\end{align}
Then $w_r=e^{s_r}/Z_s$ and $W_B:=\sum_{b\in B_{\mathrm{eff}}}w_b = B_s/Z_s$.
For any $b\in B_{\mathrm{eff}}$ and $g\in G_{\mathrm{eff}}$, using $C_b\le 1$, $C_g\ge 0$ and \eqref{eq:gap-D},
\[
  s_b - s_g
  =
  \lambda(C_b-C_g) - \beta(D_b-D_g)
  \le
  \lambda - \beta\gamma_{\mathrm{eff}},
\]
hence
\[
  e^{s_b} \le \exp\!\big(-\beta\gamma_{\mathrm{eff}}+\lambda\big)\,e^{s_g}.
\]
Taking $\min$ over $g$ and summing over $b$ yields
\begin{align}
  B_s
  &\le
  |B_{\mathrm{eff}}|\,\exp\!\big(-\beta\gamma_{\mathrm{eff}}+\lambda\big)\,
  \min_{g\in G_{\mathrm{eff}}} e^{s_g} \nonumber\\
  &\le
  \frac{|B_{\mathrm{eff}}|}{|G_{\mathrm{eff}}|}
  \exp\!\big(-\beta\gamma_{\mathrm{eff}}+\lambda\big)\,
  \sum_{g\in G_{\mathrm{eff}}} e^{s_g} \nonumber\\
  &=
  \frac{|B_{\mathrm{eff}}|}{|G_{\mathrm{eff}}|}
  \exp\!\big(-\beta\gamma_{\mathrm{eff}}+\lambda\big)\,A_s.
\end{align}
Therefore, with $R_s:=B_s/A_s$,
\begin{align}
  R_s &\le \frac{|B_{\mathrm{eff}}|}{|G_{\mathrm{eff}}|}\exp\!\big(-\beta\gamma_{\mathrm{eff}}+\lambda\big)\nonumber\\
  &\le \frac{1-\rho_{\mathrm{eff}}}{\rho_{\mathrm{eff}}}\exp\!\big(-\beta\gamma_{\mathrm{eff}}+\lambda\big),
\end{align}
where we used $|G_{\mathrm{eff}}|\ge \rho_{\mathrm{eff}}K$ and
$|B_{\mathrm{eff}}|\le K-|G_{\mathrm{eff}}|\le (1-\rho_{\mathrm{eff}})K$.
Moreover, since $Z_s\ge A_s+B_s$,
\begin{align}
  W_B&=\frac{B_s}{Z_s}\le \frac{B_s}{A_s+B_s}
  = \frac{R_s}{1+R_s}\le R_s \nonumber\\
  &\le \frac{1-\rho_{\mathrm{eff}}}{\rho_{\mathrm{eff}}}\exp\!\big(-\beta\gamma_{\mathrm{eff}}+\lambda\big).
\end{align}

Finally, define
\[
  W_G:=\sum_{g\in G_{\mathrm{eff}}}w_g,\qquad
  W_U:=\sum_{u\in U_{\mathrm{eff}}}w_u,
\]
so $W_G+W_B+W_U=1$. By the same triangle-inequality argument as in the
robust-aggregation proof,
\begin{align}
  \|q-p^*\|_2
  &\le
  \varepsilon_{\mathrm{eff}} W_G + \Delta_{\max}(W_B+W_U)\nonumber\\
  &=
  \varepsilon_{\mathrm{eff}} + (\Delta_{\max}-\varepsilon_{\mathrm{eff}})(W_B+W_U).
\end{align}
Plugging in the bound on $W_B$ gives \eqref{eq:margin-robust-bound-with-U}.
\end{proof}

\section{Human Annotation and Ethical Considerations}
\label{app:human_annot_ethics}

This appendix reports the human-in-the-loop procedures used in our study. All human involvement in this work concerns expert \emph{evaluation} and \emph{revision} of model-generated drafts, and does not involve any new patient data collection.

\subsection{Instructions Given to Participants}
\label{app:human_instructions}

\subsubsection{Quality Assessment of Model-Generated Drafts}
\label{app:task1_quality_assess}

We ask dermatology experts to review a \textbf{900-case core set} and rate the quality of Gemini-generated initial drafts.

\noindent\textbf{Instruction.}
Please review the provided dermatology image and the corresponding AI-generated report.
Using a 0--5 Likert scale, rate the following two dimensions:

\begin{itemize}
  \item \textbf{Morphological Fidelity:} Are the described clinical features (e.g., color, border, lesion type) fully consistent with the visual evidence in the image?
  \item \textbf{Reasoning Validity:} Is the chain-of-thought reasoning logically sound and properly grounded in visual evidence from the image?
\end{itemize}

\noindent\textbf{Score definition.}
5 indicates fully accurate and logically rigorous; 0 indicates severe errors such as major misdiagnosis or hallucinated features.

\subsubsection{Gold Standard Manual Revision for the Core Set}
\label{app:task2_gold_revision}

Experts revise model-generated drafts using a dedicated web interface.
\begin{figure}[h]
    \centering
    \includegraphics[width=0.75\linewidth]{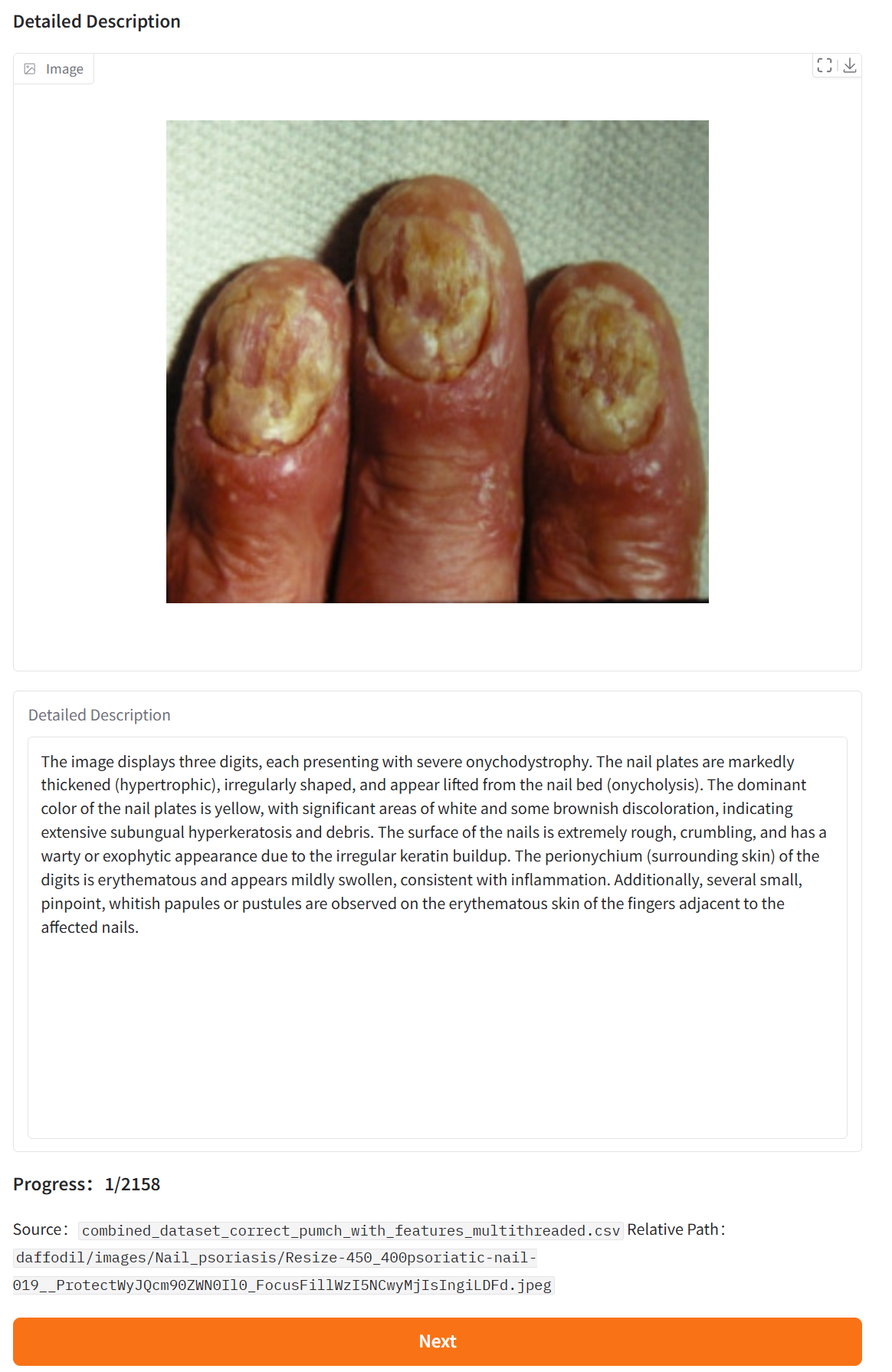}
    \caption{An example of web interface used to get .}
    \label{fig:placeholder}
\end{figure}

\noindent\textbf{Instruction.}
The text box contains an AI-generated draft. Please perform the following:

\begin{enumerate}
  \item \textbf{Line-by-line revision:} Compare against the original image and manually correct terminology errors, missing key features, or reasoning gaps.
  \item \textbf{Bottleneck verification:} Ensure the revised \texttt{<morph>} JSON strictly follows the Derm7pt/SkinCon schema.
  \item \textbf{Final approval:} The revised content should represent the clinical \emph{gold-standard} answer for this case.
\end{enumerate}

\subsubsection{Human Sanity Check for LLM-as-a-Judge}
\label{app:task3_judge_sanity}

For 20 randomly sampled cases, experts evaluate whether the Judge (Gemini-2.5-Pro) provides reasonable scores and feedback.

\noindent\textbf{Instruction.}
Please review the \textbf{model output}, \textbf{reference answer}, and the \textbf{AI Judge}'s score and feedback.

\begin{itemize}
  \item \textbf{Task:} Rate (0--5) whether the AI Judge's evaluation is reasonable.
  \item \textbf{Reasonableness criteria:} The score should be objective, and the feedback should point out key medical differences.
  \item \textbf{Acceptance threshold:} Scores $\ge$ 3 are considered acceptable.
\end{itemize}

\subsubsection{Human Performance Baseline}
\label{app:task4_human_baseline}

To obtain the ``Human Performance'' results, we randomly sample \textbf{100 cases per task} and ask experts to complete the benchmark \textbf{without any AI assistance}.

\noindent\textbf{Instruction.}
Please independently complete DermoBench evaluation tasks as in clinical practice, \textbf{without referencing any AI hints}:

\begin{enumerate}
  \item \textbf{MCQA tasks:} Select the most likely diagnosis from 4-choice or 25-choice options.
  \item \textbf{Hierarchical diagnosis:} Perform step-wise selection along the diagnosis tree path (Superclass $\rightarrow$ Subclass).
  \item \textbf{Open-ended description:} Write a detailed morphological examination report without viewing any reference answer.
\end{enumerate}

% -------------------------
% H.3 Recruitment/payment/consent (D2-D3)
% -------------------------
\subsection{Recruitment, Compensation, and Consent}
\label{app:recruit_payment_consent}

\textbf{Recruitment and qualifications.}
We invited and engaged two dermatology clinicians via targeted online outreach. Both participants have relevant clinical experience in dermatology.

\textbf{Compensation.}
Participants were compensated at approximately 100 RMB per hour, following local norms for medical professional consulting, which we consider adequate to reflect the value of expert labor.

\textbf{Annotator consent.}
All participating clinicians signed an agreement acknowledging that their revision, annotation, and rating outputs would be used for open research purposes in developing and evaluating our dermatology MLLMs and benchmark.

% -------------------------
% H.4 Data consent/release policy (D3) and IRB (D4)
% -------------------------
\subsection{Data Consent, Release Policy, and Ethics Review}
\label{app:data_ethics}

\textbf{Open datasets and intended use.}
This work uses only publicly released, de-identified dermatology datasets. We follow the licenses and intended research use specified by the original dataset providers.
Experts may view the original images during evaluation and revision; however, \textbf{we do not redistribute or release the original images}.
We release only derived artifacts (e.g., prompts, annotations, benchmark splits, and evaluation outputs), and users should obtain images from the original sources.

\textbf{Ethics review.}
We do not collect any new patient data and only use de-identified, publicly available datasets; the expert annotation activities are minimal-risk. Therefore, ethics board approval was not required under our institutional policy.

\section{Visualization}
The following pictures provide additional visualizations and qualitative case studies to better understand the data characteristics of DermoBench.

\begin{figure*}[ht]
  \centering
  
  \includegraphics[width=\linewidth]{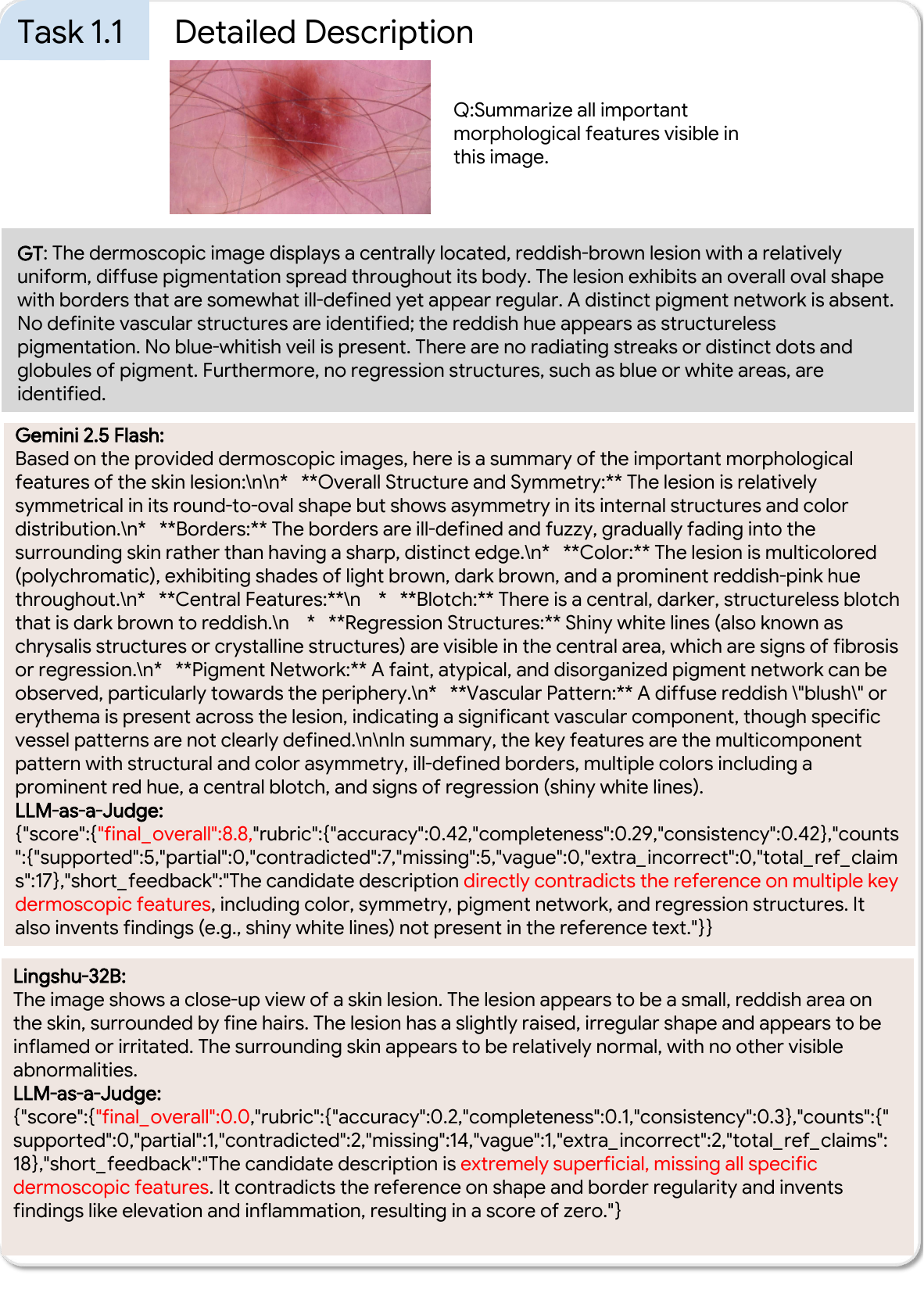}
  % 总标题
  \caption{Case study.}
\end{figure*}
\begin{figure*}[ht]
  \centering
  
  \includegraphics[width=\linewidth]{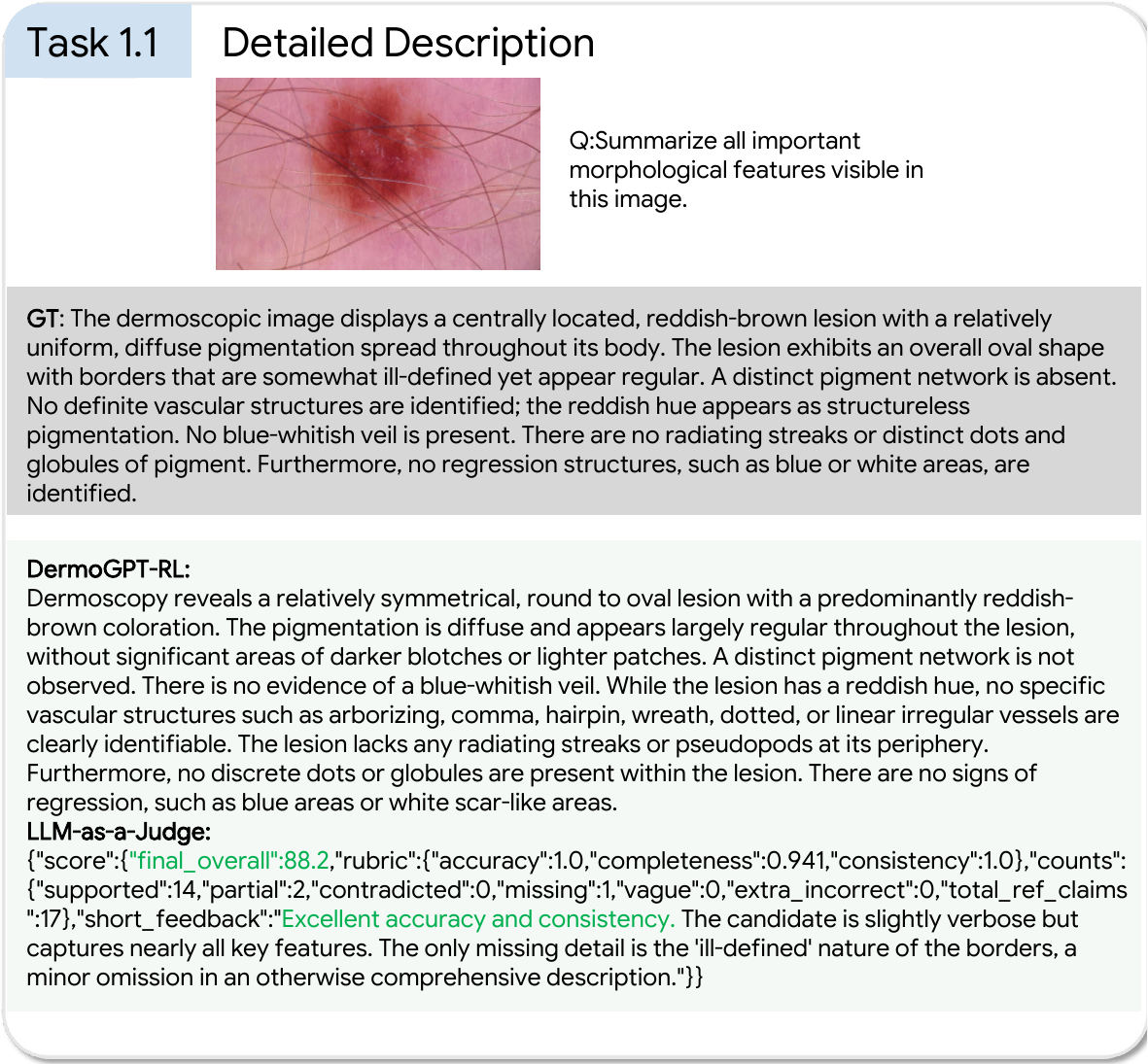}
  % 总标题
  \caption{Case study.}
\end{figure*}
\begin{figure*}[ht]
  \centering
  
  \includegraphics[width=\linewidth]{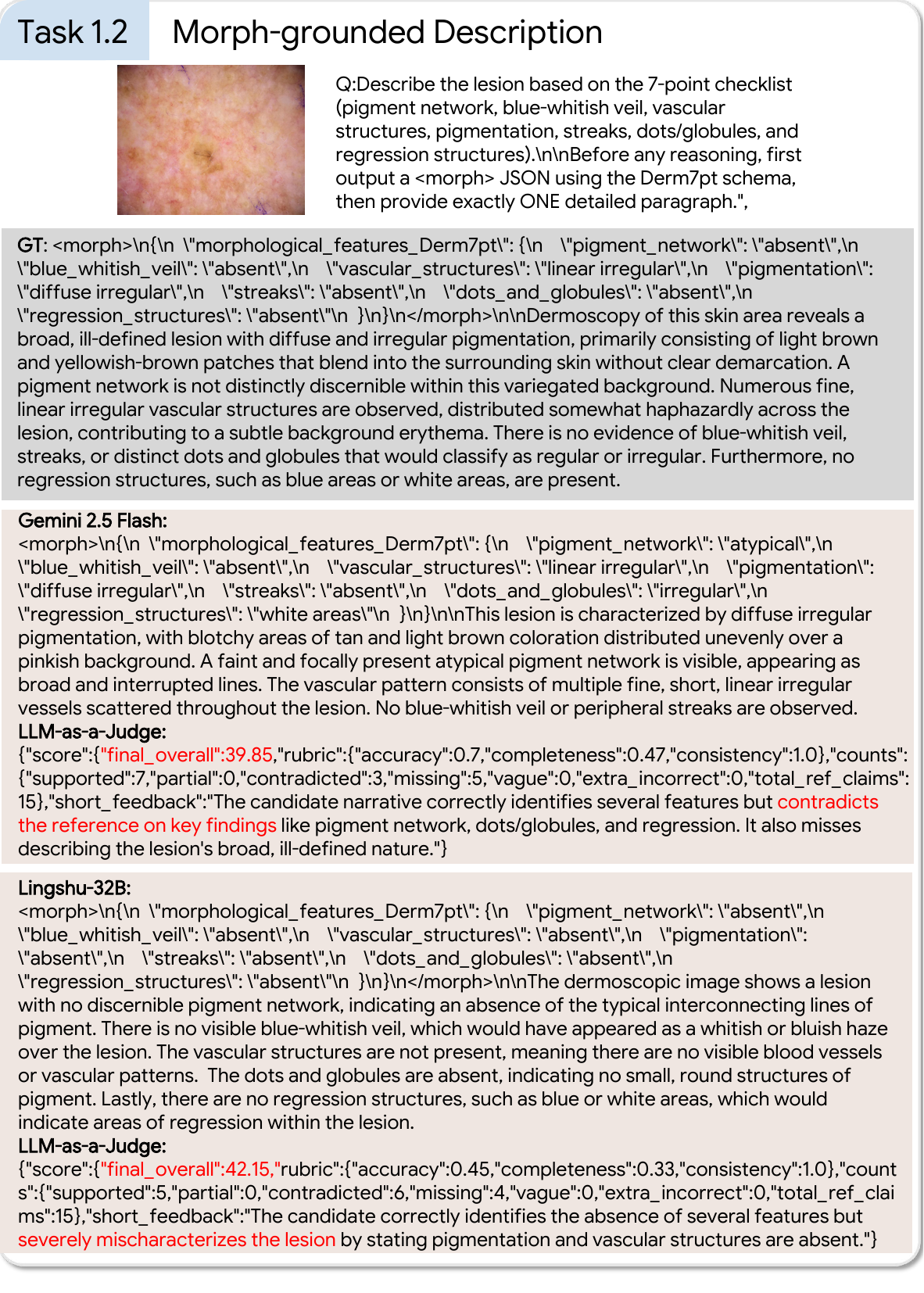}
  % 总标题
  \caption{Case study.}
\end{figure*}
\begin{figure*}[ht]
  \centering
  
  \includegraphics[width=\linewidth]{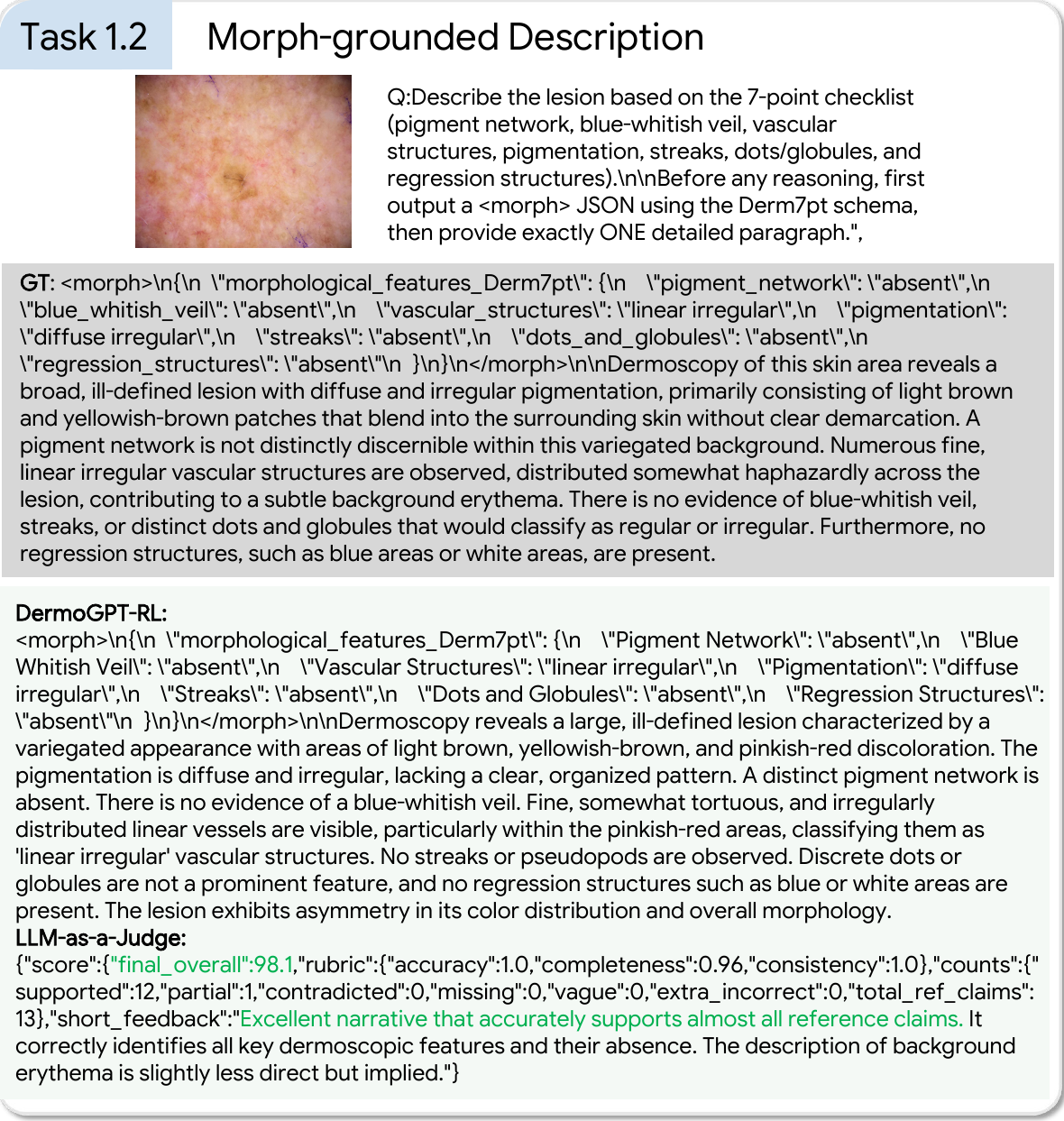}
  % 总标题
  \caption{Case study.}
\end{figure*}
\begin{figure*}[ht]
  \centering
  
  \includegraphics[width=\linewidth]{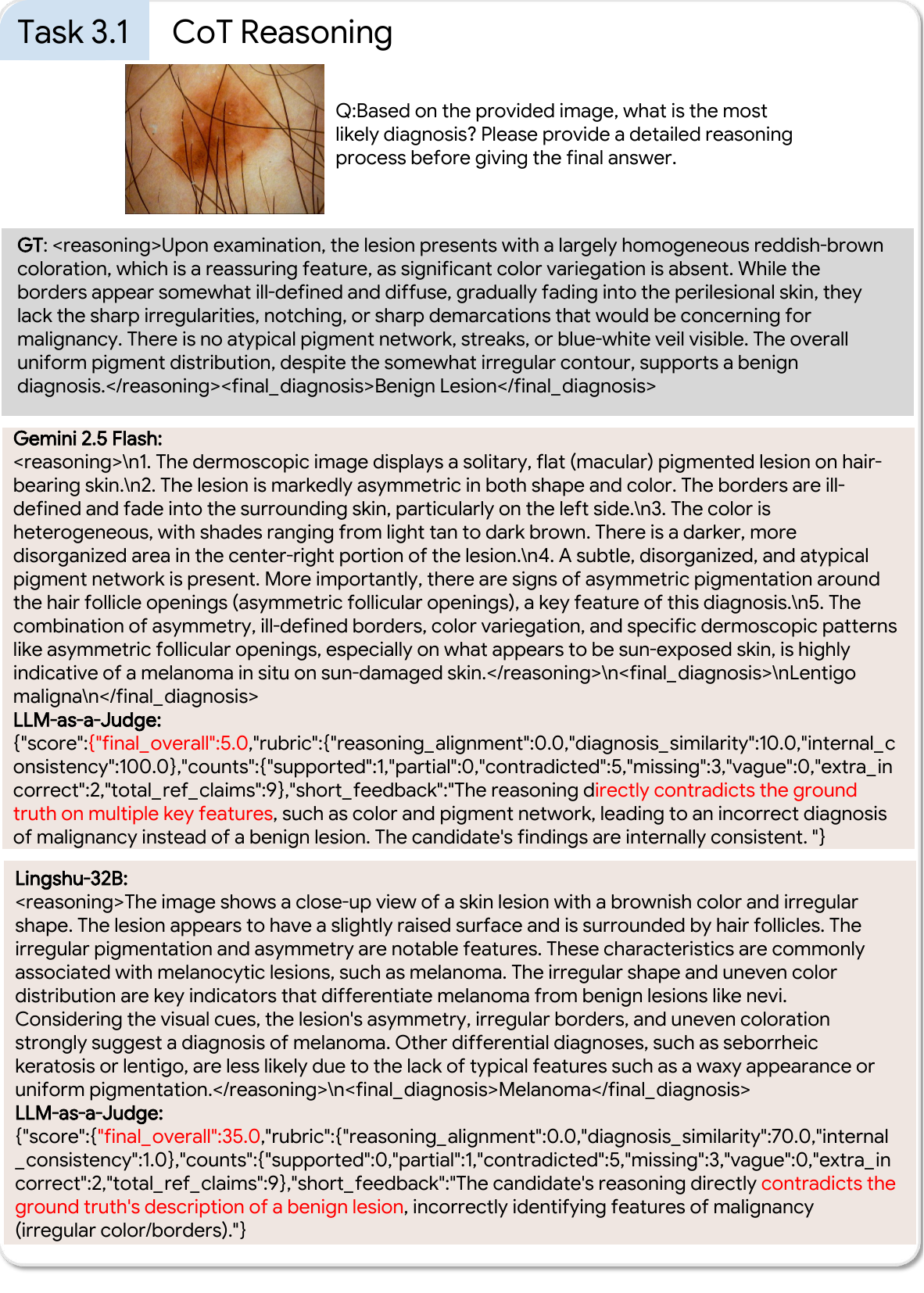}
  % 总标题
  \caption{Case study.}
\end{figure*}
\begin{figure*}[ht]
  \centering
  
  \includegraphics[width=\linewidth]{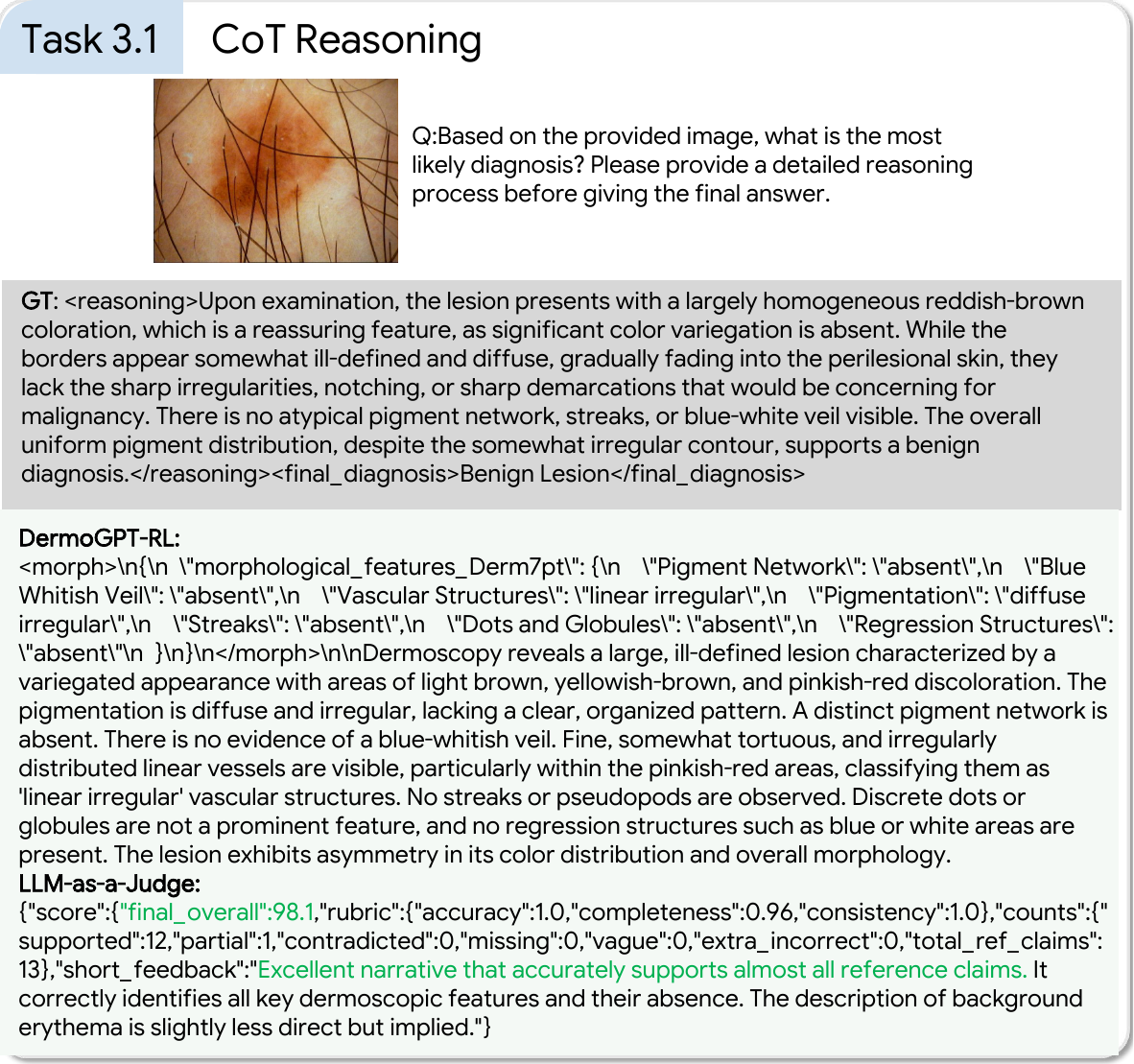}
  % 总标题
  \caption{Case study.}
\end{figure*}
\begin{figure*}[ht]
  \centering
  
  \includegraphics[width=\linewidth]{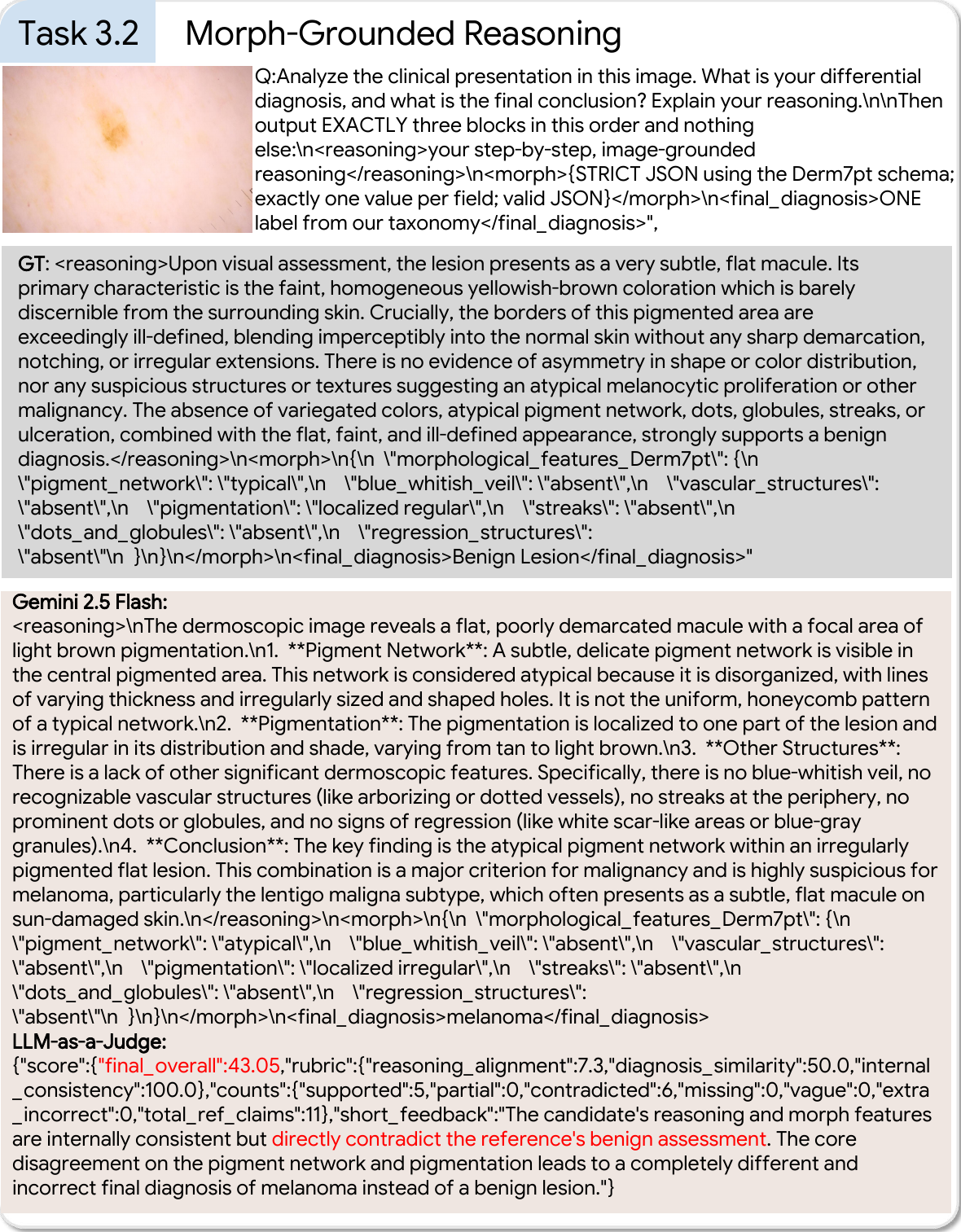}
  % 总标题
  \caption{Case study.}
\end{figure*}
\begin{figure*}[ht]
  \centering
  
  \includegraphics[width=\linewidth]{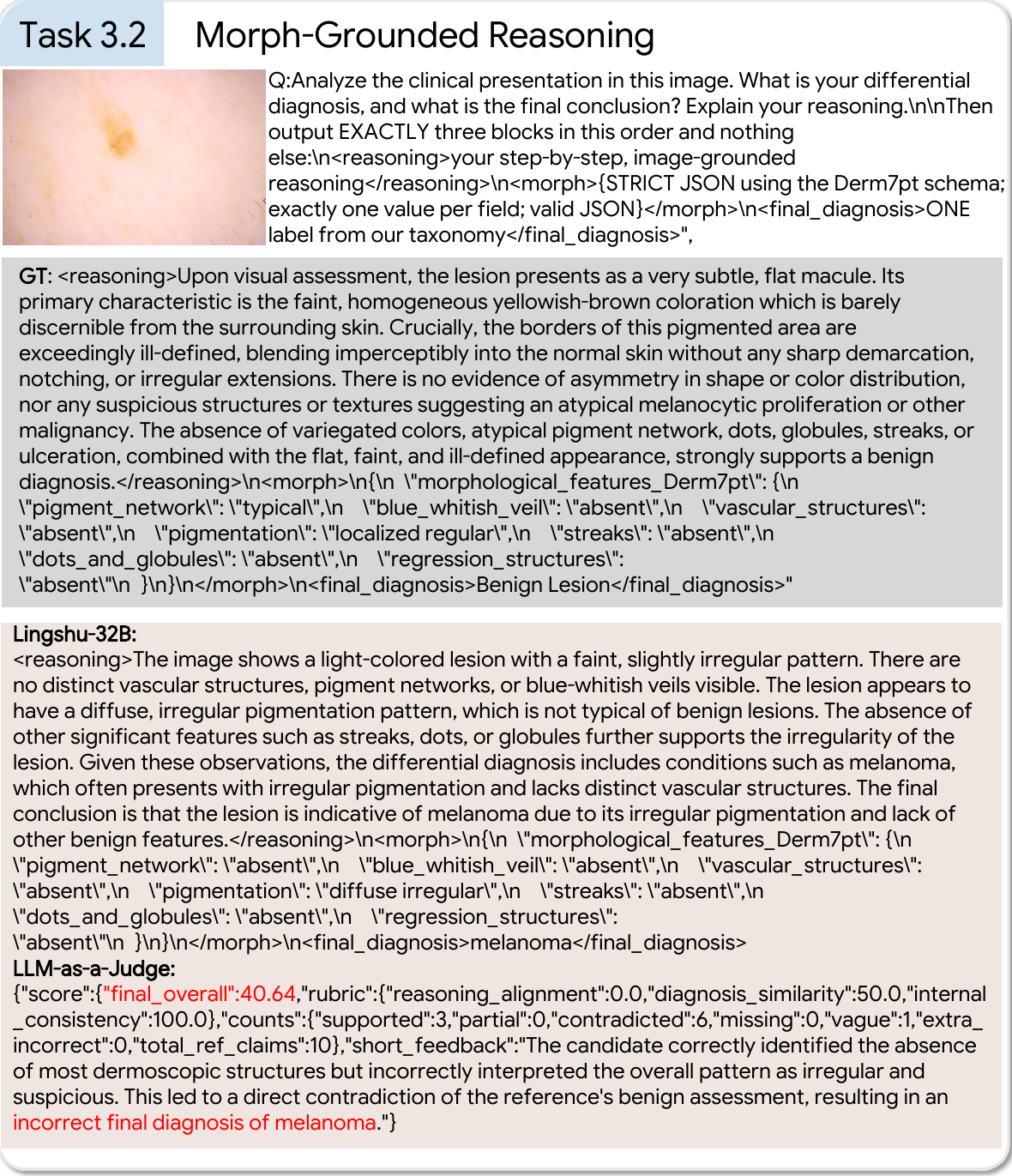}
  % 总标题
  \caption{Case study.}
\end{figure*}
\begin{figure*}[ht]
  \centering
  
  \includegraphics[width=\linewidth]{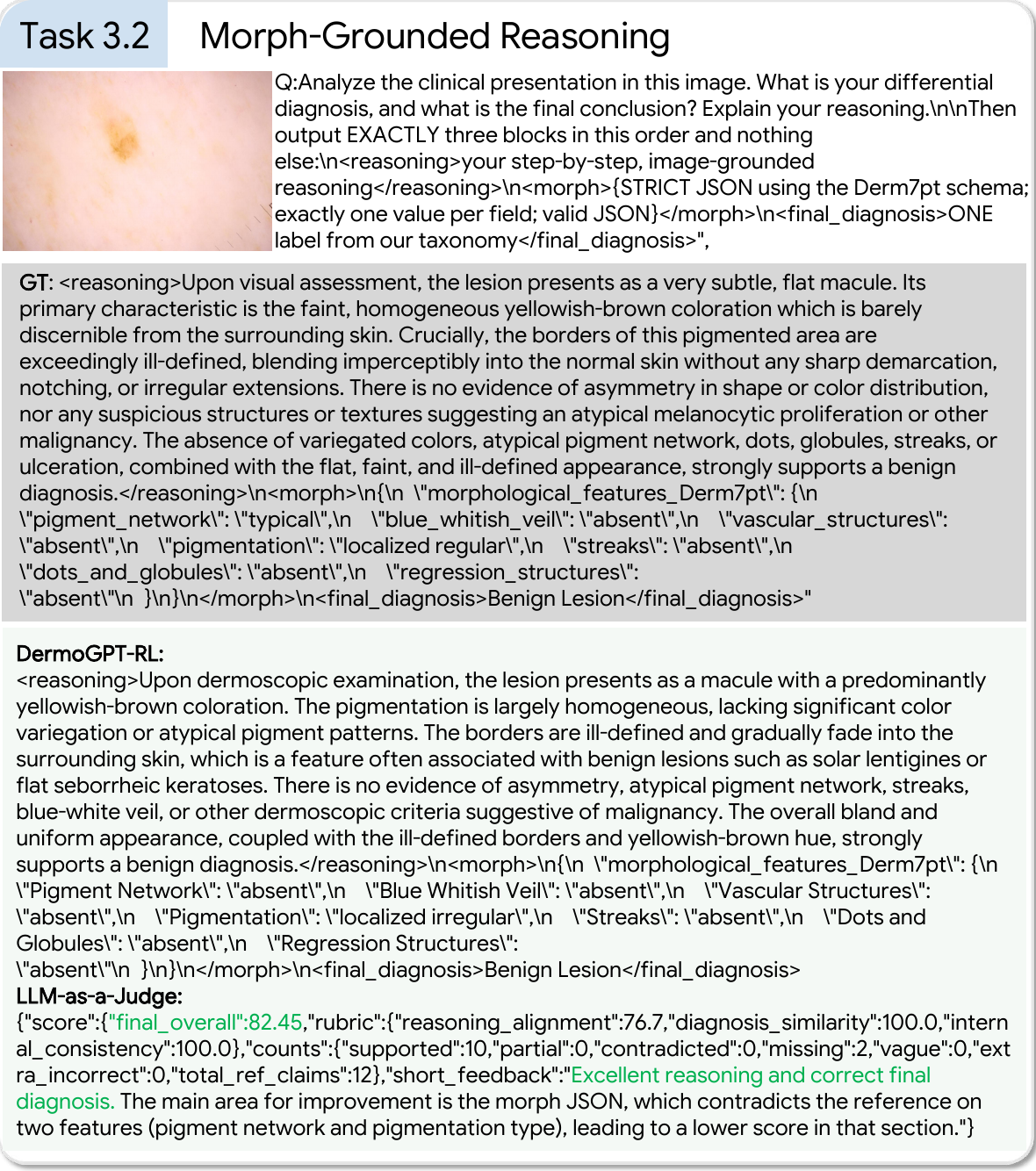}
  % 总标题
  \caption{Case study.}
\end{figure*}
% \begin{remark}[Role of the margin term]
% The analysis shows that the \emph{consistency} term $D_r$ is the main
% driver of robustness: it guarantees that rollouts far from the ensemble
% cluster receive exponentially small weights. The margin-based confidence
% $C_r$ acts as a bounded perturbation of the log-weights, serving as a
% local selection prior within the good cluster. It prefers rollouts that
% are both consistent with the ensemble and locally confident on the
% current token, without breaking the robustness guarantee provided by the
% squared $\ell_2$ distance, as long as $\beta\gamma_{\mathrm{eff}}$ is
% larger than $\lambda$.
% \end{remark}

\end{document}